\theoremstyle{plain}
\newtheorem{theorem}{Theorem}[section]
\newtheorem{lemma}[theorem]{Lemma}
\theoremstyle{definition}
\theoremstyle{remark}
\declaretheorem[name=Proposition,numberwithin=section,style=examplestyle]{prop}
\begin{document}

%

%

\twocolumn[

\aistatstitle{Robust Linear Regression: Gradient-descent, Early-stopping, and Beyond}

\aistatsauthor{Meyer Scetbon \And Elvis Dohmatob}

\aistatsaddress{Facebook AI Research}]

\begin{abstract}
In this work we study the robustness to adversarial attacks, of early-stopping strategies on gradient-descent (GD) methods for linear regression. More precisely, we show that early-stopped GD is optimally robust (up to an absolute constant) against Euclidean-norm adversarial attacks. However, we show that this strategy can be arbitrarily sub-optimal in the case of general Mahalanobis attacks. This observation is compatible with recent findings in the case of classification~\cite{Vardi2022GradientMP} that show that GD provably converges to non-robust models. To alleviate this issue, we propose to apply instead a GD scheme on a transformation of the data adapted to the attack. This data transformation amounts to apply feature-depending learning rates and we show that this modified GD is able to handle any Mahalanobis attack, as well as more general attacks under some conditions. Unfortunately, choosing such adapted transformations can be hard for general attacks. To the rescue, we design a simple and tractable estimator whose adversarial risk is optimal up to within a multiplicative constant of 1.1124 in the population regime, and works for any norm.
\end{abstract}

\setcounter{tocdepth}{0} 

\section{Introduction}
Machine learning models are highly sensitive to small perturbations known as \emph{adversarial examples}~\citep{szegedy2013intriguing}, which are often imperceptible by humans.
While various strategies such as adversarial training~\citep{madry2017towards} can mitigate this vulnerability empirically, 
the situation remains highly problematic for many safety-critical applications like autonomous vehicles or health, and motivates a better theoretical understanding of what mechanisms may be causing this.

From a theoretical perspective, the case of classification is rather well-understood. Indeed, the hardness of classification under test-time adversarial attacks has been crisply characterized \citep{bhagoji2019,bubeck2018}. In the special case of linear classification, explicit lower-bounds have been obtained \citep{schmidt2018,bhattacharjee2020sample}. However, the case of regression is relatively understudied. Recently, \cite{Xing2021} have initiated a theoretical study of linear regression under Euclidean attacks, where an adversary is allowed to attack the input data point at test time. The authors proposed a two-stage estimator and proved its consistency. The optimal estimator obtained in \citep{Xing2021} corresponds to a ridge shrinkage (i.e $\ell_2$ penalization).

In this paper, we consider linear regression under adversarial test-time attacks w.r.t arbitrary norms (not just Euclidean / $\ell_2$-norms as in \citep{Xing2021}), and analyze the robustness of gradient-descent (GD) along the entire optimization path. By doing so we observe that GD might fail to capture a robust predictor along its path especially in the case of non-Euclidean attacks. We propose a variant of the GD scheme where an adapted transformation on the data is performed before applying a GD scheme. This allows us to understand the effect on robustness of early-stopping strategies (not training till the end) for general attacks. Finally we design a generic algorithm able to produce a robust predictor against any norm-attacks.

\subsection{Summary of main contributions}
Our main contributions are summarized as follows.

-- \textbf{Case of gradient-descent (GD).}
In Proposition~\ref{prop:GDisOpt}, we show that early-stopped GD achieves near-optimal adversarial risk in case of Euclidean attacks. Early-stopping is crucial because the predictor obtained by running GD till the end can be arbitrarily sub-optimal in terms of adversarial risk (Proposition~\ref{prop:generative}). Contrasting with Proposition~\ref{prop:GDisOpt}, we show in Proposition~\ref{thm:GDisBadforMan} that early-stopped GD can be arbitrarily sub-optimal in the non-Euclidean case, e.g when the attacker's norm is a Mahalanobis norm. Thus, GD, along its entire optimization path, can fail to find robust model in general.

-- \textbf{An Adapted GD scheme (GD+).} We propose a modified version of GD, termed GD+, in which the dynamics
are forced to be non-uniform across different features and
exhibit different regimes where early-stopped GD+ can achieves near-optimal adversarial risk. More precisely, (i) we show that it achieves near-optimal adversarial risk in the case of Mahalanobis norm attacks (Proposition~\ref{prop:GDPlusIsGood}), (ii) we also prove that it is near-optimally robust under $\ell_p$-norm attacks as soon as the features are uncorrelated (Theorem~\ref{thm-lp-optimal}) and (iii) we study the robustness along the entire optimization path of GD+ in the case of general norm-attacks. In particular, we provide a sufficient condition on the model such that early-stopped GD+ achieves near-optimal adversarial risk under general norm-attacks in Theorem~\ref{thm:GDrocksdeterministic} and we show that when this condition is not satisfied, GD+ can be arbitrarily sub-optimal (Proposition~\ref{thm:bangbang}).




-- \textbf{A two-stage estimator for the general case.}
Finally, we propose a simple two-stage algorithm (Algorithm \ref{alg:twostage}) which works for arbitrary norm-attacks, and achieves optimal adversarial risk up to within a multiplicative constant factor in the population regime. Consistency and statistical guarantees for the proposed estimator are also provided.
\subsection{Related Work}
The theoretical understanding of adversarial examples is now an active area of research. Below is a list of works which are most relevant to our current letter.


\cite{tsipras18} considers a specific data distribution where good accuracy implies poor robustness.
\citep{goldstein,saeed2018, gilmerspheres18,dohmatob19} show that for high-dimensional data distributions which have concentration property (e.g., multivariate Gaussians, distributions satisfying log-Sobolev inequalities, etc.), an imperfect classifier will admit adversarial examples. \cite{dobriban2020provable} studies tradeoffs in Gaussian mixture classification problems, highlighting the impact of class imbalance. On the other hand, \cite{closerlook2020} observed empirically that natural images are well-separated, and so locally-lipschitz classifies shouldn't suffer any kind of test error vs robustness tradeoff.

In the context of linear classification\citep{schmidt2018,bubeck2018,khim2018adversarial,yin2019rademacher,bhattacharjee2020sample,min2021curious,Karbasi2021}, established results show a clear gap between learning in ordinary and adversarial settings.
\cite{li2020} studies the dynamics of linear classification on separable data, with exponential-tail losses. The authors show that GD converges to separator of the dataset, which is minimal w.r.t to a norm which is an interpolation between the the $\ell_2$ norm (reminiscent of normal learning), and $\ell_q$-norm, where $q$ is the harmonic conjugate of the attacker's norm. \cite{Vardi2022GradientMP} showed that on two-layer neural networks, gradient-descent with exponential-tail loss function converges to weights which are vulnerable to adversarial examples.

\cite{Javanmard2020PreciseTI} study tradeoffs between ordinary and adversarial risk in linear regression, and computed exact Pareto optimal curves. \cite{javanmard2021adversarial} also revisit this tradeoff for latent models and show that this tradeoff is mitigated when the data enjoys a low-dimensional structure. \cite{dohmatob2021fundamental,hassani2022curse} study the tradeoffs between interpolation, normal risk, and adversarial risk, for finite-width over-parameterized networks with linear target functions. \cite{javanmard2022precise} investigate the effect of adversarial training on the standard and adversarial risks and derive a precise characterization of them for a class of minimax adversarially trained models.  

The work most related to ours is \citep{Xing2021} which studied minimax estimation of linear models under adversarial attacks in Euclidean norm. They showed that the optimal robust linear model is a ridge estimator whose regularization parameter is a function of the population covariance matrix $\Sigma$ and the generative linear model $w_0$. Since neither $w_0$ nor $\Sigma$ is known in practice, the authors proposed a two-stage estimator in which the first stage is consistent estimators for $w_0$ and $\Sigma$ and the second stage is solving the ridge problem. In \citep{Xing2021}, the authors cover only the case of Euclidean attacks while here, we extend the study of the adversarial risk in linear regression under general norm-attacks. More precisely, here we are interested in understanding the robustness of GD, for general attacks and along the entire optimization path. As a separate contribution, we also propose a new consistent two-stage estimator based on a "dualization" of the adversarial problem that can be applied for general attacks.

\subsection{Outline of Manuscript}
In Section \ref{sec:preliminaries}, we present the problem setup, main definitions, and some preliminary computations. The adversarial risk of (early-stopped) GD in the infinite-sample regime is analyzed in Section \ref{sec:gd}; cases of optimality and sub-optimality of this scheme are characterized. In Section \ref{sec:gdplus}, GD+ (an improved version of GD) is proposed, and its adversarial risk in the population regime is studied. In Section \ref{sec:algorithmic}, we consider the finite samples regime and we propose a simple two-stage estimator, which works for all attacker norms. Its adversarial risk is shown to be optimal up to within a multiplicative factor, and additive statistical estimation error due to finite samples.
\section{Preliminaries}
\label{sec:preliminaries}
\paragraph{Notations.}
Let us introduce some basic notations. Additional technical notations are provided in the appendix.

$[d]$ denotes the set of integers from $1$ to $d$ inclusive. The maximum (resp. minimum) of two real numbers $a$ and $b$ will be denoted $a \lor b$ (resp. $a \land b$). The operator norm of a matrix $A$ is denoted $\|A\|_{op}$ and corresponds to the positive square-root of the largest eigenvalue of $AA^\top$. Given a positive-definite (p.d.) matrix $S \in \mathbb R^{d \times d}$ and a vector $w \in \mathbb R^d$, the Mahalanobis norm of $w$ induced by $S$ is defined by $\|w\|_S := \sqrt{w^\top S w}$.  We denote respectively $M_d(\mathbb R)$ and $\mathcal S_d^{++}(\mathbb R)$ the set of $d\times d$ matrices and positive-definite matrices. Given $p \in [1,\infty]$, its harmonic conjugate, denoted $q(p)$ is the unique $q \in [1,\infty]$ such that $1/p+1/q=1$. 
For a given norm $\Vert\cdot\Vert$, we denote $\|\cdot\|_\star$ its dual norm, defined by
\begin{eqnarray}
 \|w\|_\star := \sup\{x^\top w \mid x  \in \mathbb R^d,\,\|x\| \le 1\},
\end{eqnarray}
Note that if the norm $\|\cdot\|$ is an $\ell_p$-norm for some $p \in [1,\infty]$, then the dual norm is the $\ell_q$-norm, where $q$ is the harmonic conjugate of $p$. For example, the dual of the euclidean norm (corresponding to $p=2$) is the euclidean norm itself, while the dual of the usual $\ell_\infty$-norm is the $\ell_1$-norm.

The unit-ball for a norm $\|\cdot\|$ is denoted $B_{\|\cdot\|}^d$, and defined by $B_{\|\cdot\|}^d := \{x \in \mathbb R^d \mid \|x\| \le 1\}$.
Given any $s \ge 0$, set of $s$-sparse vectors in $\mathbb R^d$ is denoted $B_0^d$ and defined by
\begin{eqnarray}
\label{eq-sparse-ball}
B_0^d(s) := \{w \in \mathbb R^d \mid \|w\|_0 \le s\}.
\end{eqnarray}

Finally, define absolute constants $c_0:=\sqrt{2/\pi}$, $\alpha := 2/(1+c_0) \approx 1.1124$ and $\beta=1.6862$.

\subsection{Problem Setup}
\label{subsec:setup}
Fix a vector $w_0 \in \mathbb R^d$, a positive definite matrix $\Sigma$ of size $d$, and consider an i.i.d. dataset $\mathcal D_n = \{(x_1,y_1),\ldots,(x_n,y_n)\}$ of size $n$, given by
\begin{align}
\label{eq:generative}
y_i &= x_i^\top w_0 + \epsilon_i,\text{ for all }i \in [n]
\end{align}
where $x_1,\ldots,x_n$ i.i.d $\sim N(0, \Sigma)$ and $\epsilon_1,\ldots,\epsilon_n$ i.i.d $\sim N(0,\sigma_\epsilon^2)$ independent of  the $x_i$'s.
Thus, the distribution of the features is a centered multivariate Gaussian distribution with covariance matrix $\Sigma$, while $w_0$ is the generative model. $\sigma_\epsilon \ge 0$ measures the size of the noise. 
These assumptions on the model will be assumed along all the formal claims of the paper. We also refer to $n$ as the  \emph{sample size}, and to $d$ as the \emph{input-dimension}.

In most of our analysis, except otherwise explicitly stated,  we will consider the case of infinite-data regime $n = \infty$ --or more generally $n \gg d$, which allows us to focus on the effects inherent to the data distribution (controlled by feature covariance matrix $\Sigma$) and the inductive bias of the norm w.r.t which the attack is measured, while side-stepping issues due to finite samples and label noise. Also
note that in this infinite-data setting, label noise provably has no influence on the learned model.


\subsection{Adversarial Robustness Risk}
Given a linear model $w \in \mathbb R^d$, an attacker is allowed to swap a clean test point $x \sim N(0,\Sigma)$ with a corrupted version $x' = x + \delta$ thereof. The perturbation $\delta=\delta(x) \in \mathbb R^d$ is constrained to be small: this is enforced by demanding that $\|\delta\| \le r$, where $\|\cdot\|$ is a specified norm and $r \ge 0$ is the attack budget. One way to measure the performance of a linear model $w \in \mathbb R^d$ under such attacks of size $r$, is via it so-called adversarial risk~\citep{madry2017towards,Xing2021}.
\begin{restatable}{df}{}
For any $w \in \mathbb R^d$ and $r \ge 0$, define the adversarial risk of $w$ at level $r \ge 0$ as follows
\begin{eqnarray}
\label{eq:Ewrdef}
\begin{split}
E^{\|\cdot\|}(w,w_0,r) := 
\mathbb E_{x}\left[\sup_{\|\delta\| \le r}((x+\delta)^\top w - x^\top w_0)^2\right],
\end{split}
\end{eqnarray}
where $x \sim N(0,\Sigma)$ is a random test point.
\end{restatable}
It is clear that $r \mapsto E^{\|\cdot\|}(w,w_0,r)$ is a non-decreasing function and $E^{\|\cdot\|}(w,w_0,0)$ corresponds to the  ordinary risk of $w$, namely
\begin{eqnarray}
\label{eq-ordinary-risk}
E(w,w_0) := \mathbb E_x[(x^\top w - x^\top w_0)^2] = \|w-w_0\|_\Sigma^2.
\end{eqnarray}
In classical regression setting, the aim is to find $w$ which minimizes $E(w,w_0)$. In the adversarial setting studied here, the aim is to minimize $E^{\|\cdot\|}(w,w_0,r)$ for any $r \ge 0$.

We will henceforth denote by $E^{\|\cdot\|}_{opt}(w_0,r)$ the smallest possible adversarial risk of a linear model for $\|\cdot\|$-attacks of magnitude $r$, that is
\begin{eqnarray}
\label{eq:Eopt}
E^{\|\cdot\|}_{opt}(w_0,r) := \inf_{w \in \mathbb R^d} E^{\|\cdot\|}(w,w_0,r).
\end{eqnarray}

 We start with the following well-known elementary but useful lemma which proved in the supplemental. Also see \cite{Xing2021,javanmard2022precise} for the special case of Euclidean-norm attacks.
\begin{restatable}{lm}{fench}
Recall that $c_0=\sqrt{2/\pi}$, then for any $w \in \mathbb R^d$ and $r \ge 0$, it holds that
\begin{align}
\label{eq:Ewr}
\begin{aligned}
 E^{\|\cdot\|}(w,w_0,r) &= \|w-w_0\|_{\Sigma}^2+r^2\|w\|_\star^2\\
 &\quad\quad +2c_0 r\|w-w_0\|_\Sigma\|w\|_\star.
 \end{aligned}
\end{align}
\label{lm:fench}
\end{restatable}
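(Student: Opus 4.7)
The plan is straightforward: unpack the inner supremum using the definition of the dual norm, then take expectations using standard facts about Gaussians.

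First I would write
\begin{equation*}
(x+\delta)^\top w - x^\top w_0 \;=\; x^\top(w-w_0) + \delta^\top w,
\end{equation*}
so that, setting $a := x^\top(w-w_0)$, the inner problem becomes $\sup_{\|\delta\|\le r}(a+\delta^\top w)^2$. Since $(a+t)^2$ is convex in $t$, its maximum over any symmetric set is attained at the extreme values of $t = \delta^\top w$. By definition of the dual norm, $\sup_{\|\delta\|\le r}\delta^\top w = r\|w\|_\star$ and $\inf_{\|\delta\|\le r}\delta^\top w = -r\|w\|_\star$, so
\begin{equation*}
\sup_{\|\delta\| \le r}\bigl((x+\delta)^\top w - x^\top w_0\bigr)^2 \;=\; \bigl(|x^\top(w-w_0)| + r\|w\|_\star\bigr)^2.
\end{equation*}

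Next I would expand this square and take expectations term by term. Writing $Z := x^\top(w-w_0)$, the identities $\mathbb E[Z^2] = \|w-w_0\|_\Sigma^2$ and $\mathbb E[r^2\|w\|_\star^2] = r^2\|w\|_\star^2$ handle the pure terms, and the cross term requires $\mathbb E[|Z|]$. Since $x \sim N(0,\Sigma)$, $Z$ is centered Gaussian with variance $\sigma^2 := \|w-w_0\|_\Sigma^2$, and the standard formula $\mathbb E|N(0,\sigma^2)| = \sigma\sqrt{2/\pi} = c_0\sigma$ yields $\mathbb E[|Z|] = c_0\|w-w_0\|_\Sigma$. Summing gives exactly the claimed identity.

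There is no real obstacle here, but the one place to be careful is justifying the passage from the supremum inside the expectation to the closed-form $(|a|+r\|w\|_\star)^2$: this requires only elementary convexity plus the definition of the dual norm, and in particular avoids any measurability subtlety because the closed-form expression is a continuous function of $x$. With that in hand, the rest is a one-line Gaussian moment computation.
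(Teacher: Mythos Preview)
Your proof is correct and follows the same two-step skeleton as the paper: first reduce the inner supremum to the closed form $(|x^\top(w-w_0)| + r\|w\|_\star)^2$, then take Gaussian expectations using $\mathbb E[Z^2]=\|w-w_0\|_\Sigma^2$ and $\mathbb E|Z|=c_0\|w-w_0\|_\Sigma$.

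The only notable difference is in how the inner supremum is handled. You argue directly: the range of $\delta^\top w$ over $\{\|\delta\|\le r\}$ is the symmetric interval $[-r\|w\|_\star, r\|w\|_\star]$, and the convex function $t\mapsto(a+t)^2$ attains its maximum on an interval at an endpoint, giving $(|a|+r\|w\|_\star)^2$. The paper instead proves this identity (its auxiliary Lemma) via Fenchel--Legendre duality, writing $z^2/2$ as its own conjugate and interchanging two suprema. Your route is shorter and more elementary; the paper's dualization machinery is heavier than the problem requires, though it does foreshadow the Fenchel-type manipulations used later in the primal--dual algorithm of Section~\ref{sec:algorithmic}. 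Either way, once the pointwise identity is in hand, the expectation step is identical.
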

The mysterious constant $c_0=\sqrt{2/\pi}$ in Lemma \ref{lm:fench} corresponds to the expected absolute value of a  standard Gaussian random variable. In order to obtain a robust predictor to adversarial attacks, one aims at minimizing the adversarial risk introduced in~\eqref{eq:Ewrdef}. However the objective function of the problem, even in the linear setting~\eqref{eq:Ewr}, is rather complicated to optimize due to its non-convexity.

\section{A Proxy for Adversarial Risk}
The following lemma will be one of the main workhorses in subsequent results, as it allows us to replace the adversarial risk functional $E$ with a more tractable proxy $\widetilde E$.
\begin{restatable}{lm}{sandwich}
\label{lm:sandwich}
For any $w \in \mathbb R^d$ and $r \ge 0$, it holds that
 \begin{eqnarray}
 \label{eq:sandwich0}
 E^{\|\cdot\|}(w,w_0,r) \le \widetilde E^{\|\cdot\|}(w,w_0,r) \le \alpha\cdot E^{\|\cdot\|}(w,w_0,r),
 \end{eqnarray}
 where $\alpha := 2/(1+\sqrt{2/\pi}) \approx 1.1124$~~ and
 \begin{align}
 \label{eq:Etilde}
     \widetilde E^{\|\cdot\|}(w,w_0,r) &:= (\|w-w_0\|_\Sigma+r\|w\|_\star)^2.
 \end{align}
 \end{restatable}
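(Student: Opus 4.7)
The plan is to reduce everything to an inequality in two non-negative scalars. Set $a := \|w-w_0\|_\Sigma \ge 0$ and $b := r\|w\|_\star \ge 0$. By Lemma \ref{lm:fench},
$$E^{\|\cdot\|}(w,w_0,r) = a^2 + b^2 + 2c_0 ab,$$
while by definition $\widetilde E^{\|\cdot\|}(w,w_0,r) = (a+b)^2 = a^2 + b^2 + 2ab$. So the entire lemma is equivalent to the two scalar inequalities
$$a^2+b^2+2c_0 ab \;\le\; a^2+b^2+2ab \;\le\; \alpha\bigl(a^2+b^2+2c_0 ab\bigr),$$
for all $a,b\ge 0$, with $\alpha = 2/(1+c_0)$.

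For the lower bound, I would simply note that $c_0 = \sqrt{2/\pi} < 1$, so the difference $\widetilde E - E = 2(1-c_0)ab \ge 0$ is immediate.

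For the upper bound, I would rearrange $(a+b)^2 \le \alpha(a^2+b^2+2c_0 ab)$ as
$$(\alpha-1)(a^2+b^2) \;\ge\; 2(1-\alpha c_0)ab.$$
The defining identity $\alpha(1+c_0)=2$ gives $\alpha - 1 = 1 - \alpha c_0 = (1-c_0)/(1+c_0)$, so the inequality collapses to $(\alpha-1)(a^2+b^2) \ge (\alpha-1)\cdot 2ab$, which is just AM--GM after dividing by $\alpha - 1 > 0$. This is where the specific value of $\alpha$ is pinned down: any smaller $\alpha$ would fail at $a=b$, so the constant is sharp.

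There is no real obstacle here; the only thing worth being careful about is checking that the two coefficients on either side of the rearranged inequality really are equal for the prescribed $\alpha$, so that the whole statement reduces cleanly to AM--GM rather than to a nontrivial quadratic condition. I would state the two scalar inequalities, verify the algebraic identity $\alpha - 1 = 1 - \alpha c_0$ in one line, and conclude.
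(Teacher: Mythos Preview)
Your proposal is correct and follows essentially the same approach as the paper: reduce to the scalar inequality $a^2+b^2+2c_0ab \le (a+b)^2 \le \alpha(a^2+b^2+2c_0ab)$ with $a=\|w-w_0\|_\Sigma$, $b=r\|w\|_\star$, then verify the two bounds elementarily. The paper packages the scalar step as a separate lemma (for general $c\in[0,1]$) and proves the upper bound by minimizing the ratio $(1+t^2+2ct)/(1+t)^2$ over $t=b/a$, whereas you rearrange and invoke AM--GM directly; these are the same computation, both pinned down by the case $a=b$.
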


The result is proved in the appendix.
Since $\alpha \approx 1.1124$, the above approximation would allow us to get rougly $90\%$-optimality in the adversarial risk by minimizing the (much simpler) proxy function $w \mapsto\widetilde{E}^{\|\cdot\|} (w,w_0,r)$ instead. This will precisely be the focus of the next sections.

We also denote $\widetilde E_{opt}^{\Vert\cdot\Vert}(w_0,r)$ the smallest possible value of the adversarial risk proxy $\widetilde{E}^{\|\cdot\|}(\cdot,w_0,r)$.

\section{Gradient-Descent for Linear Regression}
\label{sec:gd}
While the optimization of adversarial risk can be complex, the minimization of ordinary risk can be obtained by a simple gradient-descent. When applying a vanilla gradient-descent (GD) scheme with a step-size $\eta>0$, starting at $w_\eta(0):= 0_d$ to the ordinary risk defined in~\eqref{eq-ordinary-risk}, one obtains at each iteration $t\geq 1$  the following updates:
\begin{eqnarray}
\begin{split}
    w_{\eta}(t,w_0)&:=w_{\eta}(t-1) - \eta \nabla_w E(w_{\eta}(t-1),w_0)\\
    &= (I_d - (I_d - \eta \Sigma)^{t})w_0
    \end{split}
\end{eqnarray}
This scheme can be seen as a discrete approximation of the gradient flow induced by the following ODE:
$$
\left\{
    \begin{array}{ll}
       \dot w(t) =-\Sigma(w(t) - w_0) \\
        w(0)=0_d
    \end{array}
\right.
$$
which has a closed-form solution given by 
\begin{eqnarray}
\label{eq:gdynamics}
    w(t,w_0):=(I_d - \exp(-t\Sigma))w_0.
\end{eqnarray}
Our goal is to evaluate the robustness of the predictors obtained along the gradient descent path. We consider the continuous-time GD, as the analysis of discrete-time GD is analogous due to the absence of noise: the former is an infinitely small step-size $\eta$ limit of the latter ~\citep{continuousGDAnul2019,continuousGDAnul2020}. In the following we mean by early-stopped GD, any predictor (indexed by training time, $t$) obtained along the path of the gradient flow. Observe that for such predictors one has always that
\begin{align*}
 E^{\Vert\cdot\Vert_2}(w(t,w_0),w_0,r)\geq E^{\|\cdot\|}_{opt}(w_0,r) 
\end{align*}
and so for any $t\geq 0$. In particular we will focus on the one that minimizes the adversarial risk at test time, that is $ \inf_{t \geq 0} E^{\Vert\cdot\Vert_2}(w(t,w_0),w_0,r)$. 


\subsection{Euclidean Attacks: Almost-optimal Robustness}
\label{subsec:euclidean}
Here we consider Euclidean attacks, meaning that the attacker's norm is $\Vert\cdot\Vert=\Vert\cdot\Vert_{*}=\Vert\cdot\Vert_2$.
In the following proposition, we first characterize the non-robustness of the generative model $w_0$.
\begin{restatable}{prop}{generative}
\label{prop:generative}
If $r\leq \sqrt{2/\pi}\frac{\Vert w_0\Vert_2}{\Vert w_0\Vert_{\Sigma^{-1}}}$, then we have that
\begin{align*}
    E^{\Vert\cdot\Vert_2}(w_0,w_0,r)=  E^{\Vert\cdot\Vert_2}_{opt}(w_0,r).
\end{align*}
and as soon as $r>\sqrt{2/\pi}\frac{\Vert w_0\Vert_2}{\Vert w_0\Vert_{\Sigma^{-1}}}$, we have
\begin{align*}
     E^{\Vert\cdot\Vert_2}(w_0,w_0,r) / E^{\Vert\cdot\Vert_2}_{opt}(w_0,r)\geq \frac{r^2\Vert w_0\Vert_2^2}{\Vert w_0\Vert_{\Sigma}^2}.
\end{align*}
\end{restatable}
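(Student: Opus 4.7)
The plan is to unpack the expression given by Lemma~\ref{lm:fench} at the two simple choices $w=w_0$ and $w=0$, and then exploit convexity of $w\mapsto E^{\|\cdot\|_2}(w,w_0,r)$ to characterize when $w_0$ is globally optimal. Specifically, at $w=w_0$ the first and third summands in~\eqref{eq:Ewr} vanish, giving $E^{\|\cdot\|_2}(w_0,w_0,r)=r^2\|w_0\|_2^2$; at $w=0$ the last two vanish, giving $E^{\|\cdot\|_2}(0,w_0,r)=\|w_0\|_\Sigma^2$. For the second part of the proposition this is already enough: since $w=0$ is admissible, $E^{\|\cdot\|_2}_{opt}(w_0,r)\le \|w_0\|_\Sigma^2$, hence the ratio $E^{\|\cdot\|_2}(w_0,w_0,r)/E^{\|\cdot\|_2}_{opt}(w_0,r)$ is always $\ge r^2\|w_0\|_2^2/\|w_0\|_\Sigma^2$ (the stated hypothesis on $r$ merely guarantees that this lower bound is nontrivial, i.e.\ strictly bigger than $1$, by Cauchy--Schwarz applied to $\Sigma^{1/2}$ and $\Sigma^{-1/2}w_0$).

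For the first part I would begin by establishing convexity of $w\mapsto E^{\|\cdot\|_2}(w,w_0,r)$: this follows directly from the definition~\eqref{eq:Ewrdef} (expectation of a supremum of squares of affine forms in $w$), or equivalently by rewriting~\eqref{eq:Ewr} as $(\|w-w_0\|_\Sigma+c_0 r\|w\|_2)^2+(1-c_0^2)r^2\|w\|_2^2$, a sum of a square of a convex nonnegative function and a convex quadratic. Because of convexity, $w_0$ is a global minimizer if and only if the one-sided directional derivative at $w_0$ is non-negative in every direction $u\in\mathbb R^d$. The only mild technical point is that $E^{\|\cdot\|_2}$ is not differentiable at $w_0$ (the $\|w-w_0\|_\Sigma$ factor in the cross term creates a kink); this is what forces the use of one-sided directional derivatives and is really the main obstacle, though quite benign.

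Finally, I would expand $\varphi(\epsilon):=E^{\|\cdot\|_2}(w_0+\epsilon u,w_0,r)$ as $\epsilon\downarrow 0$. Using $\|\epsilon u\|_\Sigma=\epsilon\|u\|_\Sigma$ and $\|w_0+\epsilon u\|_2=\|w_0\|_2+\epsilon w_0^\top u/\|w_0\|_2+O(\epsilon^2)$ (assuming $w_0\ne 0$, which is the only nontrivial case), Lemma~\ref{lm:fench} gives
\begin{equation*}
\varphi(\epsilon)-\varphi(0) \;=\; 2r\epsilon\bigl(r\, w_0^\top u+c_0\|u\|_\Sigma\|w_0\|_2\bigr)+O(\epsilon^2).
\end{equation*}
Non-negativity of this one-sided derivative for every $u$ (equivalently, by replacing $u$ with $-u$ and combining) boils down to $r|w_0^\top u|\le c_0\|w_0\|_2\|u\|_\Sigma$ for every $u\in\mathbb R^d$, which upon taking the supremum and using the identity $\sup_{u\ne 0}|w_0^\top u|/\|u\|_\Sigma=\|w_0\|_{\Sigma^{-1}}$ (via the change of variable $v=\Sigma^{1/2}u$) becomes exactly $r\le c_0\|w_0\|_2/\|w_0\|_{\Sigma^{-1}}$. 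This matches the stated threshold and concludes the proof.
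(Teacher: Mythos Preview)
Your proof is correct. For the second claim (the ratio lower bound when $r$ is large), your argument is identical to the paper's: evaluate $E^{\|\cdot\|_2}(w_0,w_0,r)=r^2\|w_0\|_2^2$ directly from Lemma~\ref{lm:fench} and bound $E^{\|\cdot\|_2}_{opt}(w_0,r)$ from above by plugging in $w=0$.

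For the first claim (optimality of $w_0$ when $r$ is small), the paper does not argue from scratch: it simply cites \cite[Proposition~1]{Xing2021}, where this threshold is derived as part of a full characterization of the minimizer of $E^{\|\cdot\|_2}(\cdot,w_0,r)$ via ridge-type solutions. Your route is different and self-contained: you establish convexity of $w\mapsto E^{\|\cdot\|_2}(w,w_0,r)$ (either from the representation~\eqref{eq:Ewrdef} as an expectation of a pointwise supremum of convex functions, or from the algebraic decomposition you wrote down), then check the first-order condition at the nondifferentiable point $w_0$ via one-sided directional derivatives. The expansion you give is correct, and the reduction to $\sup_{u\ne0}|w_0^\top u|/\|u\|_\Sigma=\|w_0\|_{\Sigma^{-1}}$ is exactly the right dual identity. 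What this buys you is a short proof that does not depend on the external reference and that in fact yields the ``if and only if'' version of the threshold; what the paper's citation buys is access to the full structure of the minimizer (a ridge estimator indexed by $r$), which it needs anyway for Proposition~\ref{prop:GDisOpt}.
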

It is important to notice that the generative model $w_0$ can be optimal w.r.t both the standard risk $E(\cdot,w_0)$ and the adversarial risk $E^{\Vert\cdot\Vert_2}(\cdot,w_0,r)$ for Euclidean attacks as soon as $r$ is sufficiently small. However, as $r$ increases, its adversarial risk becomes arbitrarily large. Therefore, applying a GD scheme until convergence may lead to predictors which are not robust to adversarial attacks even in the Euclidean setting. In the next proposition we investigate the robustness of the predictors obtained along the path of the GD scheme $(w(t,w_0))_{t\geq 0}$ and we show that for any attack $r\geq 0$, this path contains an optimally robust predictor (up to an absolute constant $\beta:=1.6862$).
\begin{restatable}{prop}{GDisOpt}
\label{prop:GDisOpt}
The following hold:

-- If $r\leq \sqrt{2/\pi}\frac{\Vert w_0\Vert_2}{\Vert w_0\Vert_{\Sigma^{-1}}}$ or
$r\geq \sqrt{\pi/2}\frac{\Vert w_0\Vert_{\Sigma^2}}{\Vert w_0\Vert_{\Sigma}}$, then
\begin{align}
     \inf_{t \geq 0}E^{\Vert\cdot\Vert_2}(w(t,w_0),w_0,r)= E^{\Vert\cdot\Vert_2}_{opt}(w_0,r).
\end{align}

-- If $\sqrt{2/\pi}\frac{\Vert w_0\Vert_2}{\Vert w_0\Vert_{\Sigma^{-1}}} < r <\sqrt{\pi/2}\frac{\Vert w_0\Vert_{\Sigma^2}}{\Vert w_0\Vert_{\Sigma}}$, we have that
\begin{align}
     \inf_{t \geq 0}E^{\Vert\cdot\Vert_2}(w(t,w_0),w_0,r)\leq \beta E^{\Vert\cdot\Vert_2}_{opt}(w_0,r)
\end{align}
\end{restatable}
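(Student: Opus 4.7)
The plan is to exploit the convexity of the adversarial risk $E^{\|\cdot\|_2}(\cdot,w_0,r)$, which follows from the rewriting
\[
E^{\|\cdot\|_2}(w,w_0,r) = \bigl(\|w-w_0\|_\Sigma + c_0 r\|w\|_2\bigr)^2 + (1-c_0^2)r^2\|w\|_2^2
\]
of Lemma~\ref{lm:fench}: the first summand is the square of a non-negative convex function and the second is convex. Hence every first-order critical point of $E^{\|\cdot\|_2}(\cdot,w_0,r)$ is a global minimizer, and each of the three regimes reduces to producing (or approaching) such a critical point along the GD flow $w(t,w_0)=(I-e^{-t\Sigma})w_0$.

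In the regime $r\le c_0\|w_0\|_2/\|w_0\|_{\Sigma^{-1}}$, I would compute the subdifferential of $E^{\|\cdot\|_2}(\cdot,w_0,r)$ at $w=w_0$. The only non-smooth contribution comes from the $\|w-w_0\|_\Sigma$ factor inside the cross term, whose subgradient at $w_0$ is the Mahalanobis ball $\{g:\|g\|_{\Sigma^{-1}}\le 1\}$; a direct calculation shows that $0$ belongs to this subdifferential precisely under the stated bound on $r$, so $w_0$ is globally optimal. Since $w(t,w_0)\to w_0$ as $t\to\infty$ and the risk is continuous, the infimum over $t$ equals $E^{\|\cdot\|_2}_{opt}(w_0,r)$. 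The regime $r\ge \sqrt{\pi/2}\,\|w_0\|_{\Sigma^2}/\|w_0\|_\Sigma$ is symmetric: the one-sided directional derivative of $E^{\|\cdot\|_2}(\cdot,w_0,r)$ at $w=0$ along a unit vector $v$ equals $2c_0 r\|w_0\|_\Sigma - 2\,v^\top\Sigma w_0$, which is non-negative for every $v$ iff $\|\Sigma w_0\|_2\le c_0 r\|w_0\|_\Sigma$, i.e.\ exactly in this regime; then $w=0$ is optimal and $w(0,w_0)=0$ is already on the GD path.

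For the intermediate regime, writing $B(w):=\|w-w_0\|_\Sigma$ and $V(w):=\|w\|_2$, a first-order analysis of $E^{\|\cdot\|_2}=B^2+r^2V^2+2c_0 rBV$ reveals that its unique minimizer $w^\dagger$ is itself a ridge regressor,
\[
w^\dagger = (\Sigma+t^\dagger I)^{-1}\Sigma w_0,\qquad t^\dagger = \frac{r\, B(w^\dagger)\bigl(c_0 B(w^\dagger)+r V(w^\dagger)\bigr)}{V(w^\dagger)\bigl(B(w^\dagger)+c_0 r V(w^\dagger)\bigr)}>0.
\]
I would then take $T:=1/t^\dagger$ and diagonalize $\Sigma$: the GD flow acts on each eigen-component through the spectral filter $1-e^{-\lambda/t^\dagger}$, whereas $w^\dagger$ acts through $\lambda/(\lambda+t^\dagger)$. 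A pointwise comparison of the two filters yields $B(w(T,w_0))\le B(w^\dagger)$ (using $e^{-u}(u+1)\le 1$ coordinate-by-coordinate) and $V(w(T,w_0))\le g\, V(w^\dagger)$ with $g := \sup_{u\ge 0}(1-e^{-u})(u+1)/u$. Substituting these into the quadratic form of $E^{\|\cdot\|_2}$ and using $g\ge 1$ to absorb the cross term, one obtains $E^{\|\cdot\|_2}(w(T,w_0),w_0,r)\le g^2\,E^{\|\cdot\|_2}_{opt}(w_0,r)$; locating the maximizer of $(1-e^{-u})(u+1)/u$ via the transcendental equation $e^u=u^2+u+1$ then gives $g^2\approx 1.6862=\beta$.

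The main obstacle is the tightness of this final step. The filter comparison is pointwise across eigen-components and does not exploit the mixing through $w_0$, so one must verify that substituting $B\le B^\dagger$ and $V\le g\,V^\dagger$ into $B^2+r^2V^2+2c_0 rBV$ really produces the bound $g^2 E^{\|\cdot\|_2}_{opt}$ with no additional slack. This relies on the uniform direction of the two filter inequalities ($B$ shrinks while $V$ grows by at most the factor $g$) together with the non-negativity of the cross term; any looseness in either the filter envelopes or in the quadratic substitution would inflate the final constant beyond $\beta$. It is worth noting that the na\"ive alternative of passing through the proxy of Lemma~\ref{lm:sandwich} and its ridge minimizer $(\Sigma+s^\star I)^{-1}\Sigma w_0$ costs an extra factor of $\alpha$, and would not yield the clean constant $\beta=g^2$; comparing directly to $w^\dagger$ is what makes the bound tight.
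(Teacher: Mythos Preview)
Your proposal is correct and follows essentially the same route as the paper: both identify the adversarial minimizer as a ridge estimator $(\Sigma+t^\dagger I)^{-1}\Sigma w_0$ (the paper cites \cite{Xing2021} for this and for the boundary cases, whereas you derive them directly from convexity and first-order conditions), take GD time $T=1/t^\dagger$, and compare the spectral filters via $e^{-u}\le 1/(1+u)$ and $1-e^{-u}\le g\,u/(1+u)$ with $g\approx 1.2985$, yielding $\beta=g^2\approx 1.6862$. Your worry about the final substitution is unfounded: the paper handles it by the equivalent device of writing $B^2+g^2r^2V^2+2c_0grBV=E^{\|\cdot\|_2}(w^\dagger,w_0,gr)$ and then invoking the elementary bound $E^{\|\cdot\|}(w,w_0,\gamma r)\le\gamma^2 E^{\|\cdot\|}(w,w_0,r)$ for $\gamma\ge 1$, which is exactly your ``$g\ge 1$ absorbs the remaining terms'' step.
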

Therefore the early-stopped vanilla GD scheme is able to capture an almost-optimally robust predictor for any Euclidean attack of radius $r$ (see Figure~\ref{fig:l2-case} for an illustration).

In our proof of Proposition \ref{prop:GDisOpt}, we show that GD early-stopped at time $t$ has the same adversarial risk (up to multiplicative constant) as a ridge estimator with regularization parameter $\lambda \propto 1/t$. The result then follows from \citep{Xing2021}, where it was shown the minimizer of the adversarial risk under Euclidean attacks is a ridge estimator.

\begin{figure}[h]
\vspace{-0.1cm}
\centering
\includegraphics[width=0.99\linewidth]{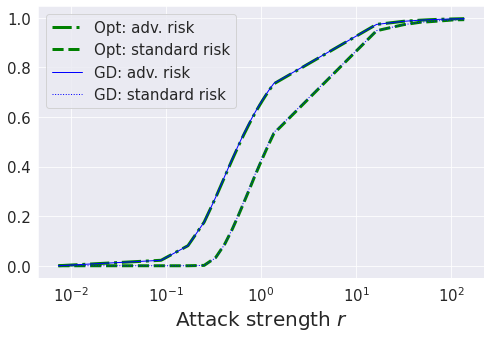}
\caption{We consider a $d=2$ dimensional case where $w_0$ and $\Sigma$ are sampled according to a Gaussian and an Exponential distributions respectively. We plot the adversarial risk under Euclidean attacks of the optimal predictor along the GD path (\textbf{GD: adv. risk}) as well as its standard risk (\textbf{GD: standard risk}) and we compare them to the risks of the optimal predictor (\textbf{Opt}) solving adversarial problem when varying the attack strength $r$.}
\label{fig:l2-case}
\vspace{-0.1cm}
\end{figure}

In the isotropic case, i.e. when $\Sigma=I_d$, early-stopped GD even achieves the exact optimal adversarial risk.
\begin{restatable}{prop}{isoexact}
\label{prop:iso-exact}
Assume that $\Sigma=I_d$, then for all $r\geq 0$, we have $\inf_{t\geq 0} E^{\Vert\cdot\Vert_2}(w(t,w_0),r) =  E^{\Vert\cdot\Vert_2}_{opt}(w_0,r)$.
\end{restatable}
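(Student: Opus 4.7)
The plan is to exploit the two symmetries that appear when $\Sigma=I_d$: the GD flow becomes pure scaling of $w_0$, and the adversarial risk is rotationally symmetric around the axis of $w_0$. Concretely, when $\Sigma=I_d$ the closed form \eqref{eq:gdynamics} collapses to $w(t,w_0)=(1-e^{-t})\,w_0$. Letting $c(t):=1-e^{-t}$, the map $t\mapsto c(t)$ is a continuous bijection from $[0,\infty)$ onto $[0,1)$, so
$$\inf_{t\ge 0} E^{\|\cdot\|_2}(w(t,w_0),w_0,r) \;=\; \inf_{c\in[0,1)} E^{\|\cdot\|_2}(cw_0,w_0,r) \;=\; \inf_{c\in[0,1]} E^{\|\cdot\|_2}(cw_0,w_0,r),$$
where the second equality uses that $c\mapsto E^{\|\cdot\|_2}(cw_0,w_0,r)$ is continuous. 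It therefore suffices to show that the unconstrained optimum $E^{\|\cdot\|_2}_{opt}(w_0,r)$ is attained on the segment $\{cw_0:c\in[0,1]\}$.

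For the first reduction, I would decompose any $w\in\mathbb R^d$ as $w=a\,w_0+u$ with $u\perp w_0$. Using Lemma~\ref{lm:fench} (and $\|\cdot\|_\Sigma=\|\cdot\|_2$ here), the adversarial risk reads
$$E^{\|\cdot\|_2}(w,w_0,r) \;=\; \|w-w_0\|_2^2 + r^2\|w\|_2^2 + 2c_0 r\,\|w-w_0\|_2\,\|w\|_2.$$
Orthogonality gives $\|w-w_0\|_2^2=(a-1)^2\|w_0\|_2^2+\|u\|_2^2$ and $\|w\|_2^2=a^2\|w_0\|_2^2+\|u\|_2^2$, so at fixed $a$ each of the three summands above is nondecreasing in $\|u\|_2^2$ (the cross-term is the product of two nonnegative nondecreasing functions of $\|u\|_2^2$). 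Hence the minimum over $w\in\mathbb R^d$ is attained at $u=0$, i.e.\ on the line $\mathbb R\,w_0$.

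It remains to restrict the scalar $a\in\mathbb R$ to $[0,1]$. Setting $g(c):=E^{\|\cdot\|_2}(cw_0,w_0,r)/\|w_0\|_2^2=(1-c)^2+r^2c^2+2c_0 r\,|1-c|\,|c|$, a termwise comparison suffices: for $c\le 0$ we have $(1-c)^2\ge 1$, $r^2c^2\ge 0$ and $|1-c|\,|c|\ge 0$, whence $g(c)\ge g(0)$; and for $c\ge 1$ we have $r^2c^2\ge r^2$, $(1-c)^2\ge 0$ and $|1-c|\,|c|\ge 0$, whence $g(c)\ge g(1)$. Combining the three reductions gives $E^{\|\cdot\|_2}_{opt}(w_0,r)=\inf_{c\in[0,1]} g(c)\cdot\|w_0\|_2^2$, which matches the GD infimum identified in the first paragraph.

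I do not expect a serious obstacle: the argument is a chain of elementary reductions made possible by the isotropic structure. The only mildly delicate point is the boundary case where the scalar optimum is $c^\star=1$ (which happens for small $r$ in view of Proposition~\ref{prop:generative}); then the GD path only approaches $w_0$ as $t\to\infty$ without reaching it, but this is harmless because the proposition asserts an infimum rather than a minimum, and continuity of $g$ together with $c(t)\to 1$ gives $\inf_{t\ge 0}g(c(t))=g(1)$.
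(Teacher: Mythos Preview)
Your proof is correct, and it takes a genuinely different route from the paper's. The paper invokes \cite[Proposition~1]{Xing2021}, which states that under Euclidean attacks the minimizer of $E^{\|\cdot\|_2}(\cdot,w_0,r)$ is a ridge estimator $w_{\beta^\star}(w_0)=(\beta^\star I_d+\Sigma)^{-1}\Sigma w_0$; when $\Sigma=I_d$ this collapses to $\frac{1}{\beta^\star+1}\,w_0$, a point on the segment $[0,w_0]$, and the rest is immediate since the GD flow parametrizes that segment. Your argument is more self-contained: rather than importing the ridge characterization, you exploit the rotational invariance of the risk when $\Sigma=I_d$ to show directly, via the orthogonal decomposition $w=a w_0+u$ and termwise monotonicity in $\|u\|_2$, that the optimum lies on $\mathbb R w_0$, and then use a simple termwise comparison to restrict to $a\in[0,1]$. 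The paper's route is shorter given the external result, while yours is elementary and does not rely on \cite{Xing2021}; your treatment of the boundary case $c^\star=1$ via continuity (infimum rather than minimum) is also more explicit than the paper's. One trivial edge case you leave implicit is $w_0=0$, where the decomposition $w=a w_0+u$ degenerates, but the claim is then vacuous since both sides are zero (attained at $t=0$).
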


However, such results are only possible when the attacks are Euclidean. In the following section, we show that vanilla GD, along its whole optimization path, can be arbitrarily sub-optimal in terms of adversarial risk for Mahalanobis Attacks.

\subsection{Mahalanobis Attacks: Sub-optimality of Gradient-Descent}
Let us consider attacks w.r.t the Mahalanobis norm induced by a symmetric positive definite matrix $B$, i.e. we consider the case where 
\begin{align*}
    \Vert \cdot \Vert = \Vert \cdot \Vert_{B}:=\Vert B^{1/2} \cdot \Vert_{2}\; .
\end{align*}
In the next proposition, we present a simple case where GD fails to be adversarially robust under such attacks.
\begin{restatable}{prop}{GDisBadforMan}
\label{thm:GDisBadforMan}
Let $d=2$, $\Sigma=I_d$ and for any integer $m \ge 1$, let us consider the following positive-definite matrix
\begin{align}
\label{eq-B-counter-example}
B=B(m)=
\begin{pmatrix}
1/m & 0\\
0 & m
\end{pmatrix}.
\end{align}
Also, consider the following choice of generative model $w_0 = w_0(m) =(1/\sqrt{m},1)$. Then, for any fixed $r>0$,
\begin{align*}
   \underset{m\xrightarrow[]{}+\infty}{\lim}\dfrac{\inf_{t\geq 0}E^{\Vert\cdot\Vert_{B}}(w(t,w_{0}),w_{0},r)}{E^{\Vert\cdot\Vert_{B}}_{opt}(w_{0},r)}=+\infty.
\end{align*}
\end{restatable}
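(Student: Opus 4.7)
The plan is to exploit the isotropy of $\Sigma = I_d$, which pins the entire GD trajectory to the ray $\{s\, w_0 : s \in [0,1)\}$, and then exhibit a competitor estimator that is orders of magnitude better than anything on this ray as $m \to \infty$. Throughout I would work with the proxy $\widetilde E^{\|\cdot\|_B}$ from Lemma \ref{lm:sandwich}, which only costs a multiplicative factor $\alpha$ and makes every quantity closed-form. I will repeatedly use the fact that the norm dual to $\|\cdot\|_B$ is $\|\cdot\|_{B^{-1}}$, with $B^{-1} = \mathrm{diag}(m,1/m)$.

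First I would compute the GD path. Since $\Sigma=I_d$, the closed-form \eqref{eq:gdynamics} gives $w(t,w_0) = (1-e^{-t})\,w_0$, so along the whole path the iterates are scalar multiples $s\,w_0$ with $s \in [0,1)$. A direct computation yields $\|w_0\|_2^2 = 1 + 1/m$ and $\|w_0\|_{B^{-1}}^2 = (1/\sqrt m)^2 m + 1^2/m = 1 + 1/m$, so that
\begin{align*}
\widetilde E^{\|\cdot\|_B}(s w_0, w_0, r)
&= \bigl((1-s)\|w_0\|_2 + r s \|w_0\|_{B^{-1}}\bigr)^2 \\
&= (1+1/m)\bigl((1-s) + r s\bigr)^2.
\end{align*}
Minimising the affine function $(1-s)+rs$ over $s \in [0,1]$ gives the value $\min(1,r)$, hence $\inf_{t \ge 0} \widetilde E^{\|\cdot\|_B}(w(t,w_0),w_0,r) = (1+1/m)\min(1,r^2)$, which is bounded below by an $r$-dependent constant as $m \to \infty$.

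Second I would upper-bound $E_{opt}^{\|\cdot\|_B}(w_0,r)$ by plugging a single well-chosen competitor into $\widetilde E$. The natural choice is $w_\star := (0,1)$, which removes the small coordinate along which the attacker's dual norm is expensive. For this vector, $\|w_\star - w_0\|_2 = 1/\sqrt m$ and $\|w_\star\|_{B^{-1}} = 1/\sqrt m$, so
\begin{align*}
E_{opt}^{\|\cdot\|_B}(w_0,r)
\;\le\; \widetilde E^{\|\cdot\|_B}(w_\star,w_0,r)
\;=\; \frac{(1+r)^2}{m}.
\end{align*}

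Finally I would combine the two bounds using the sandwich inequality \eqref{eq:sandwich0}:
\begin{align*}
\frac{\inf_{t \ge 0} E^{\|\cdot\|_B}(w(t,w_0),w_0,r)}{E_{opt}^{\|\cdot\|_B}(w_0,r)}
\;\ge\; \frac{\tfrac{1}{\alpha}\,(1+1/m)\min(1,r^2)}{(1+r)^2/m},
\end{align*}
and the right-hand side is $\Theta(m) \to +\infty$ as $m \to +\infty$ for any fixed $r>0$. The only mildly delicate point is the minimisation along the GD ray, where one must notice that because the $\|\cdot\|_{B^{-1}}$-norm of $w_0$ is of the same order as its $\|\cdot\|_2$-norm (both equal $\sqrt{1+1/m}$), shrinking $w_0$ cannot buy any $m$-scaling; the true gain only comes from killing the first coordinate entirely, which the GD dynamics are structurally unable to do. This is precisely where the sub-optimality is concentrated, and the rest is bookkeeping.
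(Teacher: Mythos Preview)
Your proof is correct. The lower bound on the GD path is essentially identical to the paper's: both use $\Sigma=I_d$ to reduce to scalar shrinkage, pass to the proxy $\widetilde E$ via Lemma~\ref{lm:sandwich}, and obtain $\inf_t \widetilde E^{\|\cdot\|_B}(w(t,w_0),w_0,r)=(1+1/m)\min(1,r^2)$.

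The difference is in how you upper-bound $E_{opt}^{\|\cdot\|_B}(w_0,r)$. The paper computes the exact minimiser of $\widetilde E^{\|\cdot\|_B}(\cdot,w_0,r)$ by writing the first-order condition $\nabla_w(\alpha+r\beta)=0$ (with $\alpha=\|w-w_0\|_2$, $\beta=\|w\|_{B^{-1}}$), solving it in the form $w(t)=t(tI_d+rB^{-1})^{-1}w_0$ for a scalar $t$, deriving a closed-form equation for $t=t(m)$, checking $t(m)\to 1$, and deducing $\widetilde E_{opt}^{\|\cdot\|_B}(w_0,r)\to 0$. Your argument bypasses all of this by simply evaluating the proxy at the single competitor $w_\star=(0,1)$, which already gives $\widetilde E^{\|\cdot\|_B}(w_\star,w_0,r)=(1+r)^2/m$. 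This is strictly more elementary and yields a quantitative $\Theta(m)$ blow-up rate with no asymptotic analysis of implicit equations; the paper's route gives the actual minimiser (which it does not subsequently use), at the cost of a longer calculation. Your closing remark about why shrinkage cannot help---namely that $\|w_0\|_{B^{-1}}=\|w_0\|_2$ here---is exactly the structural point, and nicely isolates where the sub-optimality lives.
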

Therefore under Mahalanobis attacks, any predictor obtained along the path of GD can be arbitrarily sub-optimal. To alleviate this issue, we propose in the next section a modified version of the vanilla GD scheme which can handle such attacks and even more general ones.

\section{An Adapted Gradient-Descent: GD+}
\label{sec:gdplus}
In fact, it is possible to obtain an almost optimally robust predictor for Mahalanobis attacks using a modified version of the GD scheme, termed GD+. Let $M\in\mathcal{M}_d(\mathbb{R})$ be an arbitrary invertible matrix. In order to build an almost-optimally robust predictor for such attacks, we propose to apply a GD scheme on a transformed version of the data. More precisely, we propose to apply a GD scheme to the following objective function:
\begin{align*}
    E_{M}(w,w_0):=\mathbb{E}_{(x,y)\sim P_{xy}}(w^\top Mx -y)^2
\end{align*}
which leads to the following optimization dynamics:
\begin{align}
\label{eq:GD+}
    w^{M}(t,w_0):= (I_d - e^{-t M \Sigma M^T})(M^{-1})^{\top}w_0.
\end{align}
This transformation amounts to apply feature-dependent gradient steps determined by $M$ to the classical GD scheme. In the following proposition, we show that when $M$ is adapted to the attack, early-stopped GD+ is optimally robust (up to an absolute constant).
\begin{restatable}{prop}{GDPlusIsGood}
\label{prop:GDPlusIsGood}
For any $B \in \mathcal{S}_{d}^{++}(\mathbb{R})$ and $r\geq 0$,
$$\inf_{t \geq 0}E^{\Vert\cdot\Vert_{B}}(B^{1/2} w^{B^{1/2}}(t,w_0),r)\leq \beta E^{\Vert\cdot\Vert_{B}}_{opt}(r,w_0).$$
\end{restatable}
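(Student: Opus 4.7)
The main idea is to reduce the Mahalanobis-attack problem to the Euclidean-attack problem by a change of variables, and then invoke Proposition~\ref{prop:GDisOpt}. Concretely, I would change variables by setting $v := B^{-1/2} w$, $v_0 := B^{-1/2} w_0$, and $\widetilde\Sigma := B^{1/2}\Sigma B^{1/2}$. Using the dual-norm identity $\|w\|_{B^{-1}} = \|B^{-1/2} w\|_2$ (so $\|B^{1/2} v\|_{B^{-1}} = \|v\|_2$) together with $\|B^{1/2} v - w_0\|_\Sigma = \|v - v_0\|_{\widetilde\Sigma}$, the closed-form expression from Lemma~\ref{lm:fench} yields
\begin{align*}
E^{\|\cdot\|_B}(B^{1/2} v, w_0, r)
&= \|v-v_0\|_{\widetilde\Sigma}^{2} + r^{2}\|v\|_{2}^{2} \\
&\quad + 2c_0 r\,\|v-v_0\|_{\widetilde\Sigma}\|v\|_{2}.
\end{align*}
The right-hand side is exactly the Euclidean-attack adversarial risk evaluated at the linear model $v$, when the ground-truth parameter is $v_0$ and the feature covariance is $\widetilde\Sigma$.

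Next, I would verify that under this change of variables, GD+ with $M = B^{1/2}$ becomes vanilla GD for the transformed problem. Plugging $M = B^{1/2}$ into~\eqref{eq:GD+}, the iterate is $w^{B^{1/2}}(t,w_0) = (I_d - e^{-t\widetilde\Sigma}) B^{-1/2} w_0$. Multiplying by $B^{1/2}$ on the left and then by $B^{-1/2}$ to express the result in $v$-coordinates gives
\begin{equation*}
B^{-1/2}\bigl(B^{1/2} w^{B^{1/2}}(t,w_0)\bigr) = (I_d - e^{-t\widetilde\Sigma}) v_0,
\end{equation*}
which is precisely the gradient-flow trajectory~\eqref{eq:gdynamics} for ordinary least squares with covariance $\widetilde\Sigma$ starting from $0_d$ and targeting $v_0$.

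Combining the two steps above, the adversarial risk of $B^{1/2} w^{B^{1/2}}(t,w_0)$ under $\|\cdot\|_B$-attacks equals the Euclidean-attack risk of vanilla GD at time $t$ for the transformed problem $(v_0,\widetilde\Sigma)$. Now I would apply Proposition~\ref{prop:GDisOpt} to the transformed problem: it guarantees
\begin{equation*}
\inf_{t\ge 0} E^{\|\cdot\|_2}_{\widetilde\Sigma}\bigl((I_d-e^{-t\widetilde\Sigma}) v_0, v_0, r\bigr) \le \beta\,E^{\|\cdot\|_2}_{opt,\widetilde\Sigma}(v_0, r).
\end{equation*}
Finally, I would invoke the same change-of-variables identity one more time but applied to an arbitrary $w$ (equivalently $v = B^{-1/2} w$) to conclude that $E^{\|\cdot\|_2}_{opt,\widetilde\Sigma}(v_0,r) = E^{\|\cdot\|_B}_{opt}(w_0,r)$, which yields the desired inequality.

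I do not expect any serious obstacle here: the whole argument is a clean reduction. The main care is in verifying the two coordinate identities (dual norm and covariance norm under the $v = B^{-1/2} w$ substitution), and in noticing that the prefactor $B^{1/2}$ appearing in the statement is exactly what makes GD+ coincide with vanilla GD on the transformed geometry. Once these are in place, Proposition~\ref{prop:GDisOpt} does all the work.
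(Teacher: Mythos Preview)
Your proposal is correct and follows essentially the same approach as the paper: both reduce the Mahalanobis-attack problem to the Euclidean-attack problem via the change of variables induced by $B^{1/2}$ and then invoke Proposition~\ref{prop:GDisOpt}. The paper carries out the argument for a general invertible $M$ (noting that GD+ is vanilla GD on data with covariance $M\Sigma M^\top$ and ground truth $(M^\top)^{-1}w_0$) and then specializes to $M=B^{1/2}$, whereas you specialize from the outset; this is only a cosmetic difference in presentation.
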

Therefore by choosing $M=B^{1/2}$, GD+ is able to obtain near-optimality under $\Vert\cdot\Vert_{B}$-norm attack. See Figure~\ref{fig:counter-example-1} for an illustration. Note that when $B=B^{1/2}=I_d$, then $w^{I_d}(t,w_0)=w(t,w_0)$, $\Vert\cdot \Vert_{I_d} = \Vert \cdot \Vert_{2}$, and we recover as a special case our result obtained in Proposition~\ref{prop:GDisOpt}. 



\begin{figure}[h!]
\vspace{-0.1cm}
\centering
\includegraphics[width=0.99\linewidth]{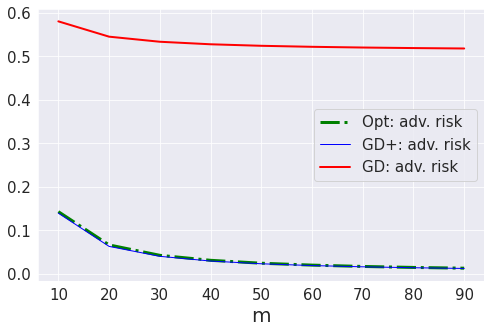}
\caption{Here we plot the example studied in Proposition~\ref{thm:GDisBadforMan} for a fixed radius $r=1$ when varying $m$. \textbf{GD+} represents the modified GD scheme with $M=B^{1/2}$ where $B$ is defined in Eq.~(\ref{eq-B-counter-example}),  \textbf{GD} represent the vanilla GD and \textbf{Opt} is the optimal predictor. Observe that the optimal adversarial risk goes to $0$ as $m$ goes to $+\infty$ while the adversarial risk of the vanilla GD converges towards a constant.}
\label{fig:counter-example-1}
\vspace{-0.1cm}
\end{figure}

In the following section, we investigate the robustness of GD+ scheme on more general attacks. 

\subsection{Robustness Under General Attacks}
\label{subsec:general_attacks}
Our goal here is to provide a simple control of the adversarial risk of GD+ under general norm-attack $\Vert\cdot\Vert$. For that purpose, define the \emph{condition number} $\kappa^{\|\cdot\|}(M)$ of any matrix $M$, w.r.t to the attacker's norm $\|\cdot\|$ as
\begin{align*}
    \kappa^{\Vert\cdot\Vert}(M)&:=\lambda_{max}^{\Vert\cdot\Vert}(M)/\lambda_{min}^{\Vert\cdot\Vert}(M),\,\text{where }\\
    \lambda_{max}^{\Vert\cdot\Vert}(M)&:=\sup_{w\neq 0}\frac{\Vert M w\Vert_2}{\Vert w\Vert_*},\text{ and}\\
    \lambda_{min}^{\Vert\cdot\Vert}(M)&:=\inf_{w\neq 0}\frac{\Vert M w\Vert_2}{\Vert w\Vert_*}
\end{align*}
where dual norm $\Vert \cdot \Vert_*$ is defined with respect to the attacker-norm $\|\cdot\|$. We are now ready to show a general upper-bound of our modified GD scheme.
\begin{restatable}{prop}{geneupper}
\label{prop:gene-upper}
For any $r\geq 0$, and invertible matrix $M\in\mathcal{M}_d(\mathbb{R})$, it holds that
$$\dfrac{\inf_{t \geq 0}E^{\Vert\cdot\Vert}(M^{\top}w^{M}(t,w_0),w_0,r)}{E^{\Vert\cdot\Vert}_{opt}(r,w_0)}\leq \beta \kappa^{\Vert\cdot\Vert}\left((M^{\top})^{-1}\right)^2.
     $$
\end{restatable}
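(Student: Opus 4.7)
\emph{Plan.} The plan is to reduce the problem to the Mahalanobis case already handled by Proposition~\ref{prop:GDPlusIsGood}, using the condition number $\kappa := \kappa^{\|\cdot\|}(M^{-\top})$ to absorb the mismatch between the attacker's norm $\|\cdot\|$ and a suitable Mahalanobis norm.

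\emph{Step 1 (the relevant iterate depends on $M$ only through $M^\top M$).} Set $B := M^\top M$ and $\tilde w(t) := M^\top w^M(t,w_0)$. Using the identity $M^\top (M\Sigma M^\top)^k M^{-\top} = (M^\top M\,\Sigma)^k$ (trivial induction on $k$), applied termwise in the series expansion of the matrix exponential, one computes $\tilde w(t) = (I_d - e^{-tB\Sigma})w_0$. The same identity applied with $M$ replaced by the symmetric square root $B^{1/2}$ shows $B^{1/2}w^{B^{1/2}}(t,w_0) = (I_d - e^{-tB\Sigma})w_0$ as well. Thus the predictor we need to control coincides with the iterate of GD+ run with $B^{1/2}$, so Proposition~\ref{prop:GDPlusIsGood} applies to it directly.

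\emph{Step 2 (dual-norm equivalence translates to primal inequalities).} By definition of $\lambda_{\min}^{\|\cdot\|}(M^{-\top})$ and $\lambda_{\max}^{\|\cdot\|}(M^{-\top})$ and the identity $\|M^{-\top}u\|_2 = \|u\|_{B^{-1}}$, one has $\lambda_{\min}^{\|\cdot\|}(M^{-\top})\|u\|_\star \le \|u\|_{B^{-1}} \le \lambda_{\max}^{\|\cdot\|}(M^{-\top})\|u\|_\star$ for every $u \in \mathbb R^d$. Taking suprema of $u^\top \delta$ over the resulting unit balls converts this dual-side inequality into the primal-side counterpart
\begin{align*}
\lambda_{\min}^{\|\cdot\|}(M^{-\top})\,\|\delta\|_B \le \|\delta\| \le \lambda_{\max}^{\|\cdot\|}(M^{-\top})\,\|\delta\|_B,
\end{align*}
which in turn yields the ball inclusions $B^d_{\|\cdot\|}(r) \subseteq B^d_{\|\cdot\|_B}(r/\lambda_{\min}^{\|\cdot\|}(M^{-\top}))$ and $B^d_{\|\cdot\|_B}(r') \subseteq B^d_{\|\cdot\|}(\lambda_{\max}^{\|\cdot\|}(M^{-\top})\,r')$. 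Since enlarging the attacker's ball can only inflate the sup in the definition of $E$, these translate into the two comparisons
\begin{align*}
E^{\|\cdot\|}(w,w_0,r) &\le E^{\|\cdot\|_B}\!\bigl(w,w_0,\,r/\lambda_{\min}^{\|\cdot\|}(M^{-\top})\bigr),\\
E^{\|\cdot\|_B}(w,w_0,r') &\le E^{\|\cdot\|}\!\bigl(w,w_0,\,\lambda_{\max}^{\|\cdot\|}(M^{-\top})\,r'\bigr).
\end{align*}

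\emph{Step 3 (chain).} Applying the first comparison of Step~2 to $\tilde w(t)$ and then Proposition~\ref{prop:GDPlusIsGood} at the Mahalanobis budget $r/\lambda_{\min}^{\|\cdot\|}(M^{-\top})$ gives
$\inf_t E^{\|\cdot\|}(\tilde w(t),w_0,r) \le \beta\,E^{\|\cdot\|_B}_{opt}(w_0, r/\lambda_{\min}^{\|\cdot\|}(M^{-\top}))$.
For the remaining term, the second comparison of Step~2 (taking $r' = r/\lambda_{\min}^{\|\cdot\|}(M^{-\top})$) gives $E^{\|\cdot\|_B}(w,w_0,r/\lambda_{\min}^{\|\cdot\|}(M^{-\top})) \le E^{\|\cdot\|}(w,w_0,\kappa r)$ for every $w$, and the elementary scaling $E^{\|\cdot\|}(w,w_0,cr) \le c^2 E^{\|\cdot\|}(w,w_0,r)$ for $c \ge 1$ (immediate from the closed form in Lemma~\ref{lm:fench}) applied with $c = \kappa \ge 1$ yields $E^{\|\cdot\|_B}_{opt}(w_0, r/\lambda_{\min}^{\|\cdot\|}(M^{-\top})) \le \kappa^2 E^{\|\cdot\|}_{opt}(w_0,r)$ after taking infima. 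Chaining gives the claim.

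\emph{Main obstacle.} The key observation is Step~1: pre-multiplying the GD+ iterate by $M^\top$ makes it depend on $M$ only through $B = M^\top M$, so Proposition~\ref{prop:GDPlusIsGood} applies \emph{directly}, without routing through the proxy $\widetilde E$ (which would inflate the final constant beyond the clean $\beta$). Apart from that, the only nontrivial piece is the polarity step in Step~2 that converts dual-norm equivalence into primal-norm equivalence with reciprocated constants; the remainder is bookkeeping of scaling factors in $r$.
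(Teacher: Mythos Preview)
Your proof is correct and follows essentially the same route as the paper's: sandwich the attacker's dual norm between multiples of $\|(M^\top)^{-1}\cdot\|_2=\|\cdot\|_{B^{-1}}$, invoke Proposition~\ref{prop:GDPlusIsGood} at the rescaled budget, and finish with the scaling Lemma~\ref{lem-trivial-rescale}. Your Step~1 observation that $M^\top w^M(t,w_0)=(I_d-e^{-tB\Sigma})w_0=B^{1/2}w^{B^{1/2}}(t,w_0)$ is a clean addition: it lets you apply the \emph{statement} of Proposition~\ref{prop:GDPlusIsGood} directly, whereas the paper implicitly relies on the intermediate result (established inside the proof of Proposition~\ref{prop:GDPlusIsGood}) that $M^\top w^M(t,w_0)$ is $\beta$-optimal for attacks in the norm $\|\cdot\|^M:=\|M\cdot\|_2$, which of course coincides with $\|\cdot\|_B$.
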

Therefore along the path of GD+ induced by $M$, one can find a $\beta \kappa^{\Vert\cdot\Vert}((M^{\top})^{-1})^2$-optimally robust predictor against $\Vert\cdot\Vert$-attack. In particular, observe that when $M=B^{1/2}$, and $\Vert\cdot\Vert=\Vert\cdot\Vert_{B}$, we obtain that $\kappa^{\Vert\cdot\Vert}((M^{\top})^{-1})=1$ and we recover as a special case the result of Proposition~\ref{prop:GDPlusIsGood}.

\begin{restatable}{rmk}{}
It is important to notice that $M$ can be chosen arbitrarily, and therefore adapted to the norm of the attacks such that $M\to \kappa^{\Vert\cdot\Vert}((M^{\top})^{-1})$ is minimized. However, minimizing this quantity in general is hard due to the arbitrary choice of the norm $\Vert\cdot\Vert$. 
\end{restatable}

In the next section, we study a specific case of GD+ and provide a sufficient condition on the model $w_0$ such that this scheme is near-optimal under general norm-attacks.

\subsection{A Sufficient Condition for Optimality}
\label{subsec:sufficientGD}
We consider the general case of an arbitrary norm-attack $\|\cdot\|$ with dual $\|\cdot\|_\star$ and we focus on a very specific path induced GD+, which is  when $M=\Sigma^{-1/2}$. In that case, data are normalized and the path drawn by $(M w^{M}(t,w_0))_{t\geq 0}$ is in fact a uniform shrinkage of the generative model. More precisely, the predictors obtained along such a path are exactly the one in the chord $[0,w_0]:=\left\{\gamma w_0 \mid \gamma\in[0,1]\right\}$. In particular, the optimal adversarial risk achieved by this modified GD scheme is given by
\begin{eqnarray}
\begin{split}
    &\inf_{t\geq 0}E^{\Vert\cdot\Vert}(\Sigma^{-1/2} w^{\Sigma^{-1/2}}(t,w_0),w_0,r)\\&\quad\quad=\inf_{\gamma\in[0,1]} E^{\Vert\cdot\Vert}(\gamma w_0,w_0,r)
    \end{split}
 \end{eqnarray}

Let $g(w_0) \in \mathbb R^d$ be a subgradient of $\Vert\cdot\Vert_\star$ at $w_0$. For example, in the case of $\ell_\infty$-norm-attacks, one may take $g(w_0) = (\mbox{\sign}(w_{0,1}),\ldots,\mbox{sign}(w_{0,d}))$, with $\mbox{sign}(0) := 0$.
In the case of of a Mahalanobis attack where $\|\cdot\| = \|\cdot\|_B$ for some positive-definite matrix $B$, one can take $g(w_0) =B^{1/2}w_0/\|w_0\|_B$ with $g(0) = 0$. 
We can now state our sufficient condition for near-optimality of GD+.
\begin{restatable}{cond}{}
The subgradient $g(w_0) \in \mathbb R^d $ can be chosen such that  
\begin{align*}
    \dfrac{\|g(w_0)\| \|w_0\|_*}{\|g(w_0)\|_{\Sigma^{-1}} \|w_0\|_{\Sigma}}\geq c,
\end{align*}
 where   $c$ is a positive absolute constant. 
\label{cond:dense}
\end{restatable}

The above condition is sufficient in order to obtain near-optimality of GD+ as we show in the next proposition~(see Figure~\ref{fig:sufficient} for an illustration).
\begin{restatable}{thm}{GDrocksdeterministic}
Suppose Condition \ref{cond:dense} is in order. Then, for any positive $r$, it holds for $M=\Sigma^{-1}$ that
\begin{eqnarray*}
\begin{split}
\dfrac{\inf_{t \geq 0}E^{\Vert\cdot\Vert}(M^{\top}w^{M}(t,w_0),w_0,r)}{E^{\Vert\cdot\Vert}_{opt}(r,w_0)}\le (1 \lor 1/c^2)\alpha.
\end{split}
\end{eqnarray*}
\label{thm:GDrocksdeterministic}
\end{restatable}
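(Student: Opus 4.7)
The plan is to exploit the fact that, while the curve $\{M^\top w^M(t,w_0):t\ge 0\}$ traced by GD+ with $M=\Sigma^{-1}$ is not literally the chord $[0,w_0]$, it still passes through $w=0$ at $t=0$ and converges to $w=w_0$ as $t\to\infty$, which is all I will need. Indeed, with $M=\Sigma^{-1}$ symmetric, formula~(\ref{eq:GD+}) gives $M\Sigma M^\top=\Sigma^{-1}$ and $(M^{-1})^\top=\Sigma$, and the commutation of $\Sigma$ with every function of $\Sigma^{-1}$ yields
$$M^\top w^M(t,w_0)=\Sigma^{-1}(I_d-e^{-t\Sigma^{-1}})\Sigma\,w_0=(I_d-e^{-t\Sigma^{-1}})w_0.$$
At these two extreme values of $t$ the cross-term in Lemma~\ref{lm:fench} vanishes, so the adversarial risk coincides with its proxy, giving $E^{\|\cdot\|}(0,w_0,r)=\|w_0\|_\Sigma^2$ and $\lim_{t\to\infty}E^{\|\cdot\|}(M^\top w^M(t,w_0),w_0,r)=r^2\|w_0\|_\star^2$. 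Consequently,
$$\inf_{t\ge 0}E^{\|\cdot\|}(M^\top w^M(t,w_0),w_0,r)\ \le\ \min(\|w_0\|_\Sigma^2,\,r^2\|w_0\|_\star^2).$$

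The next step is to lower-bound $E_{opt}^{\|\cdot\|}(w_0,r)$ by this same quantity, up to the prescribed multiplicative constant. I would first bound the proxy optimum $\widetilde E_{opt}^{\|\cdot\|}$ from below. Let $g:=g(w_0)\in\partial\|\cdot\|_\star(w_0)$ be the subgradient appearing in Condition~\ref{cond:dense}; it satisfies $g^\top w_0=\|w_0\|_\star$ and $\|g\|\le 1$. The dual-norm inequality gives $g^\top w\le \|g\|\,\|w\|_\star$, while Cauchy--Schwarz in the $\Sigma$-inner product gives $|g^\top(w_0-w)|\le \|g\|_{\Sigma^{-1}}\|w-w_0\|_\Sigma$. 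Adding these two bounds and using $g^\top w_0=\|w_0\|_\star$ produces the master inequality
$$\|g\|_{\Sigma^{-1}}\|w-w_0\|_\Sigma+\|g\|\,\|w\|_\star \ge \|w_0\|_\star,\qquad\forall w\in\mathbb R^d.$$
Rescaling by $\lambda:=\min(1/\|g\|_{\Sigma^{-1}},\,r/\|g\|)$ makes the left-hand side dominated by $\|w-w_0\|_\Sigma+r\|w\|_\star$, so taking the infimum over $w$ yields $\sqrt{\widetilde E_{opt}^{\|\cdot\|}(w_0,r)}\ge \min(\|w_0\|_\star/\|g\|_{\Sigma^{-1}},\,r\|w_0\|_\star/\|g\|)$.

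To finish, I bound the first entry below by $c\|w_0\|_\Sigma$ using Condition~\ref{cond:dense} together with $\|g\|\le 1$, and the second entry below by $r\|w_0\|_\star$ using $\|g\|\le 1$. A short case analysis on the relative sizes of $r\|w_0\|_\star$ and $\|w_0\|_\Sigma$ then shows $\sqrt{\widetilde E_{opt}^{\|\cdot\|}(w_0,r)}\ge (1\land c)\min(\|w_0\|_\Sigma,\,r\|w_0\|_\star)$, equivalently $\min(\|w_0\|_\Sigma^2,r^2\|w_0\|_\star^2)\le(1\lor 1/c^2)\widetilde E_{opt}^{\|\cdot\|}(w_0,r)$. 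Combining this with Lemma~\ref{lm:sandwich} in the form $\widetilde E_{opt}^{\|\cdot\|}\le\alpha E_{opt}^{\|\cdot\|}$, and then with the upper bound from the first paragraph, delivers the claimed ratio $(1\lor 1/c^2)\alpha$. The main obstacle is the proxy lower bound: one must pick the right witness (the subgradient $g$) and combine the Mahalanobis and dual-norm inequalities in just the right proportions so that Condition~\ref{cond:dense} activates; the subtle point is verifying that the two entries of the $\min$ line up term by term with the two branches of the chord upper bound regardless of whether $r\|w_0\|_\star$ or $\|w_0\|_\Sigma$ dominates, which is exactly what makes the constant $(1\lor 1/c^2)$ tight.
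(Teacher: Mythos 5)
Your proof is correct, and while it shares the paper's overall skeleton---upper-bound the path infimum by $\min(\|w_0\|_\Sigma^2,\,r^2\|w_0\|_\star^2)$ using the two endpoints, then lower-bound the optimal risk via the subgradient $g(w_0)$ acting as a dual certificate, and finally invoke Condition \ref{cond:dense} and Lemma \ref{lm:sandwich}---your derivation of the key lower bound is genuinely more elementary than the paper's. The paper (Proposition \ref{prop-lower-bound-adv-gene} in the appendix) first rewrites $\inf_w \|w-w_0\|_\Sigma + r\|w\|_\star$ as the support function of the intersection $B_2^d \cap rB^d_{\|\Sigma^{1/2}\cdot\|}$ via the infimal-convolution identity of Lemma \ref{lm:intsupportfunc}, performs a change of variables, and then exhibits the feasible point $z(w_0)=(r\wedge\lambda)\Sigma^{-1/2}g(w_0)$ with $\lambda=\|g(w_0)\|/\|g(w_0)\|_{\Sigma^{-1}}$. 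You reach the same certificate bound with no convex-duality machinery at all: adding the Cauchy--Schwarz inequality $g^\top(w_0-w)\le\|g\|_{\Sigma^{-1}}\|w-w_0\|_\Sigma$ to the dual-norm inequality $g^\top w\le\|g\|\,\|w\|_\star$ and rescaling by $\min(1/\|g\|_{\Sigma^{-1}},\,r/\|g\|)$ is exactly the primal verification that the paper's $z(w_0)$ is a valid dual witness, and the two resulting lower bounds coincide (note $\|g(w_0)\|=1$ whenever $w_0\ne 0$). A further point in your favor: the theorem as stated takes $M=\Sigma^{-1}$, for which the path $(I_d-e^{-t\Sigma^{-1}})w_0$ is a \emph{non-uniform} shrinkage rather than the chord $[0,w_0]$; the paper's proof silently argues via the chord (the $M=\Sigma^{-1/2}$ path discussed in Section \ref{subsec:sufficientGD}), whereas your endpoint-plus-continuity argument covers the statement exactly as written, since only the values at $t=0$ and $t\to\infty$ are needed.
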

In particular, for the case of $\ell_\infty$-norm-attacks, we have the following corollary.
\begin{restatable}{cor}{}
Consider the case where $\ell_\infty$-norm-attacks. If there exists an absolute constant $c>0$ such that $\|w_0\|_1 \ge c \sqrt{d}\|w_0\|_2$, then with $M=\Sigma^{-1}$ it holds that 
\begin{eqnarray*}
\begin{split}
\dfrac{\inf_{t \geq 0}E^{\Vert\cdot\Vert_\infty}(M^{\top}w^{M}(t,w_0),w_0,r)}{E^{\Vert\cdot\Vert_\infty}_{opt}(r,w_0)}  \le \left(1 \lor \frac{\kappa^{\Vert\cdot\Vert_2}(\Sigma)}{c^2}\right)\alpha.
\end{split}
\end{eqnarray*}
\label{coro-sufficient}
\end{restatable}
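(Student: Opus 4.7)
The plan is to verify Condition \ref{cond:dense} for $\ell_\infty$-attacks under the density hypothesis $\|w_0\|_1 \ge c\sqrt{d}\|w_0\|_2$, then invoke Theorem \ref{thm:GDrocksdeterministic} as a black box. Since $\|\cdot\| = \|\cdot\|_\infty$, the dual norm is $\|\cdot\|_* = \|\cdot\|_1$, and the canonical subgradient of $\|\cdot\|_*$ at $w_0$ highlighted just above Condition \ref{cond:dense} is the sign-vector $g(w_0) := (\mathrm{sign}(w_{0,1}),\ldots,\mathrm{sign}(w_{0,d}))$. Assuming $w_0 \ne 0$ (the $w_0=0$ case being vacuous), every coordinate of $g(w_0)$ lies in $\{-1,0,1\}$ with at least one nonzero, so $\|g(w_0)\|_\infty = 1$ and $\|g(w_0)\|_2^2 = \|w_0\|_0 \le d$.

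Next I would bound the four quantities entering Condition \ref{cond:dense}. The numerator equals $\|g(w_0)\|_\infty \|w_0\|_* = \|w_0\|_1$. For the denominator, standard operator-norm inequalities give
\begin{align*}
\|g(w_0)\|_{\Sigma^{-1}}^2 &\le \lambda_{\min}(\Sigma)^{-1}\|g(w_0)\|_2^2 \le d/\lambda_{\min}(\Sigma),\\
\|w_0\|_\Sigma^2 &\le \lambda_{\max}(\Sigma)\,\|w_0\|_2^2,
\end{align*}
so that $\|g(w_0)\|_{\Sigma^{-1}}\,\|w_0\|_\Sigma \le \sqrt{d\,\kappa^{\|\cdot\|_2}(\Sigma)}\,\|w_0\|_2$, using the identification $\kappa^{\|\cdot\|_2}(\Sigma) = \lambda_{\max}(\Sigma)/\lambda_{\min}(\Sigma)$ recorded in Section \ref{subsec:general_attacks}.

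Combining these bounds with the density hypothesis yields
\[
\frac{\|g(w_0)\|\,\|w_0\|_*}{\|g(w_0)\|_{\Sigma^{-1}}\|w_0\|_\Sigma} \;\ge\; \frac{c\sqrt{d}\,\|w_0\|_2}{\sqrt{d\,\kappa^{\|\cdot\|_2}(\Sigma)}\,\|w_0\|_2} \;=\; \frac{c}{\sqrt{\kappa^{\|\cdot\|_2}(\Sigma)}},
\]
so Condition \ref{cond:dense} is verified with constant $c' := c/\sqrt{\kappa^{\|\cdot\|_2}(\Sigma)}$. Plugging into Theorem \ref{thm:GDrocksdeterministic} gives the multiplicative factor $(1 \lor 1/c'^2)\,\alpha = (1 \lor \kappa^{\|\cdot\|_2}(\Sigma)/c^2)\,\alpha$, which is exactly the claimed bound.

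The argument is essentially mechanical, and there is no real obstacle: the only substantive choice is to take the sign-vector as the $\ell_1$-subgradient, which trades an $\ell_\infty$-norm of $1$ against an $\ell_2$-norm of at most $\sqrt{d}$. This is precisely the factor $\sqrt{d}$ appearing in the density hypothesis, which is why the two match cleanly and the extra condition-number loss reduces to $\kappa^{\|\cdot\|_2}(\Sigma)/c^2$.
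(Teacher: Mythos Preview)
Your proof is correct and follows exactly the route the paper intends: the corollary is stated as an immediate consequence of Theorem~\ref{thm:GDrocksdeterministic}, and the paper explicitly singles out the sign-vector $g(w_0)=(\mathrm{sign}(w_{0,1}),\ldots,\mathrm{sign}(w_{0,d}))$ as the subgradient to use for $\ell_\infty$-attacks just before Condition~\ref{cond:dense}. The paper does not spell out the verification of Condition~\ref{cond:dense} in this case, but your computation---$\|g(w_0)\|_\infty=1$, $\|g(w_0)\|_2\le\sqrt d$, and the two eigenvalue bounds yielding $c'=c/\sqrt{\kappa^{\|\cdot\|_2}(\Sigma)}$---is precisely the missing detail, and plugging $c'$ into the theorem's bound $(1\lor 1/c'^2)\alpha$ gives the stated constant.
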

For example, when $w_0=(1,\ldots,1)$ --or equivalently, random $w_0 \sim N(0,I_d)$, one can take $c=1$, and observe that $\|w_0\|_1 \gtrsim d$, $\|w_0\|_2 \asymp \sqrt{d}$, and so the bound in Corollary \ref{coro-sufficient} holds.

\begin{figure}[t!]
\centering
\includegraphics[width=0.99\linewidth]{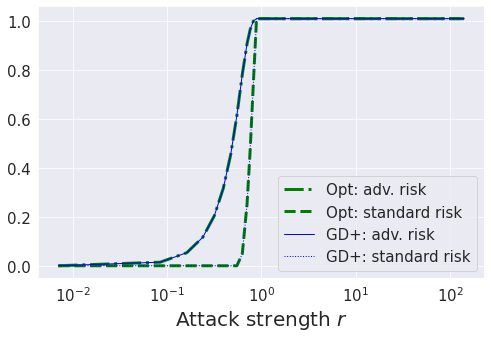}
\caption{We consider the case exhibited in the Corollary~\ref{coro-sufficient} when $d=2$, $\Sigma=I_d$ and $w_0=(1,1)$ under $\ell_{\infty}$-attacks. We plot the adversarial as well as the standard risks for both the optimal shrinkage predictor as well as the optimal predictor of the adversarial risk when varying $r$.}
\label{fig:sufficient}
\vspace{-0.1cm}
\end{figure}
The  Condition~\ref{cond:dense} that ensures optimality of GD+ in Theorem \ref{coro-sufficient} cannot be removed. Indeed, we exhibit a simple case where the uniform shrinkage strategy fails miserably to find robust models even when they exist. 
\begin{restatable}{prop}{bangbang}
Let $\Sigma = I_d$, then it is possible to construct $w_0 \in \mathbb R^d$ and $r > 0$ such that in the limit $d \to \infty$, it holds that $r \to 0,\,r\sqrt d \to +\infty$, and
\begin{align*}
 \dfrac{\inf_{t \geq 0}E^{\Vert\cdot\Vert_\infty}( w (t,w_0),w_0,r)}{E^{\Vert\cdot\Vert_\infty}_{opt}(r,w_0)} \to +\infty.
\end{align*}
\label{thm:bangbang}
\end{restatable}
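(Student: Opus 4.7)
Since $\Sigma = I_d$, the gradient flow~\eqref{eq:gdynamics} simplifies to $w(t,w_0) = (1-e^{-t})w_0$, so the entire GD path lies in the shrinkage segment $\{\gamma w_0 : \gamma \in [0,1]\}$, and by continuity
\begin{equation*}
\inf_{t\geq 0} E^{\|\cdot\|_\infty}(w(t,w_0),w_0,r) = \inf_{\gamma \in [0,1]} E^{\|\cdot\|_\infty}(\gamma w_0,w_0,r).
\end{equation*}
The plan is to hand-craft a $w_0$ for which every rescaling $\gamma w_0$ is a bad $\ell_\infty$-robust predictor, yet some non-uniform (essentially one-sparse) model is near-optimal. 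Lemma~\ref{lm:sandwich}, applied in both directions around the proxy $\widetilde E^{\|\cdot\|_\infty}$, will do the heavy lifting.

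Concretely, I would take $r := d^{-1/4}$ and $w_0 := e_1 + \epsilon \sum_{i=2}^d e_i$ with $\epsilon := d^{-2/3}$; then $r\to 0$ and $r\sqrt d = d^{1/4}\to\infty$ are immediate. Since $\|\cdot\|_\star = \|\cdot\|_1$ and $\Sigma = I_d$, the proxy along the shrinkage path reduces to
\begin{equation*}
\widetilde E^{\|\cdot\|_\infty}(\gamma w_0,w_0,r) = \bigl((1-\gamma)\|w_0\|_2 + \gamma r\|w_0\|_1\bigr)^2,
\end{equation*}
a squared affine function of $\gamma\in[0,1]$, hence minimized at an endpoint, with value $\min(\|w_0\|_2, r\|w_0\|_1)^2$. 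Straightforward arithmetic gives $\|w_0\|_2^2 = 1 + (d-1)d^{-4/3} \to 1$ and $r\|w_0\|_1 \asymp d^{-1/4}\cdot d^{1/3} = d^{1/12} \to \infty$, so the $\gamma=0$ endpoint wins and $\inf_\gamma \widetilde E^{\|\cdot\|_\infty}(\gamma w_0)\to 1$. Then the sandwich yields
\begin{equation*}
\inf_{\gamma\in[0,1]} E^{\|\cdot\|_\infty}(\gamma w_0,w_0,r) \;\geq\; \alpha^{-1}\inf_\gamma \widetilde E^{\|\cdot\|_\infty}(\gamma w_0,w_0,r) \;\geq\; \tfrac{1}{2\alpha}
\end{equation*}
for all sufficiently large $d$.

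For the denominator I would simply plug $e_1$ into the other side of the sandwich: $E^{\|\cdot\|_\infty}_{opt}(w_0,r) \leq \widetilde E^{\|\cdot\|_\infty}(e_1,w_0,r) = (\|e_1-w_0\|_2+r)^2$, and $\|e_1-w_0\|_2^2 = (d-1)\epsilon^2 \asymp d^{-1/3}$ dominates $r^2 = d^{-1/2}$, giving $E^{\|\cdot\|_\infty}_{opt} \lesssim d^{-1/3}$. Dividing produces the desired blow-up $\Omega(d^{1/3}) \to \infty$.

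The only nontrivial step is picking the scaling exponents $\epsilon = d^{-a}$, $r = d^{-b}$ so that four requirements hold simultaneously: (i)~$r\to 0$ (needs $b>0$), (ii)~$r\sqrt d \to\infty$ (needs $b<1/2$), (iii)~$r\|w_0\|_1\to\infty$ so that the shrinkage path is forced toward $\gamma=0$ while still paying $\Omega(1)$ (needs $a+b<1$ together with $a>1/2$), and (iv)~$\|e_1-w_0\|_2 \to 0$ so that $e_1$ crushes the optimum (needs $a>1/2$). The valid polytope $\{a>1/2,\;0<b<1/2,\;a+b<1\}$ is nonempty, and the concrete point $(a,b)=(2/3,1/4)$ above witnesses it. Nothing else in the argument presents any real obstacle.
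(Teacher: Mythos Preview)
Your argument is correct and follows the same blueprint as the paper: build a two-scale $w_0$ (a few ``large'' coordinates plus many ``small'' ones), show via Lemma~\ref{lm:sandwich} that every uniform shrinkage $\gamma w_0$ pays $\Omega(1)$ adversarial risk, while a sparse competitor that keeps only the large part achieves $o(1)$. The paper uses $\sqrt d$ large coordinates of size $1$ and $d-\sqrt d$ small ones of size $\frac{1}{d^{1/4}\log d}$ (with $r$ also equal to that value), obtaining a ratio $\asymp(\log d)^2$; your single-spike construction $w_0=e_1+d^{-2/3}\sum_{i\ge 2}e_i$ with $r=d^{-1/4}$ is simpler and in fact delivers the stronger rate $\Omega(d^{1/3})$. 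You are also slightly more careful than the paper in invoking the sandwich lemma to justify why the shrinkage infimum equals the endpoint minimum up to the factor $\alpha$, whereas the paper writes $E_{\mathrm{shrink}}=\min(\|w_0\|_2^2,\,r^2\|w_0\|_1^2)$ directly.
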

Therefore the uniform shrinkage strategy induced by GD+ (which here reduces to vanilla GD since $\Sigma=I_d$) is not adapted for all scenarios; it may fail to find a robust model even when one exists. 

In the next section, we restrict ourselves to the case of $\ell_{p}$-norm-attacks for $p\in[1,+\infty]$ and show that GD+ is able to reach optimal robustness as soon as the data has uncorrelated features.

\subsection{Optimality Under $\ell_{p}$-norm Attacks}
Now, let $p\in[1,+\infty]$ and consider attacks w.r.t the  
 $\ell_p$-norm, i.e the attack strength is measured w.r.t the norm $\Vert\cdot\Vert=\Vert\cdot\Vert_{p}$, with dual norm $\Vert\cdot\Vert_\star=\Vert\cdot\Vert_{q}$, where $q \in [1,+\infty]$ is the harmonic conjugate of $p$. Popular examples in the literature are $p=2$ (corresponding to Euclidean attacks, considered in Section \ref{subsec:euclidean}) and $p=\infty$. In this section we assume that $\Sigma$ is a diagonal positive-definite matrix. This assumption translates the fact that, both norm $\Vert\cdot\Vert_{\Sigma}$ and $\Vert\cdot\Vert_q$ act on the same coordinates system. When these two norms are aligned, we show in the next theorem that the the minimiser of the proxy introduced in Eq.~(\ref{eq:Etilde}) is in fact a non-uniform shrinkage of $w_0$ which can be recovered by GD+. An illustration of the result is provided in Figure~\ref{fig:l1-attack-gene}.
\begin{restatable}{thm}{lpoptimal}
\label{thm-lp-optimal}
Let $\Sigma$ be any definite positive diagonal matrix and $p\in [1,+\infty]$, then we have
\begin{align*}
    \inf_{M\in\mathcal{M}_d(\mathbb{R}),t\geq 0} \frac{E^{\Vert\cdot\Vert_p}(M^{\top}w^{M}(t,w_0),w_0,r)}{E^{\Vert\cdot\Vert_p}_{opt}(r,w_0)}\leq \alpha.
\end{align*}
\end{restatable}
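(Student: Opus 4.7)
The key observation is that when $\Sigma$ is diagonal, the proxy $\widetilde E^{\|\cdot\|_p}$ from Lemma~\ref{lm:sandwich} is minimized by a coordinate-wise shrinkage of $w_0$, and such a shrinkage is exactly what GD+ produces when $M$ is chosen diagonal. By Lemma~\ref{lm:sandwich} it suffices to exhibit some $w^\star$ in the closure of the GD+ path $\{M^\top w^M(t,w_0) : M \in \mathcal{M}_d(\mathbb R),\, t \ge 0\}$ satisfying $\widetilde E^{\|\cdot\|_p}(w^\star,w_0,r) \le \widetilde E^{\|\cdot\|_p}_{opt}(w_0,r)$, since the sandwich inequality then chains into
\begin{equation*}
E^{\|\cdot\|_p}(w^\star,w_0,r) \le \widetilde E^{\|\cdot\|_p}(w^\star,w_0,r) \le \widetilde E^{\|\cdot\|_p}_{opt}(w_0,r) \le \alpha\, E^{\|\cdot\|_p}_{opt}(w_0,r).
\end{equation*}
The task therefore reduces to minimizing the convex coercive objective $f(w) := \|w-w_0\|_\Sigma + r\|w\|_q$, where $q$ is the harmonic conjugate of $p$.

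\textbf{Step 1: Diagonal GD+ sweeps all product shrinkages.} For $M = \mathrm{diag}(m_1, \ldots, m_d)$ and $\Sigma = \mathrm{diag}(\sigma_1, \ldots, \sigma_d)$, a direct computation from \eqref{eq:GD+} gives
\begin{equation*}
M^\top w^M(t,w_0) \;=\; \bigl((1-e^{-t m_i^2 \sigma_i})\, w_{0,i}\bigr)_{i \in [d]}.
\end{equation*}
Taking $t = 1$ and $m_i^2 = -\log(1-\gamma_i)/\sigma_i$ realizes any shrinkage $(\gamma_1 w_{0,1}, \ldots, \gamma_d w_{0,d})$ with $\gamma_i \in [0,1)$, and the closure of this set includes every $\gamma_i \in [0,1]$ via the limit $t \to \infty$. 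It thus suffices to produce a minimizer of $f$ in this product form.

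\textbf{Step 2: Coordinate-wise shrinkage is optimal for $f$.} The diagonal structure of $\Sigma$, together with the sign-symmetry of $\|\cdot\|_p$ and $\|\cdot\|_q$, implies that for any diagonal matrix $D$ with $D_{ii} \in \{\pm 1\}$, the map $(w,w_0) \mapsto (Dw, Dw_0)$ leaves $f$ invariant. I therefore assume without loss of generality that $w_{0,i} \ge 0$ for all $i$. Since $f$ is convex, continuous and coercive, it attains its minimum at some $w^\star$. Fix a coordinate $j$ and freeze the other coordinates at $w^\star$; the one-variable slice $w_j \mapsto f$ is convex, and a direct sign analysis of its (sub)derivative $\sigma_j(w_j-w_{0,j})/\|w-w_0\|_\Sigma + r\, |w_j|^{q-1}\mathrm{sign}(w_j)/\|w\|_q^{q-1}$ yields: on $(-\infty,0)$ both summands are strictly negative, so the slice strictly decreases; on $(w_{0,j},+\infty)$ both are strictly positive, so the slice strictly increases. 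Hence $w^\star_j \in [0, w_{0,j}]$, i.e.\ $w^\star_j = \gamma_j^\star w_{0,j}$ for some $\gamma_j^\star \in [0,1]$. Combined with Step~1, this yields the claimed bound.

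\textbf{Main obstacle.} The non-trivial ingredient is the sign analysis in Step~2. The only technical subtlety is the non-smoothness of $\|\cdot\|_q$ at coordinate hyperplanes when $q \in \{1,\infty\}$, which forces a subgradient-based version of the monotonicity argument, but the one-sided inequalities go through unchanged. Everything else (the sandwich reduction and the closed form of diagonal GD+) is a direct unpacking of previously established identities.
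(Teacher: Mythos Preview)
Your proof is correct and follows the same high-level architecture as the paper's: reduce via Lemma~\ref{lm:sandwich} to the proxy $\widetilde E^{\|\cdot\|_p}$, show its minimizer is a coordinate-wise shrinkage of $w_0$, and observe that diagonal GD+ sweeps out all such shrinkages in the limit. The difference lies in how the shrinkage structure of the minimizer is established. The paper routes through proximal-operator machinery: it first writes any minimizer of $\|w-w_0\|_\Sigma + r\|w\|_q$ as $\mathrm{prox}_{\Sigma,t\|\cdot\|_q}(w_0)$ for some $t$, then invokes Moreau decomposition to rewrite this in terms of a projection onto an $\ell_p$-type ball, and finally uses a Kolmogorov variational-inequality argument to show the projection is coordinate-wise dominated by its input. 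Your Step~2 replaces all of this with a one-variable monotonicity argument on each coordinate slice, which is more elementary and also handles all $q\in[1,\infty]$ uniformly (the paper's Lemma~\ref{lm:order-prox} is stated only for $q\in[1,2]$ and needs the separate Lemma~\ref{lm:kolmogorov} to cover the rest). One small point worth tightening in Step~1: your parametrization $m_i^2=-\log(1-\gamma_i)/\sigma_i$ with $t=1$ only reaches $\gamma_i\in(0,1)$ since $M$ must be invertible; to hit $\gamma_i=0$ you need $m_i\to 0$ (not $t\to\infty$), so the closure argument should mention both limits, exactly as the paper does in its construction of the sequence $(\sigma_{i,m})_m$.
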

Therefore GD+ is able to reach near-optimality in term of adversarial risk, and so for any $\ell_p$-attacks with $p\in[1,+\infty]$ as soon as $\Vert\cdot\Vert_p$ and $\Vert\cdot\Vert_{\Sigma}$ are aligned. 

\begin{figure}[h!]
\vspace{-0.1cm}
\centering
\includegraphics[width=0.99\linewidth]{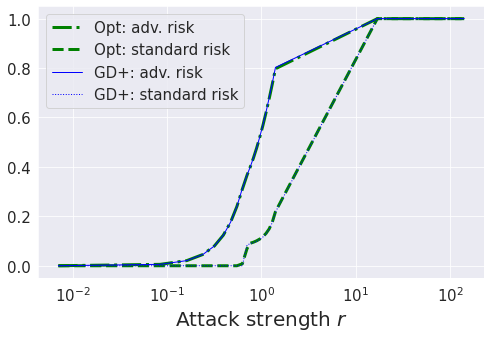}
\caption{We consider the case where $d=2$, $\Sigma$ is diagonal and  $w_0$ and the diagonal coefficients of $\Sigma$ are sampled according to a Gaussian and an exponential distribution respectively. The choice of $M$ is obtained by fine-tuning the GD+. We compare the adversarial and standard risks under $\ell_{\infty}$ of the optimal predictor in the GD+ path with those of the optimally robust predictor when varying the radius $r$.}
\label{fig:l1-attack-gene}
\vspace{-0.1cm}
\end{figure}

Applying a GD+ scheme in practice might be difficult for general attacks, as the choice of the transformation $M$ must be adapted accordingly. To alleviate this issue, we propose in the next section a simple and tractable two-stage estimator able to reach near-optimality and so for general attack.

\section{Efficient Algorithms for Attacks in General Norms}
\label{sec:algorithmic}
We propose a simple tractable estimator whose adversarial risk is optimal up to within a multiplicative constant of 1.1124. Here, we drop the assumption of infinite training data $n=\infty$. Thus, the estimators are functions of the finite-training dataset $D_n:=\{(x_1,y_1),\ldots,(x_n,y_n)\}$, generated according to \eqref{eq:generative}.

\subsection{A Two-Stage Estimator and its Statistical Analysis}
Consider any vector $\widehat w$ which minimizes the adversarial risk proxy $w \mapsto \widetilde E^{\Vert\cdot\Vert}(w,w_0,r)$ defined in \eqref{eq:Etilde}. Note apart from its clear dependence on the generative model $w_0$, $\widetilde E^{\Vert\cdot\Vert}$ also depends on the feature covariance matrix $\Sigma$. However, we don't assume that $w_0$ nor $\Sigma$ are known before hand; it has to be estimated from the finite training dataset $\mathcal D_n$.  Thus, we propose a two-stage estimator  described below in Algorithm \ref{alg:twostage}.
\begin{algorithm}
\caption{Proposed Two-Stage Estimator.}
\label{alg:twostage}
\begin{algorithmic}[1]
    \State \textbf{Stage 1:} Compute consistent estimators $\widehat w_0$ and $\widehat \Sigma$ from $w_0$ and $\Sigma$ respectively, from the data $\mathcal D_n$.
    \State \textbf{Stage 2:} Compute $\widehat w$ which minimizes the adversarial risk proxy $w \mapsto \widetilde{E}^{\Vert\cdot\Vert}(w,w_0,r)$ defined in \eqref{eq:Etilde}. See Algorithms \ref{alg:pd} and \ref{alg:st} for implementations of this step.
\end{algorithmic}
\end{algorithm}

\textbf{Stage 1} of Algorithm \ref{alg:twostage} can be implemented using off-the-shelf estimators which adapt to the structural assumptions on $\Sigma$ and $w_0$ and $\Sigma$ (sparsity, etc.). Later, we will provide simple tractable algorithms for implementing \textbf{Stage 2}. Note that \textbf{Stage 2} implicitly requires the knowledge of the attacker-norm as it aims at minimizing $\widetilde{E}^{\Vert\cdot\Vert}$.

\begin{figure}
    \centering
    \includegraphics[width=.48\textwidth]{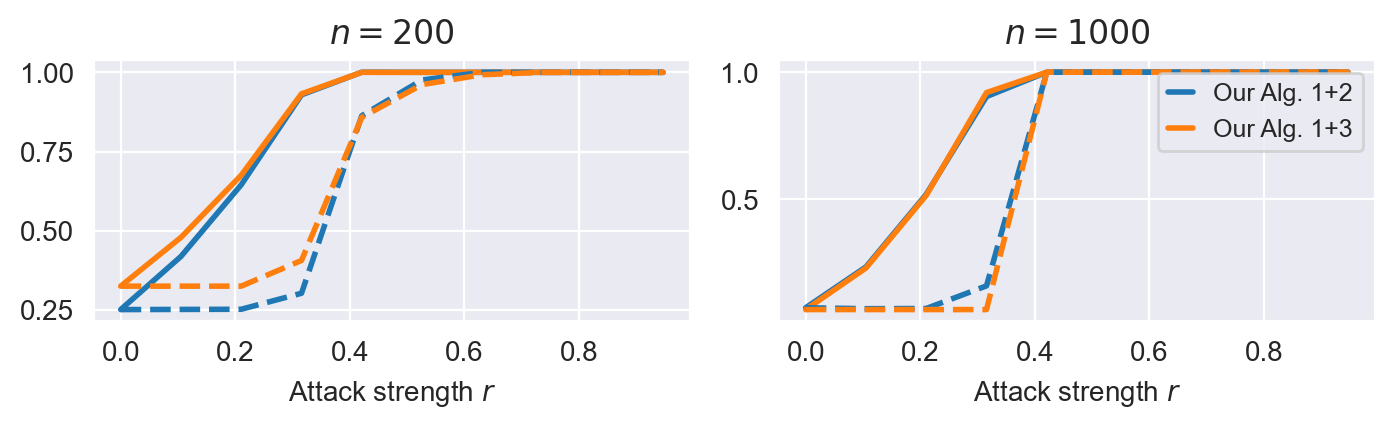}
    \caption{Experiments with our two-stage estimator Algorithm \ref{alg:twostage}. Here, we focus on  $\ell_\infty$-norm-attacks. "1+2" means \textbf{Stage 2} of our Algorithm \ref{alg:twostage} is computed via the primal-dual algorithm (Algorithm \ref{alg:pd}), while "1+3" means it is computed via the Algorithm \ref{alg:st}. The experimental setting here is: input-dimension $d=200$; covariance matrix $\Sigma$ of the features = heterogeneous diagonal, with entries from $\mathrm{Exp}(1)$ distribution; the generative $w_0$ is $s$-sparse vector (with $s=10$), normalized so that $\|w_0\|_\Sigma^2 = 1$.
    Notice how the adversarial risk of the estimator improves with number of samples $n$, as expected. See supplemental for details.
    }
    \label{fig:algos}
\end{figure}

\subsection{Consistency of Proposed Two-Stage Estimator}
We now establish the consistency of our proposed two-stage estimator $\widehat w$ computed by Algorithm \ref{alg:twostage}.
Let $\widehat \Sigma$ be an operator-norm consistent data-driven minimax estimator for $\Sigma$, and 
let $\widehat w_0$ be a consistent estimator of the generative model $w_0$.
Define error terms
\begin{eqnarray}
e_1 := 
\|\widehat w_0-w_0\|_2,\, e_2 := 
\|\widehat \Sigma-\Sigma\|_{op}.
\end{eqnarray} 
We are now ready to state the adversarial risk-consistency result for our proposed two-stage estimator (Algorithm \ref{alg:twostage}).
\begin{restatable}{thm}{stats}
\label{thm:stats}
For all $r \ge 0$, it holds that
\begin{align*}
\widetilde E_{opt}^{\Vert\cdot\Vert}(w_0,r) &\le \widetilde E^{\Vert\cdot\Vert}(\widehat w,w_0,r) \le \widetilde E_{opt}^{\Vert\cdot\Vert}(w_0,r) + \Delta,\\
E_{opt}^{\Vert\cdot\Vert}(w_0,r) &\le E^{\Vert\cdot\Vert}(\widehat w,w_0,r) \le \alpha E_{opt}^{\Vert\cdot\Vert}(w_0,r)+\Delta,
\end{align*}
where $\alpha := 2/(1+c_0) \approx 1.1124$, $\Delta = O(e_1^2 + e_2^2)$, where the hidden constant in the big-O is of order $\max(\|\Sigma\|_{op}^2, \|w_0\|_\Sigma^2)$.
\end{restatable}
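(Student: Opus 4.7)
The lower bounds $\widetilde E_{opt}^{\Vert\cdot\Vert}(w_0,r) \le \widetilde E^{\Vert\cdot\Vert}(\widehat w,w_0,r)$ and $E_{opt}^{\Vert\cdot\Vert}(w_0,r) \le E^{\Vert\cdot\Vert}(\widehat w,w_0,r)$ are immediate from the definitions of the respective infima, so the task reduces to the two upper bounds. The plan is a plug-in perturbation argument centered on the proxy. Introduce the \emph{plug-in proxy}
$$\widehat E(w) := (\|w-\widehat w_0\|_{\widehat\Sigma} + r\|w\|_\star)^2,$$
which by construction is the objective minimized by $\widehat w$ in Stage 2 of Algorithm~\ref{alg:twostage}. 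Letting $w^\star \in \arg\min_w \widetilde E^{\Vert\cdot\Vert}(w,w_0,r)$, the optimality $\widehat E(\widehat w) \le \widehat E(w^\star)$ yields
\begin{align*}
\widetilde E^{\Vert\cdot\Vert}(\widehat w,w_0,r) - \widetilde E^{\Vert\cdot\Vert}_{opt}(w_0,r)
&\le \bigl|\widetilde E^{\Vert\cdot\Vert}(\widehat w,w_0,r) - \widehat E(\widehat w)\bigr|\\
&\quad + \bigl|\widehat E(w^\star) - \widetilde E^{\Vert\cdot\Vert}(w^\star,w_0,r)\bigr|,
\end{align*}
so everything reduces to a pointwise bound on $|\widetilde E^{\Vert\cdot\Vert} - \widehat E|$ at the two points $w \in \{\widehat w, w^\star\}$.

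To bound $|\widetilde E^{\Vert\cdot\Vert}(w,w_0,r) - \widehat E(w)|$ pointwise, write $h(w) := \|w-w_0\|_\Sigma + r\|w\|_\star$ and $\widehat h(w) := \|w-\widehat w_0\|_{\widehat\Sigma} + r\|w\|_\star$, and use $h^2 - \widehat h^2 = (h-\widehat h)(h+\widehat h)$ to reduce to bounding $|h(w) - \widehat h(w)| = \bigl|\|w-w_0\|_\Sigma - \|w-\widehat w_0\|_{\widehat\Sigma}\bigr|$. By the triangle inequality this splits into a target-perturbation piece, $\bigl|\|w-w_0\|_{\widehat\Sigma} - \|w-\widehat w_0\|_{\widehat\Sigma}\bigr| \le \|\widehat\Sigma\|_{op}^{1/2} e_1$, and a covariance-perturbation piece $\bigl|\|w-w_0\|_\Sigma - \|w-w_0\|_{\widehat\Sigma}\bigr|$. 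For the latter, the identity $\|u\|_\Sigma^2 - \|u\|_{\widehat\Sigma}^2 = u^\top(\Sigma-\widehat\Sigma)u$ gives $\bigl|\|u\|_\Sigma^2 - \|u\|_{\widehat\Sigma}^2\bigr| \le e_2\|u\|_2^2$ evaluated at $u = w - w_0$. Combining these with the a priori bounds obtained by comparing $w^\star,\widehat w$ to the trivial choice $w=0$, namely $h(w^\star) \le h(0) = \|w_0\|_\Sigma$ and $\widehat h(\widehat w) \le \widehat h(0) = \|\widehat w_0\|_{\widehat\Sigma}$ (which further bound $r\|w^\star\|_\star,\, r\|\widehat w\|_\star,\, \|w^\star-w_0\|_\Sigma$, and $\|\widehat w - \widehat w_0\|_{\widehat\Sigma}$ by $\|w_0\|_\Sigma$ up to perturbation), yields after routine algebraic manipulation the claimed bound $\Delta = O(e_1^2 + e_2^2)$ with hidden constant of order $\max(\|\Sigma\|_{op}^2, \|w_0\|_\Sigma^2)$.

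With the proxy bound in hand, the adversarial-risk chain follows from two applications of the sandwich Lemma~\ref{lm:sandwich}. Fixing a minimizer $w^\star_E$ of $E^{\Vert\cdot\Vert}(\cdot,w_0,r)$, one chains
\begin{align*}
E^{\Vert\cdot\Vert}(\widehat w,w_0,r)
&\le \widetilde E^{\Vert\cdot\Vert}(\widehat w,w_0,r)
\le \widetilde E^{\Vert\cdot\Vert}_{opt}(w_0,r) + \Delta\\
&\le \widetilde E^{\Vert\cdot\Vert}(w^\star_E,w_0,r) + \Delta\\
&\le \alpha\, E^{\Vert\cdot\Vert}(w^\star_E,w_0,r) + \Delta
= \alpha\, E^{\Vert\cdot\Vert}_{opt}(w_0,r) + \Delta,
\end{align*}
where the first inequality uses the lower half of Lemma~\ref{lm:sandwich} and the third line uses its upper half.

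The main obstacle is the covariance-perturbation piece. A naive route through $\|\Sigma^{1/2} - \widehat\Sigma^{1/2}\|_{op} \lesssim \sqrt{e_2}$ only gives a $\sqrt{e_2}$-rate, so instead I must square everything, factor $h(w)+\widehat h(w)$ out of $h(w)^2 - \widehat h(w)^2$, and reabsorb this sum via the a priori bounds on the minimizers. Getting the hidden constant to stabilise at $\max(\|\Sigma\|_{op}^2,\|w_0\|_\Sigma^2)$, without acquiring a spurious $1/\lambda_{min}(\Sigma)$ factor, requires keeping the intermediate norms in their natural Mahalanobis or dual form and only passing to $\|\cdot\|_2$ through the operator-norm bound $\|\widehat\Sigma\|_{op}^{1/2}$ that sits in front of the error terms.
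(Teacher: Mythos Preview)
Your proposal is correct and takes essentially the same route as the paper's proof. The only cosmetic difference is that the paper works at the level of $F(w,a,C,r):=\|w-a\|_C+r\|w\|_\star=\sqrt{\widetilde E}$ rather than $\widetilde E$ itself: it uses the identical three-term decomposition via optimality of $\widehat w$ for $F(\cdot,\widehat w_0,\widehat\Sigma,r)$, the same triangle-inequality split into target- and covariance-perturbation pieces, and the same a priori bounds on $w_{opt}$ and $\widehat w$ obtained by comparing to $w=0$; your factoring $h^2-\widehat h^2=(h-\widehat h)(h+\widehat h)$ is just the explicit squaring of the paper's final bound on $F$.
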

Thus, our proposed two-stage estimator is robust-optimal up to within a multiplicative factor $2/(1+c_0)\approx 1.1124$, and an additive term $O(e_1^2+e_2^2)$ which is due to estimation error of $w_0$ and $\Sigma$ from training data.
Note that if we assume that the covariance matrix $\Sigma$ of the features is known, or equivalently that we have access to an unlimited supply of unlabelled data from $N(0,\Sigma)$, then we effectively have $e_2 = 0$. In this case, the statistical error of our proposed estimator $\widehat w$ is dominated by the error $e_1^2$ associated with estimating the generative model $w_0$. Under sparsity assumptions, this error term is further bounded by $\dfrac{\sigma_\epsilon^2s\log(ed/s)}{n}$ (thanks to the following well-known result), which tends to zero if the input dimension $d$ does not grow much faster than the sample size $n$.
.
\begin{restatable}[\citep{BickelSimultaneous2009,BellecSlopeMeets2018}]{prop}{}
If $1 \le s \le d/2$, then under some mild technical conditions, it holds w.h.p that
\begin{eqnarray}
\inf_{\widehat w_0}\sup_{w_0 \in B_0^d(s)} \underbrace{\|\widehat w_0-w_0\|_2}_{e_1} \asymp \sigma_\epsilon\sqrt{\frac{s\log(ed/s)}{n}}.
\end{eqnarray}
where $ B_0^d(s)$ is defined in Eq.~(\ref{eq-sparse-ball}). Moreover, the above minimax bound is attained by the square-root Lasso estimator with tuning parameter $\lambda$ given by
$\lambda \asymp \sqrt{\dfrac{\log(2d/s)}{n}}$.
\label{prop:lassobellec}
\end{restatable}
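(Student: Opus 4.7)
The plan is to establish matching upper and lower bounds of order $\sigma_\epsilon\sqrt{s\log(ed/s)/n}$ on the minimax $\ell_2$-risk, treating the two directions separately.

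For the lower bound, I would use a standard Fano-type argument. First, I would construct a $\delta$-packing $\{w^{(1)},\dots,w^{(N)}\} \subset B_0^d(s)$ in $\ell_2$-norm with log-cardinality $\log N \gtrsim s\log(ed/s)$; this can be obtained via the Gilbert--Varshamov bound applied to signed $s$-supports, scaled by an appropriate radius $\delta$. Next, under the Gaussian generative model~\eqref{eq:generative}, one bounds the pairwise KL-divergences by prediction error,
\begin{align*}
\mathrm{KL}(P_{w^{(i)}} \,\|\, P_{w^{(j)}}) \le \frac{n}{2\sigma_\epsilon^2}\,\|w^{(i)}-w^{(j)}\|_\Sigma^2.
\end{align*}
Applying Fano's inequality and optimizing $\delta$ against this information bound would then yield $\delta \asymp \sigma_\epsilon\sqrt{s\log(ed/s)/n}$, matching the claimed rate.

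For the upper bound, I would analyze the square-root Lasso estimator $\widehat w_0 \in \arg\min_{w\in\mathbb R^d}\{n^{-1/2}\|Y-Xw\|_2 + \lambda\|w\|_1\}$ with $\lambda \asymp \sqrt{\log(2d/s)/n}$. The standard steps are: (i) write the KKT optimality condition; (ii) derive a basic inequality of the form $\|X(\widehat w_0 - w_0)\|_2^2/n \lesssim \lambda\,\sigma_\epsilon\,\|\widehat w_0-w_0\|_1$ on a high-probability event; (iii) show that the error vector $\widehat w_0-w_0$ lies (up to a constant) in the cone $\{v : \|v_{S^c}\|_1 \le 3\|v_S\|_1\}$, where $S$ denotes the support of $w_0$; and (iv) invoke a restricted-eigenvalue or compatibility condition on $X^\top X/n$ to convert the prediction-error bound into the desired $\ell_2$-error bound $\|\widehat w_0-w_0\|_2 \lesssim \sigma_\epsilon\sqrt{s\log(ed/s)/n}$.

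The main obstacles are two-fold. First, one must verify the restricted eigenvalue condition for Gaussian designs with row-covariance $\Sigma$ with high probability when $n \gtrsim s\log(ed/s)$; this is a matrix-concentration argument, and it yields the ``mild technical conditions'' referenced in the statement (essentially, that $\Sigma$ has bounded sparse condition number, so that the population restricted eigenvalue transfers to the empirical Gram matrix). Second, the square-root Lasso is \emph{pivotal}: its tuning parameter $\lambda$ does not depend on the unknown noise level $\sigma_\epsilon$. Making this work requires showing that the pivotal quantity $\|X^\top \epsilon\|_\infty/\|\epsilon\|_2$ concentrates around a deterministic value of order $\sqrt{\log d/n}$, so that $\lambda$ dominates the rescaled noise with high probability. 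I expect this noise-adaptivity argument to be the most delicate part of the upper-bound proof.
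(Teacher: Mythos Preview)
Your proposal sketches the standard route to this minimax rate (Fano lower bound via a Gilbert--Varshamov packing of $B_0^d(s)$; oracle inequality for the square-root Lasso under a restricted-eigenvalue condition, with the pivotal property handling the unknown $\sigma_\epsilon$), and the outlined steps are correct in spirit. However, there is nothing to compare against: the paper does \emph{not} prove this proposition. It is stated as a known result and attributed directly to \citep{BickelSimultaneous2009,BellecSlopeMeets2018}; no proof or proof sketch appears in either the main text or the supplementary material. The proposition is invoked only to control the additive error term $e_1^2$ in Theorem~\ref{thm:stats}.

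So your write-up is not wrong, but it is orthogonal to the paper's treatment. If the goal is to match the paper, the appropriate ``proof'' is simply a citation. If instead you want a self-contained argument, your plan is the right one; just be aware that the ``mild technical conditions'' in the statement are precisely the ones you identify (restricted eigenvalue / compatibility on the population covariance $\Sigma$, plus $n \gtrsim s\log(ed/s)$ so that the empirical Gram matrix inherits it with high probability), and that the pivotal analysis of the square-root Lasso is carried out in full in \citep{BellecSlopeMeets2018}, which is where the specific tuning $\lambda \asymp \sqrt{\log(2d/s)/n}$ comes from.
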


\begin{restatable}{rmk}{}
Note that, in the special case of Euclidean-norm attacks, our Theorem \ref{thm:stats} (which works for all attack norms) recovers the adversarial risk-consistency result established in \citep{Xing2021} as a special case.
\end{restatable}

\subsection{Algorithm 1: Primal-Dual Algorithm}
We now device a simple primal-dual algorithm for computing the second stage of our proposed estimator (Algorithm \ref{alg:twostage}). The algorithm works for any covariance matrix $\Sigma$ and norm-attack with tractable proximal operator.

Let $\widehat w_0$ and $\widehat \Sigma$ be the estimates computed in the \textbf{Stage 1} of Algorithm \ref{alg:twostage}.
Define and $K := \widehat \Sigma^{1/2}$,
$a := K\widehat {w_0}$,
$f(z) := \|z-a\|_2$, $g(w) := r\|w\|_\star$. Recall now that 
$\sqrt{\widetilde{E}^{\Vert\cdot\Vert}(w,w_0,r)} =  \|w-w_0\|_{\Sigma} + r\|w\|_\star$ and so for any model $w$. Then by "dualizing", we get that 
\begin{eqnarray}
\begin{split}
&\inf_{w \in \mathbb R^d} \|w-w_0\|_{\Sigma} + r\|w\|_\star= \inf_{w \in \mathbb R^d} f(Kw) + g(w)\\
&= \inf_{w \in \mathbb R^d} g(w) + \sup_{z \in \mathbb R^d} z^\top K w - f^\star(z)\\
&= \inf_{w \in \mathbb R^d}\sup_{z \in \mathbb R^d} \underbrace{z^\top K w - f^\star(z) + g(w)}_{H(w,z)},
\end{split}
\end{eqnarray}
where $f^\star$ is the Fenchel-Legendre transform of $f$.
Consider the following so-called Chambolle-Pock algorithm \citep{ChambollePock2010} for computing a saddle-point for the function $H$.

\begin{algorithm}
\caption{Primal-Dual algorithm which implements \textbf{Stage 2} of Algorithm \ref{alg:twostage}. Only one iteration is shown here.}
\label{alg:pd}
\textbf{Inputs: } $\widehat w_0, \widehat \Sigma,\eta_1,\eta_2,z^{(0)}=w^{(0)}=\mathbf{0}_d.$
\begin{algorithmic}[1]
\State $z^{(t+1)} \leftarrow  \mathrm{proj}_{B_{\|\cdot\|_2}^d}(z^{(t)} + \eta_2 \widehat\Sigma^{1/2} (u^{(t)}- \widehat w_0))$
\State $w^{(t+1)} \leftarrow \mathrm{prox}_{\eta_1r\|\cdot\|_\star}(w^{(t)}-\eta_1 \widehat\Sigma^{1/2} z^{(t+1)})$,
\State $u^{(t+1)} \leftarrow 2w^{(t+1)}-w^{(t)}$
\end{algorithmic}
\end{algorithm}
Here, the $\eta_k$'s are stepsizes chosen such that $\eta_1\eta_2 \|\widehat\Sigma\|_{op}^{1/2} < 1$.
The nice thing here is that the projection onto the $\ell_2$-ball (line 1) admits a simple analytic formula.
In the case of $\ell_\infty$-norm-attacks, the second line corresponds to the well-known soft-thresholding operator; etc. Refer to Figures \ref{fig:algos}~\&~\ref{fig:comparison} for empirical illustrations of the algorithm. The following convergence result follows directly from \citep{ChambollePock2010}.
\begin{restatable}{prop}{}
\label{prop:pd}
Algorithm \ref{alg:pd} converges to a stationary point $(w^{(\infty)},z^{(\infty)})$ of the $H$ at an ergodic rate $O(1/t)$.
\end{restatable}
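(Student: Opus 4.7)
The plan is to recognize that Algorithm~\ref{alg:pd} is a textbook instance of the Chambolle--Pock primal--dual algorithm applied to the saddle-point problem
\[
\min_{w \in \mathbb R^d}\sup_{z \in \mathbb R^d} H(w,z) = \langle Kw, z\rangle - f^\star(z) + g(w),
\]
with $K = \widehat\Sigma^{1/2}$, $f(z)=\|z-a\|_2$ (where $a = K\widehat w_0$), and $g(w) = r\|w\|_\star$. Once the correspondence with the Chambolle--Pock template is established, the $O(1/t)$ ergodic convergence follows directly from Theorem~1 of \citep{ChambollePock2010}.

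First I would verify the structural hypotheses required by \citep{ChambollePock2010}: both $f$ and $g$ are proper, convex, lower-semicontinuous, and $K$ is a bounded linear operator with $\|K\|_{op} = \|\widehat\Sigma\|_{op}^{1/2}$. Then I would compute $f^\star$ explicitly: since $f(z)= \|z-a\|_2$, shifting by $a$ gives
\[
f^\star(y) = \langle y, a\rangle + \iota_{B_{\|\cdot\|_2}^d}(y),
\]
so the dual proximal step reduces to a projection onto the Euclidean unit ball after a linear shift,
\[
\mathrm{prox}_{\eta_2 f^\star}(y) = \mathrm{proj}_{B_{\|\cdot\|_2}^d}(y - \eta_2 a).
\]
Substituting $a = K\widehat w_0$ and $y = z^{(t)} + \eta_2 K u^{(t)}$ reproduces line~1 of Algorithm~\ref{alg:pd}. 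Similarly, line~2 is by definition $\mathrm{prox}_{\eta_1 g}(w^{(t)} - \eta_1 K^\top z^{(t+1)})$ with $K^\top = \widehat\Sigma^{1/2}$, and line~3 is the standard over-relaxation $u^{(t+1)} = 2w^{(t+1)} - w^{(t)}$ used in \citep{ChambollePock2010}. Hence the three updates coincide exactly with the Chambolle--Pock iterations.

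Next I would invoke the convergence theorem of \citep{ChambollePock2010}: under the step-size condition $\eta_1\eta_2\|K\|_{op}^2 < 1$, the averaged iterates
\[
\bar w^{(T)} := \frac{1}{T}\sum_{t=1}^{T} w^{(t)},\qquad \bar z^{(T)} := \frac{1}{T}\sum_{t=1}^{T} z^{(t)}
\]
satisfy a primal--dual gap bound of order $O(1/T)$ relative to any saddle point $(w^{(\infty)}, z^{(\infty)})$ of $H$; existence of such a saddle point is guaranteed by strong duality since $g$ is coercive in the ball of radius $\|\widehat w_0\|_{\widehat\Sigma}/r$ and $f^\star$ has bounded effective domain (the Euclidean unit ball). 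This yields the claimed $O(1/t)$ ergodic rate.

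The main obstacle, which is mostly bookkeeping, is reconciling the step-size condition stated in Algorithm~\ref{alg:pd} with the Chambolle--Pock requirement $\eta_1\eta_2\|K\|_{op}^2<1$, i.e.\ $\eta_1\eta_2\|\widehat\Sigma\|_{op}<1$; the statement in the algorithm box should be read in this sense. Beyond that, essentially nothing new needs to be proved: identifying the prox operators and plugging into the cited convergence result gives the proposition.
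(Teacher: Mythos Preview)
Your proposal is correct and is exactly the approach the paper takes: the paper simply states that the result ``follows directly from \citep{ChambollePock2010}'' without further argument, and your write-up fills in the routine identification of Algorithm~\ref{alg:pd} with the Chambolle--Pock template (including the prox computations and the step-size reconciliation).
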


\begin{figure}
\vspace{-0.1cm}
\centering
\includegraphics[width=1\linewidth]{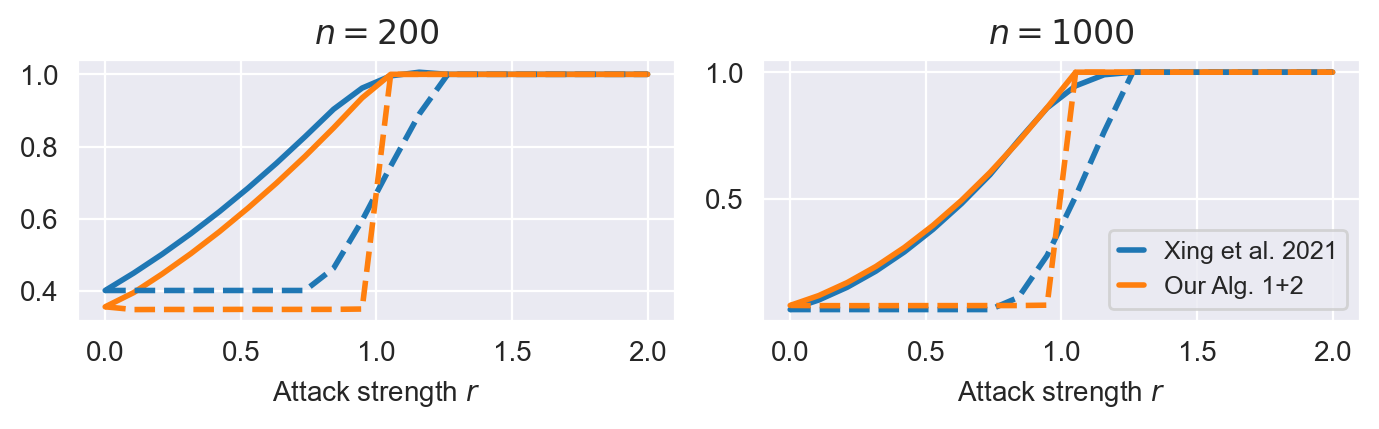}
\caption{We compare our proposed two-stage estimator given in Alg. \ref{alg:twostage} with the one proposed in~\cite{Xing2021}, in the specific setting of Euclidean attacks when varying sample size $n$ and attack strength $r$. Here, \textbf{Stage 2} of our two-stage estimator is computed via Alg. \ref{alg:pd}. Dashed and plain lines respectively represent the standard and adversarial risks.  We see that though more general, our algorithm is able to recover similar performances as \cite{Xing2021} in the special case of Euclidean attacks. 
}
\label{fig:comparison}
\vspace{-0.1cm}
\end{figure}

\subsection{Algorithm 2: Simple Thresholding-Based Algorithm in the Case of $\ell_\infty$-Norm Attacks}
Now, suppose the covariance matrix of the features $\Sigma$ is diagonal, i.e $\Sigma = \diag(\lambda_1,\ldots,\lambda_d)$. In the case of $\ell_\infty$-attacks, we provide a much simpler and faster algorithm for computing the second stage of our proposed estimator. Indeed, for $\ell_{\infty}$-attack we obtain an explicit form of the optimal solution minimizing Eq.~(\ref{eq:Etilde}) as shown in the next Proposition.
We deduce the following result.
\begin{restatable}{prop}{}
Let $c := \max_{1 \le j \le d}|(w_0)_j| \lambda_j$. There exists $t \in [0,c]$ such that $w(t) \in \mathbb R^d$ is  a minimizer of the convex function $w \mapsto \|w-w_0\|_\Sigma + r\|w\|_1$, where
\begin{eqnarray}
w(t)_j = \mbox{ST}((w_0)_j;rt/\lambda_j),\text{ for all }j \in [d].
\label{eq:STsol}
\end{eqnarray}
and $\mbox{ST}(\cdot;s)$ is the soft-thresholding operator at level $s$.
\end{restatable}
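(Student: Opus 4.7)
The plan is to exploit a Fenchel-type variational representation of the Mahalanobis norm $\|\cdot\|_\Sigma$ to decouple the coordinates, reducing the minimization to a one-dimensional convex search along which the minimizer has the claimed coordinate-wise soft-thresholding form. The starting point is the elementary identity $a = \inf_{t > 0}\{a^2/(2t) + t/2\}$, valid for any $a \geq 0$ (with equality iff $t = a$). Applied to $a = \|w-w_0\|_\Sigma$, it lets me rewrite the objective as
\[
\|w - w_0\|_\Sigma + r\|w\|_1 \;=\; \inf_{t > 0}\left\{\frac{\|w-w_0\|_\Sigma^2}{2t} + \frac{t}{2} + r\|w\|_1\right\}.
\]

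Next, I would swap $\min_w$ and $\inf_{t>0}$, which is legitimate since the inner expression is jointly convex on $\mathbb R^d \times (0,\infty)$. The crucial structural fact is that $\Sigma=\mathrm{diag}(\lambda_1,\ldots,\lambda_d)$ makes $\|w-w_0\|_\Sigma^2=\sum_j \lambda_j(w_j-w_{0,j})^2$ separable across coordinates, so for every fixed $t$ the inner minimization over $w$ decouples into $d$ independent scalar LASSO problems
\[
\min_{w_j}\,\frac{\lambda_j}{2t}(w_j - w_{0,j})^2 + r|w_j|,
\]
each of which admits the classical closed-form soft-thresholding minimizer $w_j^\star(t) = \mbox{ST}(w_{0,j};\,rt/\lambda_j)$. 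This is exactly the $w(t)$ of the statement, and the original problem collapses to a one-dimensional convex optimisation $\min_{t > 0} G(t)$, where $G$ is the envelope of the augmented objective along the soft-thresholding path.

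To confine the optimal $t$ to the claimed interval, I would invoke the equality condition in the variational identity, which forces $t^\star = \|w^\star - w_0\|_\Sigma$ at the joint optimum. A short case analysis then completes the localisation. If $w^\star = 0$, then $w(t) = 0$ for every $t$ with $rt/\lambda_j \geq |w_{0,j}|$ for all $j$, which amounts to $t \geq c/r$; any such $t$ is a valid representative of the minimizer. Otherwise some coordinate $j_0$ of $w^\star$ is non-zero, which by the soft-thresholding formula forces $rt^\star/\lambda_{j_0} < |w_{0,j_0}|$ and hence $t^\star < \lambda_{j_0}|w_{0,j_0}|/r \leq c/r$. Combined with the trivial upper bound $t^\star = \|w^\star-w_0\|_\Sigma \leq \|w_0\|_\Sigma$, this pins $t^\star$ down in an interval of the stated form.

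The main obstacle I anticipate is the boundary behaviour of the variational identity, which is sharp only for $t > 0$, so one has to handle separately the degenerate situation where the unconstrained minimum of $F$ happens to be $w_0$ itself (corresponding formally to $t^\star = 0$). A cleaner route that sidesteps this issue is to reason directly from the subdifferential of $F$: at any minimizer $w^\star \neq w_0$, stationarity reads $0 \in \Sigma(w^\star - w_0)/\|w^\star - w_0\|_\Sigma + r\,\partial\|w^\star\|_1$, which coordinate-wise is precisely the soft-thresholding formula with threshold $r\|w^\star - w_0\|_\Sigma/\lambda_j$, so setting $t^\star := \|w^\star - w_0\|_\Sigma$ recovers the claim, while the edge case $w^\star = w_0$ is handled by taking $t^\star = 0$ directly.
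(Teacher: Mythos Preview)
Your subdifferential ``cleaner route'' is exactly the paper's argument: the paper proves a general lemma (its Lemma~\ref{lm:subdifferential}) stating that any minimizer of $x\mapsto\|x-a\|_\Sigma+g(x)$ equals $\mathrm{prox}_{\Sigma,tg}(a)$ for $t=\|x-a\|_\Sigma$, via precisely the first-order inclusion $0\in\Sigma(x-a)/t+\partial g(x)$ you write down; specializing $g=r\|\cdot\|_1$ with diagonal $\Sigma$ then yields the coordinate-wise soft-thresholding formula. Your variational route through $a=\inf_{t>0}\{a^2/(2t)+t/2\}$ is a genuine alternative not in the paper; it is pleasant because it makes the separability and the appearance of the LASSO sub-problems completely transparent, whereas the paper's prox argument is shorter but more abstract.

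There is, however, a real gap in your localization of $t^\star$. Your own argument shows that if $w^\star\neq 0$ then some coordinate $j_0$ survives thresholding, hence $rt^\star/\lambda_{j_0}<|(w_0)_{j_0}|$, which gives $t^\star<c/r$, not $t^\star\le c$; and in the $w^\star=0$ case you need $t\ge c/r$ to kill every coordinate. So what you actually prove is $t^\star\in[0,c/r]$ (equivalently $rt^\star\in[0,c]$), not $t^\star\in[0,c]$. The sentence ``this pins $t^\star$ down in an interval of the stated form'' papers over this mismatch: neither $c/r$ nor your alternative bound $\|w_0\|_\Sigma$ coincides with $c$ in general. The paper itself does not supply a proof of the range $[0,c]$ either (Lemma~\ref{lm:subdifferential} only gives $t\in[0,\infty]$), so this looks like a typo in the statement---the natural bound from the argument is $c/r$. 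You should say so explicitly rather than hedge.
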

An inspection of \eqref{eq:STsol} reveals a kind of feature-selection. Indeed, if the $j$th component of the ground-truth model $w_0$ is small in the sense that $|(w_0)_j| \le rt/\lambda_j$, then $w(t)_j = 0$, i.e the $j$th component of $w_0$ should be suppressed. On the other hand, if $|(w_0)_j| \ge rt/\lambda_j$, then $(w_0)_j$ should be replaced by the translated version $(w_0)_j - \mathrm{sign}((w_0)_j)rt/\lambda_j$. That is weak components of $w_0$ are suppressed, while strong components are boosted.

 The result is Algorithm~\ref{alg:st}, a simple method for computing \textbf{Stage 2} of our proposed two-stage estimator (Algorithm \ref{alg:twostage}). Refer to Figure \ref{fig:algos} for an empirical illustration.

\begin{algorithm}
\caption{Non-uniform soft-thresholding which implements \textbf{Stage 2} of Algorithm \ref{alg:twostage} for diagonal covariance matrix under $\ell_\infty$-attacks.}
\textbf{Inputs: } $\widehat w_0, \widehat \Sigma$
\label{alg:st}
\begin{algorithmic}[1]
    \State \textbf{Compute} $\widehat c=\max_{1 \le j \le d}|(\widehat w_0)_j|\widehat\lambda_j$.
    \State \textbf{For each} $t$ in a finite grid of values between $0$ and $\widehat c$, use held-out to retain the value of $t$ for which the adversarial risk of $\widehat w(t)$ is minimal, where each component of $\widehat w(t)$ is given by
    $\widehat w(t)_j = \mbox{ST}((\widehat w_0)_j;rt/\widehat \lambda_j)$.
    \State \textbf{Return} $\widehat w(t)$.
    \end{algorithmic}
    \end{algorithm}

\section{Concluding Remarks}
In our work, we have undertaken a study of the robustness of gradient-descent (GD), under test-time attacks in the context of linear regression. Our work provides a clear characterization of when GD and a modified version (GD+) --with feature-dependent learning rates-- can succeed to achieve the optimal adversarial risk (up to an absolute constant). This characterization highlights the effect over the covariance structure of the features, and also, of the norm used to measure the strength of the attack.

Finally, our paper proposes a statistically consistent and simple two-stage estimator which achieves optimal adversarial risk in the population regime, up to within a constant factor. Our proposed estimator adapts to attacks w.r.t general norms, and to the covariance structure of the features.

\clearpage
\newpage
\bibliography{litterature}

\begin{thebibliography}{}

\bibitem[Ali et~al., 2020]{continuousGDAnul2020}
Ali, A., Dobriban, E., and Tibshirani, R. (2020).
\newblock The implicit regularization of stochastic gradient flow for least
  squares.
\newblock In III, H.~D. and Singh, A., editors, {\em Proceedings of the 37th
  International Conference on Machine Learning}, volume 119 of {\em Proceedings
  of Machine Learning Research}, pages 233--244. PMLR.

\bibitem[Ali et~al., 2019]{continuousGDAnul2019}
Ali, A., Kolter, J.~Z., and Tibshirani, R.~J. (2019).
\newblock A continuous-time view of early stopping for least squares
  regression.
\newblock In {\em Proceedings of the Twenty-Second International Conference on
  Artificial Intelligence and Statistics}, volume~89 of {\em Proceedings of
  Machine Learning Research}, pages 1370--1378. PMLR.

\bibitem[Bellec et~al., 2018]{BellecSlopeMeets2018}
Bellec, P.~C., Lecu{\'e}, G., and Tsybakov, A.~B. (2018).
\newblock {Slope meets Lasso: Improved oracle bounds and optimality}.
\newblock {\em The Annals of Statistics}, 46(6B):3603 -- 3642.

\bibitem[Bhagoji et~al., 2019]{bhagoji2019}
Bhagoji, A.~N., Cullina, D., and Mittal, P. (2019).
\newblock Lower bounds on adversarial robustness from optimal transport.

\bibitem[Bhattacharjee et~al., 2021]{bhattacharjee2020sample}
Bhattacharjee, R., Jha, S., and Chaudhuri, K. (2021).
\newblock Sample complexity of robust linear classification on separated data.
\newblock In {\em Proceedings of the 38th International Conference on Machine
  Learning (ICML)}. PMLR.

\bibitem[Bickel et~al., 2009]{BickelSimultaneous2009}
Bickel, P.~J., Ritov, Y., and Tsybakov, A.~B. (2009).
\newblock {Simultaneous analysis of Lasso and Dantzig selector}.
\newblock {\em The Annals of Statistics}, 37(4):1705 -- 1732.

\bibitem[Bubeck et~al., 2018]{bubeck2018}
Bubeck, S., Price, E., and Razenshteyn, I.~P. (2018).
\newblock Adversarial examples from computational constraints.
\newblock {\em CoRR}, abs/1805.10204.

\bibitem[Chambolle and Pock, 2010]{ChambollePock2010}
Chambolle, A. and Pock, T. (2010).
\newblock A first-order primal-dual algorithm for convex problems with
  applications to imaging.
\newblock {\em Journal of Mathematical Imaging and Vision}, 40.

\bibitem[Dobriban et~al., 2020]{dobriban2020provable}
Dobriban, E., Hassani, H., Hong, D., and Robey, A. (2020).
\newblock Provable tradeoffs in adversarially robust classification.
\newblock {\em arXiv preprint arXiv:2006.05161}.

\bibitem[Dohmatob, 2019]{dohmatob19}
Dohmatob, E. (2019).
\newblock Generalized no free lunch theorem for adversarial robustness.
\newblock In {\em Proceedings of the 36th International Conference on Machine
  Learning (ICML)}, volume~97 of {\em Proceedings of Machine Learning
  Research}. {PMLR}.

\bibitem[Dohmatob, 2021]{dohmatob2021fundamental}
Dohmatob, E. (2021).
\newblock Fundamental tradeoffs between memorization and robustness in random
  features and neural tangent regimes.
\newblock {\em arXiv preprint arXiv:2106.02630}.

\bibitem[Gilmer et~al., 2018]{gilmerspheres18}
Gilmer, J., Metz, L., Faghri, F., Schoenholz, S.~S., Raghu, M., Wattenberg, M.,
  and Goodfellow, I.~J. (2018).
\newblock Adversarial spheres.
\newblock {\em CoRR}, abs/1801.02774.

\bibitem[Hassani and Javanmard, 2022]{hassani2022curse}
Hassani, H. and Javanmard, A. (2022).
\newblock The curse of overparametrization in adversarial training: Precise
  analysis of robust generalization for random features regression.
\newblock {\em arXiv preprint arXiv:2201.05149}.

\bibitem[Javanmard and Mehrabi, 2021]{javanmard2021adversarial}
Javanmard, A. and Mehrabi, M. (2021).
\newblock Adversarial robustness for latent models: Revisiting the
  robust-standard accuracies tradeoff.
\newblock {\em arXiv preprint arXiv:2110.11950}.

\bibitem[Javanmard and Soltanolkotabi, 2022]{javanmard2022precise}
Javanmard, A. and Soltanolkotabi, M. (2022).
\newblock Precise statistical analysis of classification accuracies for
  adversarial training.
\newblock {\em The Annals of Statistics}, 50(4):2127--2156.

\bibitem[Javanmard et~al., 2020]{Javanmard2020PreciseTI}
Javanmard, A., Soltanolkotabi, M., and Hassani, H. (2020).
\newblock Precise tradeoffs in adversarial training for linear regression.
\newblock In {\em COLT}.

\bibitem[Khim and Loh, 2018]{khim2018adversarial}
Khim, J. and Loh, P.-L. (2018).
\newblock Adversarial risk bounds via function transformation.
\newblock {\em arXiv preprint arXiv:1810.09519}.

\bibitem[Li et~al., 2020]{li2020}
Li, Y., Fang, E.~X., Xu, H., and Zhao, T. (2020).
\newblock Implicit bias of gradient descent based adversarial training on
  separable data.
\newblock In {\em ICLR 2020}.

\bibitem[Madry et~al., 2018]{madry2017towards}
Madry, A., Makelov, A., Schmidt, L., Tsipras, D., and Vladu, A. (2018).
\newblock Towards deep learning models resistant to adversarial attacks.

\bibitem[Mahloujifar et~al., 2018]{saeed2018}
Mahloujifar, S., Diochnos, D.~I., and Mahmoody, M. (2018).
\newblock The curse of concentration in robust learning: Evasion and poisoning
  attacks from concentration of measure.
\newblock {\em CoRR}, abs/1809.03063.

\bibitem[Min et~al., 2021a]{min2021curious}
Min, Y., Chen, L., and Karbasi, A. (2021a).
\newblock The curious case of adversarially robust models: More data can help,
  double descend, or hurt generalization.
\newblock In {\em Uncertainty in Artificial Intelligence (UAI)}.

\bibitem[Min et~al., 2021b]{Karbasi2021}
Min, Y., Chen, L., and Karbasi, A. (2021b).
\newblock The curious case of adversarially robust models: More data can help,
  double descend, or hurt generalization.
\newblock In {\em Proceedings of the Thirty-Seventh Conference on Uncertainty
  in Artificial Intelligence}. PMLR.

\bibitem[Rockafellar, 1970]{rtmaxmon}
Rockafellar, R.~T. (1970).
\newblock On the maximal monotonicity of subdifferential mappings.
\newblock {\em PACIFIC JOURNAL OF MATHEMATICS}, 33(1).

\bibitem[Schmidt et~al., 2018]{schmidt2018}
Schmidt, L., Santurkar, S., Tsipras, D., Talwar, K., and Madry, A. (2018).
\newblock Adversarially robust generalization requires more data.
\newblock {\em CoRR}, abs/1804.11285.

\bibitem[Shafahi et~al., 2018]{goldstein}
Shafahi, A., Huang, W.~R., Studer, C., Feizi, S., and Goldstein, T. (2018).
\newblock Are adversarial examples inevitable?
\newblock {\em CoRR}, abs/1809.02104.

\bibitem[Szegedy et~al., 2013]{szegedy2013intriguing}
Szegedy, C., Zaremba, W., Sutskever, I., Bruna, J., Erhan, D., Goodfellow, I.,
  and Fergus, R. (2013).
\newblock Intriguing properties of neural networks.
\newblock {\em arXiv preprint arXiv:1312.6199}.

\bibitem[Tsipras et~al., 2019]{tsipras18}
Tsipras, D., Santurkar, S., Engstrom, L., Turner, A., and Madry, A. (2019).
\newblock Robustness may be at odds with accuracy.
\newblock In {\em International Conference on Learning Representations (ICLR)},
  volume abs/1805.12152.

\bibitem[Vardi et~al., 2022]{Vardi2022GradientMP}
Vardi, G., Yehudai, G., and Shamir, O. (2022).
\newblock Gradient methods provably converge to non-robust networks.
\newblock {\em NeurIPS}.

\bibitem[Wei et~al., 2020]{Gauss2Kolmogorov}
Wei, Y., Fang, B., and Wainwright, M.~J. (2020).
\newblock {From Gauss to Kolmogorov: Localized measures of complexity for
  ellipses}.
\newblock {\em Electronic Journal of Statistics}, 14(2):2988 -- 3031.

\bibitem[Xing et~al., 2021]{Xing2021}
Xing, Y., Zhang, R., and Cheng, G. (2021).
\newblock Adversarially robust estimate and risk analysis in linear regression.
\newblock In {\em AISTATS}, volume 130. PMLR.

\bibitem[Yang et~al., 2020]{closerlook2020}
Yang, Y.-Y., Rashtchian, C., Zhang, H., Salakhutdinov, R.~R., and Chaudhuri, K.
  (2020).
\newblock A closer look at accuracy vs. robustness.
\newblock In {\em Advances in Neural Information Processing Systems},
  volume~33, pages 8588--8601. Curran Associates, Inc.

\bibitem[Yin et~al., 2019]{yin2019rademacher}
Yin, D., Kannan, R., and Bartlett, P. (2019).
\newblock Rademacher complexity for adversarially robust generalization.
\newblock In {\em International conference on machine learning}.

\end{thebibliography}
\clearpage
\newpage
\onecolumn
\appendix

\addtocontents{toc}{\protect\setcounter{tocdepth}{2}}
\begin{center}
    \noindent\rule{17cm}{3pt} \vspace{0.4cm}
    
    \huge \textbf{Supplementary Material} \\ ~\\[-0.5cm]
    \Large \textbf{Robust Linear Regression: Gradient-descent, Early-stopping, and Beyond} 
    
    \noindent\rule{17cm}{1.2pt}
\end{center}

\tableofcontents
\section{Further Details of Experiments}
In this section we detail the experiments presented in the main paper, to empirically verify our theoretical results. In the Figures~\ref{fig:l2-case},~\ref{fig:counter-example-1},~\ref{fig:sufficient} and~\ref{fig:l1-attack-gene}, we consider only $d=2$-dimensional cases as we do not have access to the optimal value of the adversarial risk and propose to compute it using a grid-search. 

\subsection{Case of Optimality of GD under Euclidean Attacks} In Figure~\ref{fig:l2-case}, we plot the optimal value of the adversarial risk under Euclidean attacks obtained along the path of the GD scheme and compare it with the true optimal adversarial risk when varying the attack strength $r$. We observe that we recover our statement proved in Proposition~\ref{prop:GDisOpt}, that is that GD is optimally robust (up to an absolute constant)
and in fact we observe that we have exact equality between these two risks. We also compare their standard risks and observe that they are also equal. It is important to notice that for $r$ sufficiently small, the optimally robust predictor has a null standard risk and therefore is also optimal, which is the result proved in Proposition~\ref{prop:generative}.

\subsection{Suboptimality of GD under General Mahalanobis Attacks} In Figure~\ref{fig:counter-example-1}, we consider the example proposed in Proposition~\ref{thm:GDisBadforMan} and we plot the adversarial risks of the vanilla GD presented in Eq.~(\ref{eq:gdynamics}), as well as the one obtained by our modified GD+ presented in Eq.~(\ref{eq:GD+}) and compare them to the optimal adversarial risk when varying $m$. We show that vanilla GD can be arbitrarily sub-optimal as $m$ goes to infinity while our modified GD+, when selecting the adapted transformation $B(m)$, is able to reach the optimal adversarial risk.

\subsection{Sufficient Condition for Optimality } In Figure~\ref{fig:sufficient}, we consider a simple example where the Condition~\ref{cond:dense} is satisfied and illustrate that in that case the uniform shrinkage strategy, obtained by applying a GD scheme on the normalized data (i.e. by considering GD+ with $M=\Sigma^{-1/2}$), is optimally robust (as shown in Theorem~\ref{thm:GDrocksdeterministic}).
For that purpose we consider a dense generative model $w_0$ under $\ell_{\infty}$-attacks and a covariance matrix $\Sigma$ with a small condition number (in fact we consider $\Sigma=I_d$ for simplicity) and we show that the adversarial risk of the uniform shrinkage strategy matches the optimal one.

\subsection{Case of $\ell_p$-Norm Attacks} 
In Figure~\ref{fig:l1-attack-gene}, we illustrate the result obtained in Theorem~\ref{thm-lp-optimal}. For that purpose we consider a general case where the covariance matrix is diagonal and and sampled according to an Exponential distribution and the generative model is also chosen at random according to a Gaussian distribution. We consider the case of $\ell_{\infty}$-attacks as it is the most used setting in practice. In order to select $M$ in GD+, we propose to solve the following system in $D$ (with $D$ diagonal and positive definite)
\begin{align*}
    (I_d - \exp(-tD))w_0 = w_{opt},
\end{align*}
where $w_{opt}$ is obtained by solving the adversarial problem using a grid-search and $w_0$ is the generative model. Then we consider our GD+ scheme using $M=(D\Sigma^{-1})^{1/2}$. Indeed, in the proof of Theorem~\ref{thm-lp-optimal}, we propose a constructive approach in order to show the optimality of GD+ which simply requires to find a diagonal matrix $D$ solving the above system. We compare the adversarial risk of our GD+ with this specific choice of $M$ against the true optimal adversarial risk and we observe that the two curves coincide when varying the radius $r$.

\subsection{Experiments for Algorithm \ref{alg:twostage}, the Two-Stage Estimator Proposed in Section \ref{sec:algorithmic}}
In Figure \ref{fig:algos}, we plot the adversarial risk computed by Algorithm \ref{alg:twostage}, in the case of $\ell_\infty$-norm attacks. the experimental setting here is: input-dimension $d=200$; covariance matrix $\Sigma$ of the features is heterogeneous diagonal, with diagonal entries drawn iid from an exponential distribution with mean $\lambda=1$; the generative $w_0$ is $s$-sparse vector with $s=10$, normalized so that $\|w_0\|_\Sigma^2 = 1$.
As explained in the figure legends, the acronym "1+3" means it is computed via the Algorithm \ref{alg:st} (which works for diagonal covariance matrix and $\ell_\infty$-norm attack), while "1+2" means \textbf{Stage 2} of our Algorithm \ref{alg:twostage} is computed via the primal-dual algorithm (Algorithm \ref{alg:pd}, which works for every feature covariance matrix $\Sigma$). 

    Notice how the adversarial risk of the estimator improves with number of samples $n$, as expected (Theorem \ref{thm:stats}). Also, unlike "1+3", for small values of the attack strength $r$ the adversarial risk achieved by "1+2"  is dominated by optimization error incurred in \textbf{Stage 2} (Algorithm \ref{alg:pd}). Thanks to Proposition \ref{prop:pd}, this can be alleviated by running more iterations of Algorithm \ref{alg:pd}.

In Figure \ref{fig:comparison}, we also compare our proposed algorithm with the Two-Stage Estimator Proposed in \cite{Xing2021}. Here,  the attack norm is Euclidean, and the covariance matrix of the features $\Sigma$ is as in the above paragraph.

\section{Preliminaries}
\subsection{Additional Notations}
\label{subsect:additional_notations}
The asymptotic notation $F(d) = O(G(d))$ (also written $F(d) \lesssim G(d)$) means there exists a constant $c$ such that $F(d) \le c\cdot G(d)$ for sufficiently large $d$, while $F(d) = \Omega(G(d))$ means $G(d) = O(F(d))$, and $F(d) = \Theta(G(d))$ or $F(d) \asymp G(d)$ means $F(d) \lesssim G(d) \lesssim F(d)$. Finally, $F(d) = o(G(d))$ means $F(d)/G(d) \to 0$ as $d \to \infty$. Thus, $F(d)=O(1)$ means that $F(d)$ is bounded for sufficiently large $d$; $F(d)=\Omega(d)$ means that $F(d)$ is bounded away from zero for sufficiently large $d$; $F(d)=o(1)$ means that $F(d) \to 0$. The acronym w.h.p is used to indicate that a statement is true except on an event of probability $o(1)$.

Also recall all the notations introduced in the beginning of Section \ref{sec:preliminaries}.

\subsection{Proof of Lemma~\ref{lm:fench}: Analytic Formula for Adversarial Risk}
Recall the adversarial risk functional $E$ defined in \eqref{eq:Ewr}.

\fench*
For the proof, we will need the following auxiliary lemma.

\begin{restatable}{lm}{}
For any $x,w \in \mathbb R^d$, $r \ge 0$ and $y \in \mathbb R$, the following identity holds
\begin{eqnarray}
    \sup_{\|\delta\| \le r}((x+\delta)^\top w - y)^2 = (|x^\top w - y| + r\|w\|_\star)^2.
\end{eqnarray}
\label{lm:wellknown}
\end{restatable}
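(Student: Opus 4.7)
The plan is to reduce the supremum of a square to the square of a supremum, and then evaluate the resulting linear optimization using the definition of the dual norm.

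First I would set $u := x^\top w - y$, so that $(x+\delta)^\top w - y = u + \delta^\top w$. Since the map $t \mapsto t^2$ is monotone on $[0,\infty)$, we have
\[
\sup_{\|\delta\|\le r}\bigl(u+\delta^\top w\bigr)^2 \;=\; \Bigl(\sup_{\|\delta\|\le r}\bigl|u+\delta^\top w\bigr|\Bigr)^{2},
\]
so the whole job reduces to showing $\sup_{\|\delta\|\le r}|u+\delta^\top w| = |u|+r\|w\|_\star$.

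For the upper bound, the triangle inequality and the defining property of the dual norm give, for every $\delta$ with $\|\delta\|\le r$,
\[
|u+\delta^\top w| \;\le\; |u| + |\delta^\top w| \;\le\; |u| + \|\delta\|\,\|w\|_\star \;\le\; |u| + r\|w\|_\star .
\]
For the matching lower bound, I would exhibit a maximizer. By definition of the dual norm there exists $\delta_0 \in \mathbb R^d$ with $\|\delta_0\|\le 1$ and $\delta_0^\top w = \|w\|_\star$ (the supremum in the definition of $\|\cdot\|_\star$ is attained on the compact unit ball in finite dimension). Set $\delta^\ast := r\,s\,\delta_0$, where $s := \mathrm{sign}(u)$ (with the convention $\mathrm{sign}(0):=1$). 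Then $\|\delta^\ast\|\le r$ and $(\delta^\ast)^\top w = r s\,\|w\|_\star$, so
\[
u + (\delta^\ast)^\top w \;=\; u + r s\,\|w\|_\star,
\qquad
|u + (\delta^\ast)^\top w| \;=\; |u| + r\|w\|_\star ,
\]
which matches the upper bound. Squaring yields the claimed identity.

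I do not anticipate a real obstacle here; the only mildly delicate point is ensuring the supremum in the dual-norm definition is attained, which follows from continuity of $\delta\mapsto \delta^\top w$ on the compact set $\{\|\delta\|\le 1\}$ (true in any finite dimension, for any norm).
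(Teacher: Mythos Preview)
Your argument is correct and complete. However, it takes a genuinely different route from the paper's own proof. The paper proceeds by Fenchel duality: writing $w(\delta):=\delta^\top w$ and $\eta(x):=x^\top w - y$, it uses the self-conjugacy of $z\mapsto z^2/2$ to express $w(\delta)^2/2=\sup_{z\in\mathbb R}\,zw(\delta)-z^2/2$, interchanges the suprema over $z$ and $\delta$, evaluates the inner supremum over $\|\delta\|\le r$ as $r\|w\|_\star|z-\eta(x)|$, and then optimizes over $z$ (and over the sign $s\in\{\pm1\}$) to arrive at $(|\eta(x)|+r\|w\|_\star)^2$. Your approach is considerably more elementary: you reduce the supremum of a square to the square of a supremum of an absolute value, bound above by the triangle inequality plus the dual-norm inequality, and then exhibit an explicit maximizer $\delta^\ast=r\,\mathrm{sign}(u)\,\delta_0$ using compactness of the unit ball in finite dimension. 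Both proofs are valid; yours is shorter and self-contained, while the paper's dualization viewpoint, though heavier here, foreshadows the Fenchel-conjugate machinery used later in the primal--dual algorithm of Section~\ref{sec:algorithmic}.
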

\begin{proof}
Note that $h(x,\delta)/2 = \eta(x)^2/2 + g(x,\delta)/2$, where $g(x,\delta) := w(\delta)^2-2\eta(x)w(\delta)$, and $\eta(x) := w(x) - y$, and $w(x) := x^\top w$. Now, because the function $z \to z^2/2$ is its own Fenchel-Legendre conjugate, we can "dualize" our problem as follows
\begin{eqnarray*}
\begin{split}
\sup_{\|\delta\| \le r}g(x,\delta)/2 &= \sup_{\|\delta\|_\star \le r}-\eta(x)w(\delta) + \sup_{z \in \mathbb R} zw(\delta) - z^2/2\\
&=\sup_{z \in \mathbb R}-z^2/2 + \sup_{\|\delta\| \le r}(z-\eta(x))w(\delta)\\
&= \sup_{z \in \mathbb R}r\|w\|_\star|z-\eta(x)| - z^2/2\\
&= \sup_{s \in \{\pm 1\}}\sup_{z \in \mathbb R}rs(z-\eta(x))-z^2/2\\
&= \sup_{s \in \{\pm 1\}}-r\|w\|_\star s\eta(x)+\sup_{z \in \mathbb R}r\|w\|_\star sz-z^2/2\\
&= \sup_{s \in \{\pm 1\}}-r\|w\|_\star s\eta(x) + r^2\|w\|_\star^2 /2\\
&= r\|w\|_\star |\eta(x)|+r^2\|w\|^2_\star /2.
\end{split}
\end{eqnarray*}
We deduce that
\begin{align*}
\sup_{\|\delta\| \le r}h(x,\delta)/2 &= \eta(x)^2/2+r\|w\|_\star |\eta(x)|+r^2\|w\|_\star^2 /2 = (|\eta(x)|+r\|w\|_\star)^2/2,
\end{align*}
as claimed.
\end{proof}
\begin{proof}[Proof of Lemma \ref{lm:fench}]
Indeed, thanks to Lemma \ref{lm:wellknown} with $y=x^\top w_0$, one has
$$
E^{\|\cdot\|}(w,w_0,r) := \mathbb E_{x \sim N(0,\Sigma)}\sup_{\|\delta\|\le r} h(x,\delta) = \mathbb E_{x \sim N(0,\Sigma)}[(\eta(x)+r\|w\|_\star)^2],
$$
where the functions $h$ and $\eta$ are as in the proof of Lemma \ref{lm:wellknown}. The result then follows upon noting that
\begin{eqnarray*}
\begin{split}
\mathbb E_{x \sim N(0,\Sigma)}[\eta(x)^2] &= \mathbb E_{x \sim N(0,\Sigma)}[(x^\top w - x^\top w_0)^2] = \|w-w_0\|_\Sigma^2,\\
\mathbb E_{x \sim N(0,\Sigma)}|\eta(x)| &= \mathbb E_{x \sim N(0,\Sigma)}|x^\top w - x^\top w_0| = c_0\|w-w_0\|_\Sigma, 
\end{split}
\end{eqnarray*}
where $c_0 := \sqrt{2/\pi}$.
\end{proof}

\subsection{Proof of Lemma~\ref{lm:sandwich}: Proxy for Adversarial Risk}
Recall the definition of the adversarial risk proxy $\widetilde E$ from \eqref{eq:Etilde}.
\sandwich*

\begin{proof}
Let us first show the following useful Lemma.
\begin{restatable}{lm}{}
For any $a,b,c \ge 0$ with $c \le 1$, it holds that
\begin{eqnarray}
(a+b)^2 \geq  a^2 + b^2 + 2abc \geq \frac{1+c}{2}(a+b)^2,
\end{eqnarray}
with equality if $c=1$.
\label{lm:blend}
\end{restatable}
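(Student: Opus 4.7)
The plan is to verify both inequalities via elementary algebra. For the upper bound $(a+b)^2 \ge a^2+b^2+2abc$, I would simply expand the square to $a^2+b^2+2ab$ and note that the hypothesis $c \le 1$ together with $a,b \ge 0$ gives $2ab \ge 2abc$, which settles it (with equality when $c=1$ or $ab=0$).

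For the lower bound $a^2 + b^2 + 2abc \ge \frac{1+c}{2}(a+b)^2$, the strategy is to form the difference and exhibit it as a nonnegative quantity. Expanding $\frac{1+c}{2}(a+b)^2 = \frac{1+c}{2}(a^2+b^2) + (1+c)ab$, I would compute
\begin{align*}
a^2 + b^2 + 2abc - \tfrac{1+c}{2}(a+b)^2
&= \tfrac{1-c}{2}(a^2+b^2) + (2c - 1 - c)\,ab \\
&= \tfrac{1-c}{2}(a^2+b^2) - (1-c)\,ab \\
&= \tfrac{1-c}{2}(a-b)^2.
\end{align*}
Since $c \le 1$ and $(a-b)^2 \ge 0$, this is nonnegative, and it vanishes precisely when $c=1$ or $a=b$. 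In particular, at $c=1$ both sides collapse to $(a+b)^2$, giving the equality clause.

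There is no real obstacle here; the only thing to be slightly careful about is tracking the equality cases, since the lemma statement only asserts equality when $c=1$ (the case $a=b$ also gives equality in the lower bound but not in the upper bound unless $c=1$ as well, so the ``if $c=1$'' claim is consistent). I would therefore present the two inequalities in sequence, keeping the algebraic identity $\frac{1-c}{2}(a-b)^2$ front and center as the key observation for the lower bound.
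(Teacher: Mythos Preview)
Your proof is correct. The upper bound is handled identically to the paper (expand and use $2ab \ge 2abc$), but your treatment of the lower bound differs from the paper's. You form the difference and factor it exactly as $\tfrac{1-c}{2}(a-b)^2 \ge 0$, which is an explicit algebraic identity. The paper instead reduces to the one-variable ratio $\dfrac{1+t^2+2ct}{(1+t)^2}$ with $t=b/a$ and argues that this is minimized at $t=1$ (using $0\le c\le 1$), giving the value $(1+c)/2$. Your route is more elementary and self-contained: it avoids the implicit calculus/monotonicity step and the WLOG $a\ne 0$ case split, and it makes the equality cases ($c=1$ or $a=b$) immediately visible. The paper's ratio argument, on the other hand, makes it clearer \emph{why} the sharp constant is $(1+c)/2$, namely as the minimum of a natural one-parameter family. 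Either is fine here.
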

\begin{proof}
Let $h(a,b,c):=a^2+b^2+2abc$.
For the LHS, it suffices to observe that $h(a,b,c) \le h(a,b,1) = (a+b)^2$. For the RHS, WLOG assume that $a \ne 0$, and set $t:=b/a \ge 0$. Observe $\dfrac{h(a,b,c)}{(a+b)^2} = \dfrac{1+ t^2 + 2ct}{(1+t)^2}$, which is minimized when $t=1$, because $0 \le c \le 1$ by assumption. We deduce that $h(a,b,c)/(a+b)^2 \geq (1+1+2c)/(1+1)^2 = (1+c)/2$, as claimed.
\end{proof}
Now, recall from Lemma \ref{lm:fench} that $E^{\|\cdot\|}(w,w_0,r)= \|w-w_0\|_{\Sigma}^2+r^2\|w\|_\star^2+2c_0 r\|w-w_0\|_\Sigma\|w\|_\star$ with $c_0:=\sqrt{\pi/2}$, and by denoting $a=\|w-w_0\|_{\Sigma}$, $b=r\|w\|_\star$ and $c=c_0$ the result follows directly from Lemma \ref{lm:blend}.
\end{proof}

\section{Proof of the Results of Section \ref{sec:gd}: Gradient-descent for Linear Regression}
\subsection{Proof of Proposition~\ref{prop:generative}: Sub-optimality of the Generative Model w.r.t Adversarial Risk}
\generative*
\begin{proof}
In~\cite[Proposition 1]{Xing2021}, the authors already show the optimality of $w_0$ for $r\leq \sqrt{2/\pi}\frac{\Vert w_0\Vert_2}{\Vert w_0\Vert_{\Sigma^{-1}}}$. Now remark that for $w=0$, one has
\begin{align}
\label{lower-bound-opt}
    E^{\Vert\cdot\Vert_2}_{opt}(w_0,r)\leq  E^{\Vert\cdot\Vert_2}(0,w_0,r)\leq \Vert w_0\Vert_{\Sigma}^2
\end{align}
Moreover, we have that for all $r\geq 0$, $E^{\Vert\cdot\Vert_2}(w_0,w_0,r)=r^2\Vert w_0\Vert_2^2$. Combining it with Eq.~(\ref{lower-bound-opt}) gives desired result.
\end{proof}

\subsection{Proof of Proposition~\ref{prop:GDisOpt}: Optimality of Gradient-Descent (GD) in Case of Euclidean Attacks}
Recall the GD dynamics $t \mapsto w(t,w_0)$ given in \eqref{eq:gdynamics}.
We restate the result here for convenience.
\GDisOpt*
For the proof, we will need the following lemma.
\begin{lemma}
For any $\beta\geq 0$, let us denote $w_{\beta}(w_0)=\left(\beta I_d+\Sigma\right)^{-1}\Sigma w_ 0$ the ridge solution. Then for all $r\geq 0$ and $t\geq 0$, we have
\begin{align*}
    E^{\Vert\cdot\Vert_2}(w(t,w_0),w_0,r)\leq E^{\Vert\cdot\Vert_2}(w_{1/t}(w_0),w_0, 1.2985 r).
\end{align*}
\label{gd-ridge}
\end{lemma}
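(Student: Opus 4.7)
The plan is to compare the gradient-descent filter and the ridge-regression filter applied to $w_0$ in the eigenbasis of $\Sigma$, and then feed the resulting norm comparisons into the closed-form expression for the adversarial risk provided by Lemma~\ref{lm:fench}. Write $\Sigma = UDU^\top$ with $D = \mathrm{diag}(\lambda_1,\dots,\lambda_d)$ and set $\widetilde w_0 := U^\top w_0$. Simplifying the ridge formula as $w_{1/t}(w_0) = (I_d + t\Sigma)^{-1}(t\Sigma) w_0$, we see that, in the eigenbasis, $w(t,w_0)-w_0$ acts coordinatewise as multiplication by $-e^{-t\lambda_i}$, while $w_{1/t}(w_0)-w_0$ acts as multiplication by $-1/(1+t\lambda_i)$; likewise $w(t,w_0)$ acts as $1-e^{-t\lambda_i}$ and $w_{1/t}(w_0)$ as $t\lambda_i/(1+t\lambda_i)$. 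The whole argument therefore reduces to two scalar inequalities on $[0,\infty)$.

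The two scalar inequalities I will establish are: (a) $e^{-u}(1+u)\le 1$ for all $u\ge 0$, which follows because the derivative of the left-hand side equals $-u\,e^{-u}\le 0$ and its value at $u=0$ is $1$; and (b) $f(u):=(1-e^{-u})(1+u)/u \le C$ for all $u>0$, where $C$ is the unique maximum of $f$. A short computation gives $f'(u) = (e^{-u}(1+u+u^2)-1)/u^2$, so $f'(u)=0$ is equivalent to the transcendental equation $e^u = 1+u+u^2$, which has a unique positive root $u^\star \approx 1.793$, and direct numerical evaluation yields $C = f(u^\star) \le 1.2985$. Applying (a) and (b) componentwise in the spectral expansion then gives
\begin{align*}
\|w(t,w_0)-w_0\|_\Sigma^2 &= \sum_i \lambda_i\,e^{-2t\lambda_i}\,\widetilde w_{0,i}^{\,2} \le \sum_i \frac{\lambda_i}{(1+t\lambda_i)^2}\,\widetilde w_{0,i}^{\,2} = \|w_{1/t}(w_0)-w_0\|_\Sigma^2,\\
\|w(t,w_0)\|_2^2 &= \sum_i (1-e^{-t\lambda_i})^2\,\widetilde w_{0,i}^{\,2} \le C^2 \sum_i \frac{(t\lambda_i)^2}{(1+t\lambda_i)^2}\,\widetilde w_{0,i}^{\,2} = C^2\,\|w_{1/t}(w_0)\|_2^2.
\end{align*}

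Finally, I invoke Lemma~\ref{lm:fench} with $\|\cdot\|=\|\cdot\|_\star=\|\cdot\|_2$ to expand $E^{\|\cdot\|_2}(w(t,w_0),w_0,r)$ as $\|w(t,w_0)-w_0\|_\Sigma^2 + r^2 \|w(t,w_0)\|_2^2 + 2c_0 r\,\|w(t,w_0)-w_0\|_\Sigma\,\|w(t,w_0)\|_2$. Substituting the two bounds from the previous step (and using $C \ge 1$ to group a factor of $C$ with the $r$ appearing in the cross term) gives
\begin{align*}
E^{\|\cdot\|_2}(w(t,w_0),w_0,r) \le \|w_{1/t}(w_0)-w_0\|_\Sigma^2 + (Cr)^2\|w_{1/t}(w_0)\|_2^2 + 2c_0(Cr)\,\|w_{1/t}(w_0)-w_0\|_\Sigma\,\|w_{1/t}(w_0)\|_2,
\end{align*}
which by a second application of Lemma~\ref{lm:fench} is exactly $E^{\|\cdot\|_2}(w_{1/t}(w_0),w_0,Cr)$, and $C\le 1.2985$ finishes the proof. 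The only genuine obstacle is the tight numerical certification of the constant $C$ in part (b); everything else is a routine spectral calculation.
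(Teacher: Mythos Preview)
Your proof is correct and follows essentially the same approach as the paper: diagonalize $\Sigma$, compare the GD filter $1-e^{-t\lambda}$ to the ridge filter $t\lambda/(1+t\lambda)$ via the two scalar inequalities $e^{-u}\le 1/(1+u)$ and $1-e^{-u}\le 1.2985\,u/(1+u)$, and then assemble the bound term-by-term in the formula of Lemma~\ref{lm:fench}. The only cosmetic differences are that the paper cites \cite[Lemma~7]{continuousGDAnul2019} for the constant $1.2985$ rather than deriving it, and that your parenthetical ``using $C\ge 1$'' is actually unnecessary---the single factor of $C$ in the cross term comes directly from the bound on $\|w(t,w_0)\|_2$, with no need to inflate anything.
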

\begin{proof}
In the Euclidean case we obtain a simple expression of $E^{\Vert\cdot\Vert_2}(\cdot,w_0,r)$. Let us denote $\{v_1,\dots,v_d\}$ an orthonormal basis where $\Sigma$ can be diagonalized, $\lambda_1\geq \dots\geq \lambda_d$ the eigenvalues of $\Sigma$ and set $c_j:= w_0^\top v_j$ for all $j\in [d]$. The adversarial risk associated to $w(t)$ can be written as
\begin{align*}
    E^{\Vert\cdot \Vert_2}(w(t,w_0),w_0,r)&= E_1(t,w_0) + r^2 E_2(t,w_0) + 2 r \sqrt{2/\pi} \sqrt{E_1(t,w_0)E_2(t,w_0)},\text{ where}\\
    E_1(t,w_0)&:= \sum_{j=1}^d (1 - \alpha \lambda_j)^{2t} \lambda_j  c_j^2,\,\text{ and }
    E_2(t,w_0):= \sum_{j=1}^d (1 - (1-\alpha\lambda_j)^{t})^2c_j^2\;.
\end{align*}
By using the elementary fact that $1-x\leq \exp(-x)\leq 1/(1+x)$ for all $x \ge 0$, we obtain first that 
\begin{align*}
     E_1(t,w_0)&=\sum_{j=1}^d \exp(-\lambda_j 2t) \lambda_j  c_j^2\leq \sum_{j=1}^d \frac{\lambda_j}{(1+t \lambda_j)^2} c_j^2= \Vert w_{1/t}(w_0) - w_0\Vert_{\Sigma}^2=E_1(w_{1/t }(w_0))\;.
\end{align*}
In addition, using the elementary fact\footnote{For example, see ~\cite[Lemma 7]{continuousGDAnul2019}.} that $1-\exp(-x)\leq  1.2985 x/(1+x)$ for all $x \ge 0$, it follows that
\begin{align*}
     E_2(t,w_0)&= \sum_{j=1}^d(1 - \exp(-\lambda_j t))^2 c_j^2 \leq 1.6862 \sum_{j=1}^d\left(\frac{\lambda_j t}{1+\lambda_j t}\right)^2 c_j^2 =1.2985^2 E_2(w_{1/ t}(w_0))
\end{align*}
and the result follows.
\end{proof}
Finally, we will need the following elementary lemma.
\begin{lemma}
\label{lem-trivial-rescale}
for any $\gamma\geq 1$, $r\geq 0$, we have
     $E^{\Vert\cdot \Vert}(w,w_0,\gamma r)\leq \gamma^2 E^{\Vert\cdot \Vert}(w,w_0,r)$
\end{lemma}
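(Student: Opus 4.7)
The plan is to reduce the claim to an elementary algebraic inequality by invoking the closed-form expression for the adversarial risk established in Lemma~\ref{lm:fench}. Set $A := \|w-w_0\|_\Sigma$ and $B := \|w\|_\star$; then that lemma gives
\begin{align*}
E^{\Vert\cdot\Vert}(w,w_0,r) &= A^2 + r^2 B^2 + 2 c_0\, r A B,\\
E^{\Vert\cdot\Vert}(w,w_0,\gamma r) &= A^2 + \gamma^2 r^2 B^2 + 2 c_0\, \gamma r A B,
\end{align*}
so the claim reduces to showing that $\gamma^2 E^{\Vert\cdot\Vert}(w,w_0,r) - E^{\Vert\cdot\Vert}(w,w_0,\gamma r) \ge 0$ whenever $\gamma \ge 1$.

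Next, I would simply expand the difference and collect terms. The direct computation gives
\begin{align*}
\gamma^2 E^{\Vert\cdot\Vert}(w,w_0,r) - E^{\Vert\cdot\Vert}(w,w_0,\gamma r) = (\gamma^2 - 1)A^2 + 2 c_0\, r A B\,\gamma(\gamma - 1),
\end{align*}
where I have used that the pure $r^2 B^2$ terms cancel. Because $c_0 = \sqrt{2/\pi} > 0$, $A,B,r \ge 0$ and $\gamma \ge 1$, both summands on the right-hand side are non-negative, establishing the desired inequality.

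There is no real obstacle here: the work is a two-line algebraic verification once Lemma~\ref{lm:fench} is in hand. The only thing worth emphasizing is the role of the hypothesis $\gamma \ge 1$, which ensures both $\gamma^2 - 1 \ge 0$ and $\gamma(\gamma - 1) \ge 0$; the inequality would fail in general for $\gamma < 1$ due to the constant term $A^2$, which does not scale with $r$. This scaling lemma will serve as the bridge in the proof of Proposition~\ref{prop:GDisOpt}, allowing one to convert the bound from Lemma~\ref{gd-ridge} (which compares GD at time $t$ to a ridge estimator at attack strength $1.2985\, r$) into a bound at the original attack strength $r$, at the cost of a multiplicative factor $1.2985^2 \approx 1.6862 = \beta$.
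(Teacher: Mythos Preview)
Your proof is correct and follows essentially the same approach as the paper: both invoke the closed-form expression from Lemma~\ref{lm:fench} and then verify the inequality by elementary algebra, with the paper comparing the three terms individually while you compute the difference and factor it. The content is identical.
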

\begin{proof}
Indeed we have
\begin{align*}
    \|w-w_0\|_{\Sigma}^2 &\leq \gamma^2  \|w-w_0\|_{\Sigma}^2~~\text{and}~~2c_0 \gamma r\|w-w_0\|_\Sigma\|w\|_\star\leq  2c_0 \gamma^2 r\|w-w_0\|_\Sigma\|w\|_\star
\end{align*}
from which follows the result.
\end{proof}
We are now in place to prove Proposition \ref{prop:GDisOpt}.
\begin{proof}[Proof of Proposition \ref{prop:GDisOpt}]
First note that in~\cite[Proposition 1]{Xing2021}, the authors show that as soon as $r\leq \sqrt{2/\pi}\frac{\Vert w_0\Vert_2}{\Vert w_0\Vert_{\Sigma^{-1}}}$, the robust predictor minimizing Eq.~(\ref{eq:Ewrdef}) is $w_0$ which is attained by the GD dynamic when $t$ goes to infinity. They also show that when $r\geq \sqrt{\pi/2}\frac{\Vert w_0\Vert_{\Sigma^2}}{\Vert w_0\Vert_{\Sigma}}$, the optimal solution is $0$ which is reached by the vanilla GD scheme for $t=0$. Let us now show the result for $\sqrt{2/\pi}\frac{\Vert w_0\Vert_2}{\Vert w_0\Vert_{\Sigma^{-1}}}<r<\sqrt{\pi/2}\frac{\Vert w_0\Vert_{\Sigma^2}}{\Vert w_0\Vert_{\Sigma}}$. Indeed, in~\cite[Proposition 1]{Xing2021}, they show that for any $r$ in this range, there exists $\beta^*> 0$, such that 
\begin{align*}
    E^{\Vert\cdot\Vert_2}_{opt}(w_0,r)=E^{\Vert\cdot\Vert_2}(w_{\beta^*}(w_0),w_0,r)
\end{align*}
Then by considering $t_{opt}:=1/\beta^*$, we obtain that
\begin{align*}
    E^{\Vert\cdot\Vert_2}(w(t_{opt},w_0),w_0,r)\leq  E^{\Vert\cdot\Vert_2}(w_{1/t_{opt}}(w_0),w_0,1.2985r)
    \leq 1.2985^2 E^{\Vert\cdot\Vert_2}_{opt}(w_0,r)
\end{align*}
where the first inequality follows from Lemma~\ref{gd-ridge} and the second from Lemma~\ref{lem-trivial-rescale}, which gives the desired result. 
 \end{proof}
 
 \subsection{Uniform Shrinkage of $w_0$ is Sub-optimal in General}
In the case where the isotropic case where $\Sigma=I_d$ and the attacker's norm is Euclidean, we can show that GD is optimally robust and so for any radius $r$. In that case, the GD scheme is simply a uniform shrinkage of the generative model $w_0$ as we have that
\begin{align*}
    w(t,w_0)=(1-\exp(-t))w_0.
\end{align*}
Let us now recall our statement for convenience.
\isoexact*
\begin{proof}
First recall, from Proposition~\ref{prop:GDisOpt} that for $r\leq \sqrt{2/\pi}$ or $r\geq \sqrt{\pi/2}$, we have already shown the result. Now observe that in the isotropic case, the ridge solution is in fact also a uniform shrinkage of $w_0$, as we have $w_{\beta}(w_0)=\left(\beta I_d+\Sigma\right)^{-1}\Sigma w_ 0 = \frac{1}{\beta + 1} w_0$, and therefore the optimality of the GD follows directly from~\cite[Proposition 1]{Xing2021}.
\end{proof}

The next result shows that in the anisotropic case where $\Sigma \ne I_d$, uniform shrinkage can achieve arbitrarily bad adversarial risk, while GD continues to be optimal (up to within an absolute multiplicative constant), thanks to Proposition \ref{prop:GDisOpt}.
\begin{restatable}{prop}{}
For sufficiently large values of the input-dimension $d$, it is possible to construct a  feature covariance matrix $\Sigma \in \mathcal S^{++}_d(\mathbb R)$ and   generative model $w_0 \in \mathbb R^d$ such that $\trace(\Sigma),\|w_0\|_{\Sigma} = \Theta(1)$ and
\begin{align}
E_{shrink}^{\|\cdot\|_2}(w_0,1/\sqrt d)=\Omega(1)\label{eq:crash1},\\
E_{opt}^{\|\cdot\|_2}(w_0,1/\sqrt d) = o(1)\label{eq:crash2}.
\end{align}

\end{restatable}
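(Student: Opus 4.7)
The approach is to exploit the anisotropy of $\Sigma$: I will pick a $w_0$ whose $\ell_2$-mass is spread equally across all $d$ coordinates while $\Sigma$ is diagonal with one large entry and $d-1$ extremely small ones. Along the chord $\{\gamma w_0 : \gamma \in [0,1]\}$, the $\Sigma$-residual $\|w - w_0\|_\Sigma$ and the Euclidean norm $\|w\|_2$ are essentially incompressible simultaneously, whereas off the chord one is free to zero out all coordinates except the first at negligible cost in $\Sigma$-norm.

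Concretely, I would set $\Sigma := \mathrm{diag}(1, 1/d^2, \ldots, 1/d^2)$ and $w_0 := (1, 1, \ldots, 1)^\top \in \mathbb R^d$, with $r := 1/\sqrt d$. A direct computation gives $\mathrm{tr}(\Sigma) = 1 + (d-1)/d^2 = \Theta(1)$, $\|w_0\|_\Sigma^2 = 1 + (d-1)/d^2 = \Theta(1)$, and $\|w_0\|_2 = \sqrt d$, so the normalization requirements of the statement are met.

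For the lower bound \eqref{eq:crash1}, I would invoke the proxy of Lemma~\ref{lm:sandwich}. For every $\gamma \in [0,1]$ one has $\widetilde E^{\|\cdot\|_2}(\gamma w_0, w_0, r) = \bigl((1-\gamma)\|w_0\|_\Sigma + r\gamma \|w_0\|_2\bigr)^2 = \bigl((1-\gamma)(1+o(1)) + \gamma\bigr)^2 = 1 + o(1)$; more generally, for $\gamma \in \mathbb R$ the triangle inequality $|1-\gamma| + |\gamma| \ge 1$ gives the same bound. Using $E \ge \widetilde E/\alpha$ then yields $\inf_{\gamma \in [0,1]} E^{\|\cdot\|_2}(\gamma w_0, w_0, r) \ge 1/\alpha - o(1) = \Omega(1)$.

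For \eqref{eq:crash2}, I would exhibit the explicit off-chord candidate $w^\star := e_1 = (1, 0, \ldots, 0)^\top$. Then $\|w^\star - w_0\|_\Sigma^2 = (d-1)/d^2 = O(1/d)$ and $\|w^\star\|_2 = 1$, so Lemma~\ref{lm:fench} gives $E^{\|\cdot\|_2}(w^\star, w_0, r) = O(1/d) + 1/d + 2c_0 \sqrt{d-1}/d^{3/2} = O(1/d) = o(1)$, and therefore $E^{\|\cdot\|_2}_{opt}(w_0, r) = o(1)$. The only genuinely conceptual step is spotting a construction in which $w_0$ carries mass in directions where $\Sigma$ is nearly degenerate, so that dropping $d-1$ coordinates is nearly free in $\Sigma$-norm but essential for making $\|w\|_2$ small; once that asymmetry is fixed, the four inequalities above are one line of algebra each.
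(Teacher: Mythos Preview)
Your argument is correct, and it is considerably more elementary than the paper's own proof. The paper constructs $\Sigma$ with polynomially decaying eigenvalues $\lambda_k = k^{-\beta}$ (for some $\beta>1$) and takes a \emph{random} $w_0 \sim N(0,I_d)$; it then controls $E_{opt}$ via the Gaussian width of the intersection body $B_{\Sigma^{-1}}^d \cap r B_2^d$, quoting an external estimate $\omega(S)\asymp r^{1-1/\beta}$ and a Lipschitz concentration argument to pass from expectation to high probability. Your deterministic two-level spectrum $\Sigma=\mathrm{diag}(1,1/d^2,\ldots,1/d^2)$ together with $w_0=(1,\ldots,1)$ and the single witness $w^\star=e_1$ achieves the same conclusion with nothing beyond Lemma~\ref{lm:fench} and Lemma~\ref{lm:sandwich}. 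What the paper's route buys is a whole family of examples (any $\beta>1$) and a statement that holds for ``generic'' $w_0$; what your route buys is a fully explicit, four-line verification with no probability, no external references, and a sharper rate $E_{opt}=O(1/d)$ rather than $r^{2(1-1/\beta)}$.

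One cosmetic remark: the paper effectively takes $E_{shrink}^{\|\cdot\|_2}(w_0,r)=\|w_0\|_\Sigma^2 \wedge r^2\|w_0\|_2^2$, i.e.\ the proxy minimum over the chord, so your handling of $\gamma\in[0,1]$ already suffices; the extension to all $\gamma\in\mathbb R$ via $|1-\gamma|+|\gamma|\ge 1$ is a harmless (and correct) strengthening.
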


\begin{proof}
Fix $\beta \in (1,\infty)$, and any integer $k \ge 1$, set $\lambda_k := k^{-\beta}$. For a large positive integer $d$, consider the choice of the feature covariance matrix $\Sigma = \diag(\lambda_1,\ldots,\lambda_d)$. Thus, the eigenvalues $\lambda_k$ of $\Sigma$ decay polynomially as a function of their rank $k$, and $\Sigma$ is very badly conditioned (its condition number is $d^\beta$, a nontrivial polynomial in the input-dimension $d$). Consider a random choice of the generative model $w_0 \sim N(0,I_d)$. We will prove the following stronger statement
\begin{mdframed}
\textbf{Claim.} \emph{For  large values of the input-dimension $d$ and the above choice of feature covariance matrix $\Sigma$, the estimates \eqref{eq:crash1} and \eqref{eq:crash2} hold  w.h.p over the choice of $w_0 \sim N(0,I_d)$.}
\end{mdframed}
First observe that, by  elementary concentration it holds w.p $1-e^{-Cd}$ over the choice of $w_0 \sim N(0,I_d)$ that  \begin{align*}
\|w_0\|_2^2 &\asymp \mathbb E\|w_0\|_2^2 = d,\\
\|w_0\|_\Sigma^2 &\asymp \mathbb E\|w_0\|_\Sigma^2 = \trace(\Sigma)  = \sum_{k=1}^dk^{-\beta} =\Theta(1).
\end{align*}
Since $E_{shrink}^{\|\cdot\|_2}(w_0,r)=\|w_0\|_\Sigma^2 \land r^2\|w_0\|_2^2$, we deduce that for any $r \ge 0$, the following holds w.p $1-e^{-Cd}$. 
\begin{eqnarray}
E_{shrink}^{\|\cdot\|_2}(w_0,r)  \asymp 1 \land r^2 d.
\label{eq:shrek}
\end{eqnarray}
On the other hand, thanks to Lemma \ref{lm:sandwich} one has \begin{eqnarray}
\sqrt{E_{opt}^{\|\cdot\|_2}(w_0,r)} \asymp \inf_{w\in \mathbb R^d}\|w-w_0\|_\Sigma + r\|w\|_2 = \sup_{z \in S}z^\top w_0,
\label{eq:shreklet}
\end{eqnarray}
where $S := B_{\|\cdot\|_{\Sigma^{-1}}}^d \cap rB_2^d = \{z \in \mathbb R^d \mid \|z\|_2 \le r,\,\|z\|_{\Sigma^{-1}} \le 1\}$. In particular, we deduce that $\mathbb E \sqrt{E_{opt}^{\|\cdot\|_2}(w_0,r)} \asymp \omega(S)$, i.e the \emph{Gaussian width} of $S$, defined by
\begin{eqnarray}
\omega(S) = \mathbb E\left[\sup_{z \in S}z^\top w_0\right].
\end{eqnarray}
Observe that, the function $G:a \mapsto \sup_{z\in S}z^\top a$ is $r$-Lipschitz w.r.t to the Euclidean norm. Indeed, for any $a,b \in \mathbb R^d$,
$$
|G(a)-G(b)| = \left|\sup_{z \in S}z^\top a - \sup_{z \in S}z^\top b\right| \le \sup_{z \in S}|z^\top (a-b)| \le \sup_{z \in rB_2^d}|z^\top (a-b)| = r\|a-b\|_2.
$$
Therefore, by concentration of Lipschitz functions of Gaussians, we have for any positive $t$,
\begin{eqnarray}
\label{eq:lipschitzconcentration}
\left|\sup_{z \in S}z^\top w_0 - \omega(S)\right| \le t, \text{ w.p }1-2e^{-t^2d /(2r^2)}.
\end{eqnarray}
Subsequently, we will take $t$ sufficiently small.

We now upper-bound $\omega(S)$. Referring to
  ~\cite[Example 4]{Gauss2Kolmogorov}\footnote{Alternatively, it is easy to show that $\omega(S) \le \sum_{k=1}^d \min(\lambda_k,r^2)$, and then observe that }, one observes that
\begin{eqnarray}
\label{eq:wSbound}
\omega(S) \asymp r^{1-1/\beta}.
\end{eqnarray}
  Combining  \eqref{eq:shrek}, \eqref{eq:wSbound}, and \eqref{eq:lipschitzconcentration} applied with $t=\omega(S) \asymp r^{1-1/\beta}$, we obtain that if $r \lesssim 1/\sqrt{d}$, then it holds w.p $1-e^{-Cd}-2e^{-d/(2r^{2/\beta})}$ over the choice of $w_0 \sim N(0,I_d)$ that
\begin{align}
E_{shrink}^{\|\cdot\|_2}(w_0,r) & \asymp 1 \land r^2 d = r^2d,\\
E_{opt}^{\|\cdot\|_2}(w_0,r) &\asymp r^{2(1-1/\beta)}. 
\end{align}
Plugging $r=1/\sqrt d$ in the above then completes the proof of the claim.
\end{proof}
 
\subsection{Proof of Proposition~\ref{thm:GDisBadforMan}: Sub-optimality of GD in Case of Mahalanobis Attacks} 
\GDisBadforMan*
\begin{proof}
Let us first compute a lower bound on the optimal adversarial risk along the path of the vanilla GD. Indeed, first note that from Lemma~\ref{lm:sandwich}, one obtains that
\begin{align*}
    \inf_{t\geq 0}\alpha E^{\Vert\cdot\Vert_{B}}(w(t,w_{0}),w_{0},r) \geq  \inf_{t\geq 0}\widetilde E^{\Vert\cdot\Vert_{B}}(w(t,w_{0}),w_{0},r).
\end{align*}
In the particular case where $\Sigma=I_d$, we have that $w(t,w_{0}):=(1-\exp(-t))w_0$ and therefore we obtain that 
\begin{align*}
    \inf_{t\geq 0}\widetilde E^{\Vert\cdot\Vert_{B}}(w(t,w_{0}),w_{0},r)&=\min(\Vert w_0\Vert_2^2, r^2\Vert w_0\Vert_{B^{-1}}) =\min(1/m + 1, r^2(1+1/m))
\end{align*}
Therefore, one has
\begin{align*}
     \inf_{t\geq 0}E^{\Vert\cdot\Vert_{B}}(w(t,w_{0}),w_{0},r) \geq \alpha \min(1/m + 1, r^2(1+1/m))\underset{m\rightarrow \infty}{\longrightarrow} 1/\alpha \min(1,r^2)
\end{align*}
Let us now compute the value of the adversarial risk proxy $\widetilde E(w,w_0,r)$. To this end, remark that in our specific example, one can write the proxy adversarial risk at $w=(w_1,w_2)$ as $\widetilde{E}^{\Vert\cdot\Vert_B}(w,w_0,r) = (\alpha + r\beta)^2$, where
\begin{align}
\alpha &= \alpha(w) = \sqrt{(w_1-1/\sqrt{m})^2+(w_2-1)^2 },\\
\beta &= \beta(w) = \sqrt{m w_1^2 + (1/m) w_2^2},
\end{align}
The critical points of $\widetilde{E}^{\Vert\cdot\Vert_B}(\cdot,w_0,r)$ are either $0$ ($\beta=0$), $w_0$ ($\alpha=0$), or must satisfy $\nabla_w(\alpha + r \beta) = 0$, i.e
$\dfrac{(w-w_0)}{\alpha} + \dfrac{rB^{-1} w}{\beta} = 0$. Let us consider the latter case and let us define $t :=\beta/\alpha$ satisfying the last equation. Then it is equivalent to write $t(w-w_0) + rB^{-1}w = 0$. We conclude that the minimum of $\widetilde{E}^{\Vert\cdot\Vert_B}(\cdot,w_0,r)$ is of the form
\begin{eqnarray}
w = w(t) := D(t) w_0,
\end{eqnarray}
where $D(t) = t (t I_d + rB^{-1})^{-1}  = I_d-(t I_d+rB^{-1})^{-1}rB^{-1}$ is the $d \times d$ diagonal matrix with diagonal entries $(t / (t + rm),t / (t + r/m))$. In that case we obtain that
\begin{align*}
    \widetilde{E}^{\Vert\cdot\Vert_B}_{opt}(w_0,r) &=\alpha(w(t))^2(1+rt)^2\\
    &=r^2(1+rt)^2\left(\frac{m}{(t+rm)^2}+\frac{1}{m^2(t+r/m)^2)}\right)
\end{align*}
Under the assumption that: for $m$ sufficiently large, such $t=t(m)$ exists and the sequence $(t(m))_{m\geq 0}$ converges towards an absolute and positive constant $c$, we obtain that
\begin{align*}
     \widetilde{E}^{\Vert\cdot\Vert_B}_{opt}(w_0,r)\underset{m\rightarrow \infty}{\longrightarrow}0
\end{align*}
Let us now show that such $t(m)$ exists and that it does converges towards an absolute constant. Indeed, by definition, if $t$ exists, then it must satisfies $\|w(t)\|_B = t\|w(t)-w_0\|_\Sigma$, which gives
\begin{eqnarray}
\frac{1}{(t+rm)^2}+ \frac{m}{(t+r/m)^2} = 
r^2\left[\frac{m}{(t+rm)^2}+\frac{m^2}{(t+r/m)^2} \right]
\end{eqnarray}
Therefore as soon as $r^2/m < 1$ and $r^2m > 1$, we obtain that such $t$ exists and satisfies
\begin{align*}
t = \frac{r/m \sqrt{(r^2m-1)} - rm\sqrt{(1/m)(1-r^2/m)}}{\sqrt{(1/m)(1-r^2/m)} - \sqrt{(r^2m-1)}}\underset{m\rightarrow \infty}{\longrightarrow}1
\end{align*}
and the result follows.
\end{proof}

\section{Proofs of Results of Section \ref{sec:gdplus}: An Adapted Gradient-Descent (GD+)}
\subsection{Proof of Proposition~\ref{prop:GDPlusIsGood}: Optimality of GD+ in Case of Mahalanobis Norm Attacks}
\GDPlusIsGood*

\begin{proof}
According to Proposition~\ref{prop:GDisOpt}, we already know that early-stopped GD scheme on data with covariance $\Sigma$ generated by a model $w_0$ achieves adversarial risk which is optimal for Euclidean / $\ell_2$ attacks, up to within a multiplicative factor of $\beta$. Now remark that by considering our transformation on the data $\tilde{x}:=Mx$, we are defining a GD scheme on data with covariance $M\Sigma M^{\top}$ generated by $(M^{\top})^{-1}w_0$. We deduce that, early-stopped GD+ achieves the optimal value (up to withing a multiplicative factor of $\beta$) in optimization problem
\begin{align*}
    \min_{w\in\mathbb{R}^d} \Vert w - (M^{\top})^{-1}w_0\Vert_{M\Sigma M^{\top}}^2 + r^2\Vert w\Vert_2^2 + 2rc_0\Vert w - (M^{\top})^{-1}w_0\Vert_{M\Sigma M^{\top}}\Vert w\Vert_2\; .
\end{align*}
Now observe that
\begin{align*}
   \Vert w - (M^{\top})^{-1}w_0\Vert_{M\Sigma M^{\top}}^2 =   \Vert M^{\top}w - w_0\Vert_{\Sigma}^2 
\end{align*}
and therefore, by applying again a change of variables $\tilde{w}=M^{\top}w$, we obtain that,  along the path drawn by  $(M^{\top}w^{M}(t,w_0))_{t\geq 0}$, there exists an optimal solution (up to within a multiplicative factor of $\beta$) for 
\begin{align*}
   \min_{w\in\mathbb{R}^d} \Vert w -w_0\Vert_{\Sigma}^2 + r^2\Vert (M^{T})^{-1}w\Vert_2^2 + 2rc_0\Vert w - w_0\Vert_{\Sigma}\Vert  (M^{T})^{-1}w\Vert_2\; .
\end{align*}
Finally, taking finally $M=B^{1/2}$ gives the desired result, since the dual norm of $\Vert\cdot\Vert_B$ is $\Vert\cdot\Vert_{B^{-1}}$.
\end{proof}

\subsection{Proof of Proposition~\ref{prop:gene-upper}: Case of Optimality of GD+ in Case of General Norm Attacks}
\geneupper*

\begin{proof}
Recall the definition of $ \lambda_{min}^{\Vert\cdot\Vert}$ and $ \lambda_{max}^{\Vert\cdot\Vert}$ from the beginning of Section \ref{subsec:general_attacks} of the main paper. Observe that for all $w\in\mathbb{R}^d$
\begin{align}
\label{eq-lam-up-low}
   \lambda_{min}^{\Vert\cdot\Vert}((M^{T})^{-1}) \Vert w\Vert_{*}\leq \Vert (M^{T})^{-1}w\Vert_2 \leq  \lambda_{max}^{\Vert\cdot\Vert}((M^{T})^{-1}) \Vert w\Vert_{*}\; .
\end{align}
In the following we also define $\Vert\cdot\Vert^M =\Vert M\cdot\Vert_2$. Then, one computes
\begin{align*}
    \inf_t  E^{\Vert\cdot\Vert}(M w^M(t,w_0),w_0,r)
    &\leq  \inf_t E^{\Vert\cdot\Vert^{M}}(M w^M(t,w_0),w_0,r/\lambda_{min}^{\Vert\cdot\Vert}((M^{T})^{-1}))(M w^M(t,w_0))\\
    &\leq \beta \inf_{w\in\mathbb{R}^d} E^{\Vert\cdot\Vert^{M}}(w,w_0,r/\lambda_{min}^{\Vert\cdot\Vert}((M^{T})^{-1}))\\
    &\leq \beta \inf_{w\in\mathbb{R}^d}  E^{\Vert\cdot\Vert}(w,w_0,r\kappa^{\Vert\cdot\Vert}((M^{T})^{-1}))\\
    &\leq \beta\kappa^{\Vert\cdot\Vert}(((M^{T})^{-1}))^2 \inf_w  E^{\Vert\cdot\Vert}(w,w_0,r)
\end{align*}
where the first inequality follows from the LHS of Eq.~(\ref{eq-lam-up-low}), the second from Proposition~\ref{prop:GDPlusIsGood}, the third from RHS of Eq.~(\ref{eq-lam-up-low}), and the last one from Lemma~\ref{lem-trivial-rescale}.

\end{proof}
\subsection{Proof of Theorem~\ref{thm:GDrocksdeterministic}: A Case of Optimality of GD for Non-Euclidean / General Norm Attacks}
\GDrocksdeterministic*

\begin{proof}
Let  $g(w_0) \in \mathbb R^d$ be a subgradient of $\Vert\cdot\Vert_\star$ at $w_0$. First note that $\|g(w_0)\| \le 1$ and $g(w_0)^\top w_0 =\|w_0\|_\star$ and let us denote
\begin{eqnarray}
\lambda:=\frac{\|g(w_0)\|}{\|g(w_0)\|_{\Sigma^{-1}}}
\end{eqnarray}
Let us now show the following useful Proposition.
\begin{prop}
\label{prop-lower-bound-adv-gene}
Let $w_0\in\mathbb{R}^d$ and $r>0$. Then we have that
\begin{align*}
    \sqrt{\alpha E^{\|\cdot\|}_{opt}(w_0,r)}\geq \sup_{z\in B_2^d \cap rB^d_{\|\Sigma^{1/2}\cdot\|}}\langle z,\Sigma^{1/2}w_0\rangle
\end{align*}
\end{prop}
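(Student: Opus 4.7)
The plan is to reduce the statement to the convex proxy $\widetilde E$ via Lemma~\ref{lm:sandwich}, and then invoke weak Fenchel duality. By Lemma~\ref{lm:sandwich}, $\widetilde E^{\|\cdot\|}(w,w_0,r) \le \alpha E^{\|\cdot\|}(w,w_0,r)$ for every $w \in \mathbb{R}^d$; taking infima over $w$ on both sides and recalling the definition of $\widetilde E$ in \eqref{eq:Etilde}, this yields
\[
\alpha E^{\|\cdot\|}_{opt}(w_0,r) \;\ge\; \widetilde E^{\|\cdot\|}_{opt}(w_0,r) \;=\; \Bigl(\inf_{w \in \mathbb{R}^d} \|w - w_0\|_\Sigma + r\|w\|_\star\Bigr)^2.
\]
Taking square roots, the claim reduces to showing the purely deterministic inequality
\[
\inf_{w \in \mathbb{R}^d} \bigl(\|w - w_0\|_\Sigma + r\|w\|_\star\bigr) \;\ge\; \sup_{z \,\in\, B_2^d \,\cap\, r B^d_{\|\Sigma^{1/2}\cdot\|}} \langle z,\Sigma^{1/2}w_0\rangle.
\]

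For this remaining inequality I would argue by weak duality directly, which sidesteps any minimax qualification argument and gives the inequality we need without even requiring strong duality. Fix any $w \in \mathbb{R}^d$ and any dual-feasible $z$, i.e.\ one satisfying $\|z\|_2 \le 1$ and $\|\Sigma^{1/2}z\| \le r$. Then Cauchy--Schwarz gives $\|w-w_0\|_\Sigma = \|\Sigma^{1/2}(w-w_0)\|_2 \ge \langle -z,\,\Sigma^{1/2}(w-w_0)\rangle$ (valid because $\|{-}z\|_2 \le 1$), while the definition of the dual norm gives $r\|w\|_\star \ge \langle \Sigma^{1/2}z,\,w\rangle$ (valid because $\|\Sigma^{1/2}z\| \le r$). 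Summing these two estimates, the terms involving $\Sigma^{1/2}w$ cancel and one is left with $\|w-w_0\|_\Sigma + r\|w\|_\star \ge \langle z,\Sigma^{1/2}w_0\rangle$. Taking the infimum over $w$ on the left and the supremum over admissible $z$ on the right (the constraint set $B_2^d \cap r B^d_{\|\Sigma^{1/2}\cdot\|}$ is symmetric under $z \mapsto -z$, so no separate sign flip is required) yields the desired bound.

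The argument is short and essentially mechanical once the reduction to the proxy has been made. The only step requiring any care is the sign bookkeeping in the pairing of $-z$ with $\Sigma^{1/2}(w-w_0)$ and of $\Sigma^{1/2}z$ with $w$; choosing the signs so that the $w$-dependent pieces cancel is exactly what allows one to replace the unconstrained infimum over $w$ by a constraint-free supremum over the dual variable $z$, and this is the only non-trivial observation in the proof.
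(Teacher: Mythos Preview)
Your proof is correct. Both you and the paper first reduce to the proxy via Lemma~\ref{lm:sandwich}, so the only difference is in handling the resulting inequality $\inf_w(\|w-w_0\|_\Sigma + r\|w\|_\star) \ge \sup_{z \in B_2^d \cap rB^d_{\|\Sigma^{1/2}\cdot\|}}\langle z,\Sigma^{1/2}w_0\rangle$.

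The paper proceeds via support-function calculus: it invokes a lemma stating that for convex bodies $A,B$ with intersecting relative interiors one has $\gamma_{A\cap B} = \gamma_A \star \gamma_B$ (infimal convolution). Recognizing $\|w-w_0\|_\Sigma = \gamma_{B^d_{\Sigma^{-1}}}(w-w_0)$ and $r\|w\|_\star = \gamma_{rB^d_{\|\cdot\|}}(w)$, the paper rewrites the infimum as $(\gamma_{B^d_{\Sigma^{-1}}} \star \gamma_{rB^d_{\|\cdot\|}})(w_0) = \gamma_{B^d_{\Sigma^{-1}} \cap rB^d_{\|\cdot\|}}(w_0)$, and then changes variables $z \mapsto \Sigma^{1/2}z$ to land on the stated supremum. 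This actually yields \emph{equality}, not just the one-sided bound in the proposition.

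Your route is a direct weak-duality argument: for each pair $(w,z)$ you bound $\|w-w_0\|_\Sigma \ge \langle -z,\Sigma^{1/2}(w-w_0)\rangle$ and $r\|w\|_\star \ge \langle \Sigma^{1/2}z,w\rangle$, and the sign choice makes the $w$-dependent terms cancel. This is more elementary---it avoids the infimal-convolution lemma and any constraint qualification---at the cost of only delivering the $\ge$ direction. Since the proposition as stated only asserts $\ge$, that is all that is needed here; the paper's equality is a slightly stronger byproduct that is not used downstream (Theorem~\ref{thm:GDrocksdeterministic} only needs the lower bound you prove).
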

\begin{proof}
Let us define the support function of a subset $C$ of $\mathbb R^d$ as the function $\gamma_C:\mathbb R^d \to (-\infty,\infty]$, defined by $\gamma_C(x) := \sup_{w \in \mathbb R^d}x^\top w$ and let us introduce the following useful lemma.
\begin{restatable}{lm}{}
Let $A$ and $B$ be two subsets of topological vector space $V$, such that the intersection of the relative interiors of $A$ and $B$ is nonempty. Then, $\gamma_{A \cap B} = \gamma_A \star \gamma_B$,
where $f \star g:V \to (-\infty,+\infty]$ is infimal convolution of two proper convex lower-semicontinuous functions $f,g:V \to (-\infty,+\infty]$, defined by $(f \star g)(v) := \inf_{v' \in V}f(v-v') + g(v')$.
\label{lm:intsupportfunc}
\end{restatable}
Now using Lemma~\ref{lm:sandwich}, recall that
\begin{align*}
    \sqrt{\alpha E^{\Vert\cdot\vert}_{opt}(w_0,r)}\geq   \sqrt{\widetilde{E}^{\Vert\cdot\vert}_{opt}(w_0,r)} &= \inf_w \|w-w_0\|_\Sigma + r \|w\|_\star \\
    &=(\gamma_{B_{\Sigma^{-1}}^d} \star \gamma_{B^d_{\|\cdot\|}})(w_0) \\
    &= \gamma_{S_r}(w_0)~~\text{where}~~S_r := B_{\Sigma^{-1}}^d(1) \cap B_{\|\cdot\|}^d(r)\\
    &=\gamma_{\widetilde{S_r}}(\Sigma^{1/2} w_0)~~\text{where}~~\widetilde{S_r} := B_2^d(1) \cap B_{\|\Sigma^{1/2}\cdot\|}^d(r)
\end{align*}
from which the result follows.
\end{proof}
Define now $z(w_0) := (r \land \lambda)\Sigma^{-1/2}g(w_0) \in \mathbb R^d$, and observe that
\begin{align*}
\|\Sigma^{1/2}z(w_0)\| &\le r\|g(w_0)\| \le r,\\ \|z(w_0)\|_2 &\le \lambda\|g(w_0)\|_{\Sigma^{-1}} \le  \|g(w_0)\|\le 1.
\end{align*}
Thus, $z(w_0) \in B_2^d \cap rB^d_{\|\Sigma^{1/2}\cdot\|}$. Then, thanks to Proposition~\ref{prop-lower-bound-adv-gene}, it follows that
\begin{eqnarray}
\begin{split}
\sqrt{\alpha E_{opt}^{\Vert\cdot\Vert}(r,w_0)} &\ge \max_{z \in B_2^d \cap rB_{\|\cdot\|^{\Sigma}}}z^\top \Sigma^{1/2}w_0\\
&\ge z(w_0)^\top\Sigma^{1/2} w_0= (r \land \lambda)g(w_0)^\top w_0\\
&= (r\land \lambda)\|w_0\|_\star,
\end{split}
\end{eqnarray}
where $\alpha :=1/(1+\sqrt{2/\pi}) \approx 1.1124$.
Let us now define
\begin{align*}
    c:=\frac{\lambda \Vert w_0\Vert_*}{ \Vert w_0\Vert_{\Sigma}}
\end{align*}
then $\alpha E_{opt}^{\Vert\cdot\Vert}(w_0,r) \ge c^2\|w_0\|_{\Sigma}^2 \land r^2\|w_0\|_\star^2 \ge (1 \land c^2) (\|w_0\|_{\Sigma}^2 \land r^2\|w_0\|_\star^2)$ which matches
$\inf_{\gamma\in[0,1]}\widetilde{E}^{\Vert\cdot\Vert}(\gamma w_0,w_0,r)$ attained by the optimal uniform rescaling of $w_0$ and the result follows.
\end{proof}

\subsection{Proof of Proposition~\ref{thm:bangbang}: A Case of Sub-optimality of GD to $\ell_\infty$-Norm Attacks}
\bangbang*

\begin{proof}
Let $a,\lambda>0$, $w_0:=(a+\lambda,\dots,a+\lambda,\lambda,\dots,\lambda)$ and let $\alpha\geq 1$ be the number of coordinate such that $(w_{0})_i=a+\lambda$. Concrete values for $a$, $\lambda$, and $\alpha$ will be prescribed later. For such model we obtain that
\begin{align*}
    \Vert w_0\Vert_1 &= d\lambda + \alpha a~~\text{and}\\
     \Vert w_0\Vert_2 &=\sqrt{\alpha(a+\lambda)^2 + (d-\alpha) \lambda^2}\\
     &= \sqrt{\alpha a^2 + d \lambda^2 + 2\alpha a \lambda}.
\end{align*}
Therefore the shrinkage model obtains an error equals to:
\begin{align*}
    E_{shrink}^{\|\cdot\|_\infty}(w_0,r)=\min(\alpha a^2 + d \lambda^2 + 2\alpha a \lambda, r^2( d\lambda + \alpha a)^2).
\end{align*}
Now let $w:=(\underbrace{a,\dots,a}_{\alpha\text{ times}},0,\dots,0)$, we obtain that
\begin{align*}
    \sqrt{E_{opt}^{\|\cdot\|_\infty}(w_0,r)}\leq \Vert w - w_0\Vert_2 + r\Vert w\Vert_1=\sqrt{d}\lambda + r\alpha a
\end{align*}
Let us now consider the case where \begin{eqnarray*}
r=\lambda=\frac{1}{d^{1/4}\log d},\, 
\alpha = d^{1/2},\, a=1.
\end{eqnarray*}
Then, for large $d$ we obtain that
\begin{align*}
&\Vert w - w_0\Vert_2 + r\Vert w\Vert_1\asymp \frac{d^{1/4}}{\log d}\\
  &r\Vert w_0\Vert_1 \asymp  \frac{d^{1/2}}{(\log d)^2},\,\Vert w_0\Vert_2 \asymp d^{1/4}
\end{align*}
from which follows that
\begin{align*}
    \frac{E_{shrink}^{\|\cdot\|_\infty}(w_0,r)}{E_{opt}^{\|\cdot\|_\infty}(w_0,r)}\geq \frac{\min(d^{1/2},\frac{d}{(\log d)^4})}{\frac{d^{1/2}}{(\log d)^2}}\asymp (\log d)^2\rightarrow +\infty,
\end{align*}
as claimed.
\end{proof}

\subsection{Proof of Theorem~\ref{thm-lp-optimal}: Case of Optimality of GD+ to $\ell_p$-Norm Attacks}
\lpoptimal*

For the proof, we will need the following useful lemma.
\begin{restatable}{lm}{}
Given an extended-value convex function (EVCF) $g:\mathbb R^d \to (-\infty,\infty]$, every minimizer of the EVCF $g_a:\mathbb R^d \to (-\infty,\infty]$ defined by $g_a(x) := \|x-a\|_\Sigma + g(x)$ can be written as $x=\mbox{prox}_{\Sigma,tg}(a)$, for some $t \in [0,\infty]$.
\label{lm:subdifferential} 
\end{restatable}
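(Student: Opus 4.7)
The plan is to use first-order (subdifferential) optimality conditions and directly match the characterization of a minimizer of $g_a$ with the characterization of the proximal point. Recall that with $\mbox{prox}_{\Sigma,tg}(a) := \arg\min_y \tfrac12\|y-a\|_\Sigma^2 + tg(y)$, the proximal point $x_t$ is characterized by the inclusion $\Sigma(a-x_t) \in t\,\partial g(x_t)$. On the other hand, $x^\star$ minimizes $g_a$ iff $0 \in \partial\|\cdot-a\|_\Sigma(x^\star) + \partial g(x^\star)$, where the subdifferential of the $\Sigma$-norm is $\{\Sigma(x^\star-a)/\|x^\star-a\|_\Sigma\}$ when $x^\star\neq a$ and $\{v \in \mathbb R^d : \|v\|_{\Sigma^{-1}} \leq 1\}$ when $x^\star = a$. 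The sum rule holds here without qualification because $\|\cdot-a\|_\Sigma$ is finite and continuous on all of $\mathbb R^d$.

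The proof then splits into two cases. First, if $x^\star = a$, we simply pick $t=0$: the minimizer of $\tfrac12\|y-a\|_\Sigma^2$ is $a$ regardless of $g$, so $\mbox{prox}_{\Sigma,0\cdot g}(a) = a = x^\star$, and there is nothing further to check. Second, if $x^\star \neq a$, the optimality condition supplies some $v \in \partial g(x^\star)$ with $\Sigma(x^\star-a)/\|x^\star-a\|_\Sigma + v = 0$. Set $t := \|x^\star-a\|_\Sigma \in (0,\infty)$. Multiplying through by $t$ gives $\Sigma(a-x^\star) = tv \in t\,\partial g(x^\star)$, which is exactly the optimality inclusion for $x^\star$ to be a minimizer of $\tfrac12\|y-a\|_\Sigma^2 + tg(y)$. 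Hence $x^\star = \mbox{prox}_{\Sigma,tg}(a)$ for this choice of $t$.

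There is no real obstacle here; the only minor point to be careful about is the case $x^\star = a$, where one might worry about having to match the subgradient $v$ of $g$ at $a$ (of which there may be many, all with $\|v\|_{\Sigma^{-1}}\leq 1$). The choice $t=0$ finesses this since the prox problem then ignores $g$ altogether and returns $a$ unconditionally. The scalar parameter $t$ produced in the non-degenerate case has the clean interpretation $t = \|x^\star - a\|_\Sigma$, which is precisely the $\Sigma$-distance the prox step has traveled, matching the geometric intuition that the norm objective and the squared-norm objective yield the same optimization path under a reparametrization of the penalty level.
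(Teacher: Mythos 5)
Your proof is correct and takes essentially the same route as the paper's: both apply first-order subdifferential optimality conditions to $g_a$, set $t = \|x^\star - a\|_\Sigma$ in the non-degenerate case, and identify the resulting inclusion $\Sigma(a - x^\star) \in t\,\partial g(x^\star)$ with the optimality condition of the prox problem (your explicit $t=0$ treatment of the case $x^\star = a$ matches the paper's "if $x \in \{0,a\}$ we are done"). The only cosmetic difference is that the paper concludes via the resolvent $(I + t\Sigma^{-1}\partial g)^{-1}$ and Rockafellar's maximal monotonicity of $\partial g$, whereas you verify the prox inclusion directly and rely on strong convexity of the prox objective for uniqueness.
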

In the above Lemma, we consider the proximal operator
\begin{align}
\label{prox-gene}
    \mbox{prox}_{\Sigma,tg}(a):= \arg\min_{x\in\mathbb{R}^d}(1/2)\Vert x - a\Vert_{\Sigma}^2 + tg(x),
\end{align}
defined w.r.t to the Bregman divergence on $\mathbb R^d$ given by $D(z,z') := (1/2)\Vert z - z'\Vert_{\Sigma}^2$.
\begin{proof}
Let $x \in \mathbb R^d$ be a minimizer of $g_a$. If $x \in \{0,a\}$, then we are done. Otherwise, set $t = t(x):=\|x - a\|_{\Sigma}$. Note that  by first-order optimality conditions, $x$ is a minimizer of $g_a$ iff $0 \in \partial g_a(x)$ iff $0 \in \Sigma(x-a)/t + \partial g(x)$ iff $0 \in (x-a) +  t\Sigma^{-1}\partial g(x)$ iff $x = (I+t\Sigma^{-1} \partial g)^{-1} (a) = \mbox{prox}_{\Sigma,t g}(a)$, by definition of the proximal operator. Indeed recall tht $\partial g$ is a maximal monotone operator (a result of Rockafellar from the 70's \cite{rtmaxmon}), so that $(I+t \Sigma^{-1} \partial g)^{-1}(a)$ is well-defined and single-value.
\end{proof}

Let us now introduce another important Lemma in order to show the result.
\begin{restatable}{lm}{}
For any $q\in[1,2]$, $a\in\mathbb{R}^d$, and $\Sigma$ positive definite diagonal matrix, we have
\begin{align*}
    &|\mathrm{prox}_{\Sigma,\Vert\cdot\Vert_q}(a)|\leq |a|~~\text{and}\\
    &\mathrm{sign}(\mathrm{prox}_{\Sigma,\Vert\cdot\Vert_q}(a))= \mathrm{sign}(a)
\end{align*}
where the $|\cdot|$ and $\mathrm{sign}$ are the coordinate-wise operators.
\label{lm:order-prox}
\end{restatable}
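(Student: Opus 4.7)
The plan is to exploit the separability of the quadratic term (thanks to $\Sigma$ being diagonal) together with the sign/magnitude symmetry of $\|\cdot\|_q$. First I would observe that the objective
\[
\Phi(x) := \tfrac12\|x-a\|_\Sigma^2 + \|x\|_q = \tfrac12\sum_{j=1}^d\lambda_j(x_j-a_j)^2 + \|x\|_q
\]
is strictly convex (since $\Sigma \succ 0$) and coercive, so $x^\star := \mathrm{prox}_{\Sigma,\|\cdot\|_q}(a)$ is uniquely defined. The key structural properties of $\|\cdot\|_q$ that the whole argument relies on are only: (i) it depends on $x$ through $(|x_1|,\ldots,|x_d|)$, and (ii) it is coordinate-wise monotone in the absolute values. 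Both hold for any $q\ge 1$.

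For the sign claim, I would use a symmetrization trick. Given $x^\star$, define $\tilde x$ coordinate-wise by $\tilde x_j := \mathrm{sign}(a_j)\,|x^\star_j|$ (with any convention when $a_j=0$, in which case one may keep $\tilde x_j = x^\star_j$). Property (i) gives $\|\tilde x\|_q = \|x^\star\|_q$. For the quadratic part, a case analysis on the sign of $a_j$ yields $(\tilde x_j - a_j)^2 = (|a_j|-|x^\star_j|)^2 \le (x^\star_j - a_j)^2$, with strict inequality whenever $\mathrm{sign}(x^\star_j)$ disagrees with $\mathrm{sign}(a_j)$ and both are nonzero. Uniqueness of the minimizer then forces $\tilde x = x^\star$, giving the sign statement.

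For the magnitude claim, by the sign step I may assume WLOG (coordinate-by-coordinate) that $a_j \ge 0$ and $x^\star_j \ge 0$. Suppose for contradiction that $x^\star_j > a_j$ for some $j$, and let $\hat x$ be $x^\star$ with its $j$-th coordinate replaced by $a_j$. The quadratic term strictly decreases by $\tfrac12\lambda_j(x^\star_j - a_j)^2 > 0$, while property (ii) gives $\|\hat x\|_q \le \|x^\star\|_q$ since $|\hat x_j|^q = a_j^q \le (x^\star_j)^q = |x^\star_j|^q$ and the other coordinates are unchanged. Hence $\Phi(\hat x) < \Phi(x^\star)$, contradicting optimality.

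The argument is essentially a two-step symmetrization-and-truncation and involves no hard estimates; the only mild bookkeeping obstacle is the $\mathrm{sign}(0)$ convention, which is handled by noting that if $a_j = 0$ then the quadratic and $\|\cdot\|_q$ terms at coordinate $j$ are both strictly decreased by replacing $x^\star_j$ with $0$, forcing $x^\star_j = 0$ and making the sign identity hold vacuously.
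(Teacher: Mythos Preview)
Your proof is correct and takes a genuinely different, more elementary route than the paper's. The paper first rewrites the prox via Moreau decomposition as
\[
\mathrm{prox}_{\Sigma,\|\cdot\|_q}(a)=a-\Sigma^{-1/2}\,\mathrm{proj}_{B^d_{\|\Sigma^{1/2}\cdot\|_p}}(\Sigma^{1/2}a),
\]
with $p$ the harmonic conjugate of $q$, and then analyzes the first-order KKT conditions of that weighted $\ell_p$-ball projection to deduce the sign and magnitude bounds. You instead argue directly on the prox objective with a two-step symmetrization-then-truncation, using only that $\|\cdot\|_q$ depends on the vector through its absolute values and is coordinate-wise monotone in them. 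Your approach is shorter, avoids convex duality entirely, and works verbatim for any $q\ge 1$ (indeed for any coordinate-symmetric, absolutely monotone penalty), whereas the paper's route, though heavier, makes explicit the projection viewpoint that it also reuses in the subsequent generalization via the Kolmogorov characterization of projections. One minor wording issue: when $a_j=0$, replacing $x^\star_j$ by $0$ only weakly decreases the $\|\cdot\|_q$ term in general; but since the quadratic term strictly decreases, your conclusion stands.
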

\begin{proof}
A simple computation reveals that
\begin{align*}
    \mbox{prox}_{\Sigma,g} = \Sigma^{-1/2} \circ \mbox{prox}_{g \circ \Sigma^{-1/2}} \circ \Sigma^{1/2},
\end{align*}
for any EVCF $g$. Using the Moreau decomposition formula, we can further express
\begin{align*}
    \mbox{prox}_{\Sigma,g}(a) &=  \Sigma^{-1/2}[\Sigma^{1/2} a - \mbox{prox}_{(g \circ \Sigma^{-1/2})^\star}(\Sigma^{1/2} a)]\\
&=  a - \Sigma^{-1/2}\mbox{prox}_{g^\star \circ \Sigma^{1/2}}(\Sigma^{1/2} a),
\end{align*}
where $g^\star$ is the convex conjugate of $g$, and we have used the fact that $(g \circ \Sigma^{-1/2})^\star = g^\star \circ \Sigma^{1/2}$.
Now, taking $g=\|\cdot\|_q$, one notes that $g^\star$ is the indicator function of the unit-ball $B_{\|\cdot\|_p}^d$ for the norm $\|\cdot\|_p$, where $p=p(q) \in [2,\infty]$ is the harmonic conjugate of $q$. Therefore, we obtain that 
\begin{align*}
     \mbox{prox}_{\Sigma,g}(a)  = a - \Sigma^{-1/2}\mbox{proj}_{B^d_{\Vert \Sigma^{1/2}\cdot\Vert_p}}(\Sigma^{1/2}a).
\end{align*}
Let us assume without loss of generality that all the $a_i\geq 0$. and let us show now that  
\begin{align*}
     0 \leq \Sigma^{-1/2}\mbox{proj}_{B^d_{\Vert \Sigma^{1/2}\cdot\Vert_p}}(\Sigma^{1/2}a)\leq a
\end{align*}
First note that for all the coordinates of the vector $\Sigma^{-1/2}\mbox{proj}_{B^d_{\Vert \Sigma^{1/2}\cdot\Vert_p}}(\Sigma^{1/2}a)$ are nonnegative. Indeed, if some coordinate of the projection were negative, then by simply replacing it by 0, we could reduce the total cost of projection and therefore contradicts its optimality. Then using the fact that $\Sigma$ is diagonal, we want to show that for any $v\in\mathbb{R}^d$
\begin{align*}
     0 \leq \mbox{proj}_{B^d_{\Vert \Sigma^{1/2}\cdot\Vert_p}}(\Sigma^{1/2}a)\leq \Sigma^{1/2}a
\end{align*}
where the inequality is coordinate-wise.  Now remarks that we have
\begin{align*}
     \mbox{proj}_{B^d_{\Vert \Sigma^{1/2}\cdot\Vert_p}}(\Sigma^{1/2}a) = \Sigma^{-1/2} \mbox{proj}_{B^d_{\Vert\cdot\Vert_p}}^{\Sigma^{-1}}(\Sigma a)
\end{align*}
where 
\begin{align*}
\mbox{proj}_{B^d_{\Vert\cdot\Vert_p}}^{\Sigma^{-1}}(v):= \text{argmin}_{\Vert x\Vert_p\leq 1}\Vert x - v\Vert_{\Sigma^{-1}} 
\end{align*}
Therefore for any $p\geq 2$, the first order condition gives on $x:=\mbox{proj}_{B^d_{\Vert\cdot\Vert_p}}^{\Sigma^{-1}}(\Sigma a)$ that for all coordinate $i\in[d]$,
\begin{align*}
&(1/\lambda_i) (x_i - \lambda_i a_i) +\nu x_i^{p-1}=0,\,x_i\geq 0,
\end{align*}
for some $\nu\geq 0$. The above further gives
\begin{align*}
  (1/\lambda_i) x_i + \nu x_i^{p-1} = a_i,\,\forall i \in [d].
\end{align*}
It follows that  $\tilde{x}:=\mbox{proj}_{B^d_{\Vert \Sigma^{1/2}\cdot\Vert_p}}(\Sigma^{1/2}a)$ must satisfy:
\begin{align*}
  1/(\sqrt{\lambda_i}) \tilde{x}_i + \nu (\sqrt{\lambda_i}\tilde{x}_i)^{p-1} = a_i  
\end{align*}
which is equivalent to
\begin{align*}
 \tilde{x}_i + \sqrt{\lambda}_i\nu (\sqrt{\lambda_i}\tilde{x}_i)^{p-1} = \sqrt{\lambda_i}a_i  
\end{align*}
However if $\tilde{x}_i> \sqrt{\lambda_i}a_i $, then by nonnegativity of $ \sqrt{\lambda}_i\nu (\sqrt{\lambda_i}\tilde{x}_i)^{p-1}$ we obtain a contraction and the result follows.
\end{proof}

The following is a generalization to the previous lemma.
\begin{restatable}{lm}{}
\label{general-lm-proj}
Let $C$ be a closed convex subset of $\mathbb R^d$ which is symmetric about the coordinate axes. Let $a \in \mathbb R^d$ and set $b = \mbox{proj}_C(a)$. Then, it holds coordinate-wise that $\sign(b)=\sign(a)$ and $|b| \le |a|$.
\label{lm:kolmogorov}
\end{restatable}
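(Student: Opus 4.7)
The plan is to reduce the problem to a one-dimensional argument, coordinate by coordinate, by exploiting the assumed symmetry of $C$ across each coordinate hyperplane. Fix an index $i \in [d]$ and consider the \emph{slice}
\[
 S_i := \{t \in \mathbb{R} \mid (b_1,\ldots,b_{i-1},t,b_{i+1},\ldots,b_d) \in C\}.
\]
First I would verify two structural facts about $S_i$: (i) it is convex, as an affine preimage of the convex set $C$; and (ii) it is symmetric about the origin, because if $t \in S_i$ then the reflection across the $i$-th coordinate hyperplane of the corresponding point in $C$ (which lies in $C$ by the hypothesis on $C$) has the same coordinates $b_j$ for $j\ne i$ and $-t$ in position $i$. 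Hence $S_i$ is a (possibly unbounded) closed interval of the form $[-M_i,M_i]$ with $M_i \in [0,\infty]$, and $b_i \in S_i$ so in particular $|b_i| \le M_i$.

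Next, I would use the defining optimality of $b$. Since $b$ minimizes $x \mapsto \|x-a\|_2^2 = \sum_j (x_j - a_j)^2$ over $C$, and since for any $t \in S_i$ the point obtained from $b$ by replacing only its $i$-th coordinate by $t$ still belongs to $C$, we get that $b_i$ must minimize the univariate function $t \mapsto (t-a_i)^2$ over $t \in S_i = [-M_i,M_i]$. The solution of this one-dimensional projection is the standard thresholding formula $b_i = \mathrm{sign}(a_i)\min(|a_i|, M_i)$ (with the convention $\mathrm{sign}(0)=0$, in which case the minimizer is $0 \in S_i$). From this closed form, both conclusions follow at once: $|b_i| = \min(|a_i|,M_i) \le |a_i|$, and $b_i$ shares the sign of $a_i$ in the sense that $a_i b_i \ge 0$ (which is how $\mathrm{sign}(b)=\mathrm{sign}(a)$ should be understood here, consistently with its usage in Lemma \ref{lm:order-prox}).

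The only subtlety I anticipate is justifying the one-dimensional reduction cleanly, i.e., making sure that perturbing only the $i$-th coordinate of $b$ through values in $S_i$ yields genuinely feasible competitors in $C$; this is immediate from the very definition of $S_i$, so there is no real obstacle. Everything else is a direct consequence of the elementary fact that projection of a real number onto a symmetric closed interval $[-M,M]$ is a nonexpansion that preserves sign.
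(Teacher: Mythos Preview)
Your argument is correct. The one-dimensional reduction via the slice $S_i$ is sound: $S_i$ is closed, convex, and symmetric about $0$ (hence an interval $[-M_i,M_i]$), and the defining minimality of $b=\mbox{proj}_C(a)$ immediately forces $b_i$ to be the projection of $a_i$ onto $S_i$, from which both coordinate-wise conclusions follow via the explicit clipping formula.

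The paper takes a closely related but somewhat different route. It invokes the Kolmogorov variational characterization $(a-b)^\top(c-b)\le 0$ for all $c\in C$, and then argues in two separate steps: first it shows $\mathrm{sign}(b_i)=\mathrm{sign}(a_i)$ by a direct sign-flip comparison (replacing $b_i$ by $-b_i$ strictly decreases the distance if $a_ib_i<0$), and second it shows $|b_i|\le|a_i|$ by plugging $c=b\mp t e_i$ into the variational inequality. Your slice argument is more streamlined: it bypasses the Kolmogorov inequality entirely and obtains both conclusions simultaneously from the trivial one-dimensional projection. It also sidesteps the mildly hand-wavy step in the paper's proof (``$C$ must contain $b-te_i$ for sufficiently small $t>0$; otherwise $b_i=a_i$''), which your framing makes transparent since $S_i=[-M_i,M_i]$ with $|b_i|\le M_i$ already contains the needed perturbations. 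The paper's approach, on the other hand, generalizes more readily to settings where one only has the variational inequality (e.g., projections in non-Hilbertian geometries), though that generality is not needed here.
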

For example, if $\Sigma$ is diagonal and the attacker's norm $\|\cdot\|$ is coordinate-wise even (meaning that $\|x\|=\|z\|$ whenever $z$ is obtained from $x$ by flipping the sign of some or no coordinate), then the ball $B_{\|\Sigma^{1/2}\cdot\|}^d$ is symmetric about the coordinate axes, and we recover the previous lemma.
\begin{proof}[Proof of Lemma \ref{lm:kolmogorov}]
Recall the Kolmogorov characterization of the equation $b=\mbox{proj}_C(a)$, namely
\begin{eqnarray}
(a-b)^\top (c - b) \le 0,\,\forall c \in C.
\label{eq:projoptcond}
\end{eqnarray}
Now fix an index $i \in [d]$. If $b_i = a_i$, there is nothing to show. Otherwise, suppose $a_i \ge 0$. Then $b_i \ge 0$, since $C$ is symmetric about the $i$th axis in $\mathbb R^d$ (see details further below). Also, $C$ must contain $b-te_i$, for sufficient small $t > 0$; otherwise $b_i=a_i$. Here, $e_i$ is the $i$th standard basis vector in $\mathbb R^d$. Applying \eqref{eq:projoptcond} with $c=b-te_i$ then gives $(a-b)^\top (-t e_i) \le 0$, i.e $-ta_i + tb_i \le 0$, i.e $0 \le b_i \le a_i$. Similarly, if $a_i \le 0$, consider $c  = b + te_i$ instead, to deduce $a_i \ge b_i \le 0$. We conclude that $|b_i| \le |a_i|$ as claimed.

To conclude the proof we need to provide one omitted detail, namely, that $b_i$ has the same sign as $a_i$. We proceed be \emph{reductio ad absurdum}. So, suppose $a_ib_i < 0$. Let $b'$ be obtained from $b$ by flipping the sign $i$th coordinate. Because $C$ is symmetric around the $i$th axis, it must contain $b'$. One then computes
$$
\|b'-a\|^2_2 - \|b-a\|^2_2 = (b_i-a_i)^2 - (b_i-a_i)^2 = (b_i+a_i)^2 - (b_i-a_i)^2 = 4a_i b_i < 0,
$$
which contradicts the optimality of $b$ as the point of $C$ which is closest to $a$.
\end{proof}

We are now ready to prove Theorem \ref{thm-lp-optimal}.
\begin{proof}[Proof of Theorem \ref{thm-lp-optimal}]
Because $\Sigma:=\diag(\lambda_1,\dots,\lambda_d)$ is diagonal let $M=\diag(\sigma_1,\dots,\sigma_d)$ with $\sigma_i>0$ and let us consider $B=M\Sigma^{-1}$. In that case we obtain that:
\begin{align*}
     B^{1/2}w_c^{B^{1/2}}(t,w_0)&:= B^{1/2}(I_d - \exp(-tB^{1/2}\Sigma B^{1/2})^t))B^{-1/2}w_0\\
     &=(I_d - \exp(-tM))w_0
\end{align*}
Then using Lemma~\ref{lm:order-prox}, the minimizer, $w_{opt}$  of $w\mapsto\widetilde{E}^{\Vert\cdot\Vert}(w,w_0,r)$ satisfies in a coordinate-wise manner:
\begin{align*}
    |w_{opt}|\leq |w_0|,\,\mathrm{sign}(w_{opt}) = \mathrm{sign}(w_0).
\end{align*}
If $(w_{0})_i=0$ then by the above conditions we have necessarily $(w_{opt})_i=0$ and therefore $\sigma_i>0$ can be chosen arbitrarily.

Now, let us consider the case where $(w_{0})_i\neq 0$. Fix $t>0$. If $|(w_{0})_i|>|(w_{opt})_i|>0$, we can just take $\sigma_i$ such that:
\begin{align*}
    1-\exp(-t\sigma_i) = \frac{|(w_{opt})_i|}{|(w_{0})_i|},\text{ i.e } \sigma_i = \sigma_i(t)= -\frac{1}{t}\log\left(1 -  \frac{|(w_{opt})_i|}{|(w_{0})_i|}\right).
\end{align*}
On the other hand, if $|(w_{0})_i|=|(w_{opt})_i|$, we can simply consider a positive sequence of $(\sigma_{i,m})_{m\geq 0}$ such that
$\sigma_{i,m}\underset{m\rightarrow \infty}{\longrightarrow}\infty$. Finally, if $|(w_{0})_i|> |(w_{opt})_i|=0$, then we can consider a positive sequence of $(\sigma_{i,m})_{m\geq 0}$ such that
$\sigma_{i,m}\underset{m\rightarrow \infty}{\longrightarrow}0$. Finally by considering such sequence we obtain that 
\begin{align*}
    \lim_{m\rightarrow \infty}  B_m^{1/2}w^{B_m}(t,w_0) \rightarrow w_{opt}
\end{align*}
Then, by using the fact that
\begin{align*}
    \lim_{m\rightarrow \infty}E^{\Vert\cdot\Vert_p}(B_m^{1/2}w^{B_m}(t,w_0),w_0,r) &\leq \lim_{m\rightarrow +\infty}\widetilde{E}^{\Vert\cdot\Vert_p}(B_m^{1/2}w^{B_m}(t,w_0),w_0,r)\\
    &=\inf_w \widetilde{E}^{\Vert\cdot\Vert_p}(w,w_0),w_0,r)\\
    &\leq \alpha \inf_w E^{\Vert\cdot\Vert_p}(w,w_0),w_0,r),
\end{align*}
and the desired result follows.
\end{proof}

\section{Proof of Theorem~\ref{thm:stats}: Consistency of Proposed Two-Stage Estimator (Algorithm \ref{alg:twostage})}
\stats*

\begin{proof}
For $r \ge 0$, vectors $w,a \in \mathbb R^d$ and a psd matrix $C \in \mathbb R^{d\times d}$, define $F(w,a,C,r) := \|w-a\|_C + r\|w\|_\star$. Note that the adversarial risk proxy $\widetilde E$ defined in \eqref{eq:Etilde} can be written as $\widetilde E^{\Vert\cdot\Vert}(w,w_0,r) = F(w,w_0,\Sigma,r)^2$, for all $w \in \mathbb R^d$. Further more, by Lemma \ref{lm:sandwich}, we know that
\begin{eqnarray}
\label{eq:sandwichF}
E^{\Vert\cdot\Vert}(w,w_0,r) \le F(w,w_0,\Sigma, r)^2 \le \alpha \cdot E^{\Vert\cdot\Vert}(w,w_0,r).
\end{eqnarray}
Thus, the second line of inequalities in the theorem follows from the first.
Thus, we only prove the first line.

So, let $w_{opt}$ be any minimizer of $F(w,w_0,\Sigma,r)$ over $w \in \mathbb R^d$. Observe that
\begin{eqnarray}
\begin{split}
F(\widehat w,w_0,\Sigma,r) - F(w_{opt},w,w_0,\Sigma, r) &= F(\widehat w,w_0,\Sigma,r) - F(\widehat w,\widehat w_0,\widehat \Sigma, r) + F(\widehat w,\widehat w_0, \widehat \Sigma,r) - F(w_{opt},\widehat w_0,\widehat \Sigma, r)\\
&\quad\quad + F(w_{opt}, \widehat w_0, \widehat \Sigma,r) - F(w_{opt}, w_0, \Sigma, r)\\
&\le F(\widehat w, w_0, \Sigma,r) - F(\widehat w,\widehat w_0,\widehat \Sigma, r) + F(w_{opt}, \widehat w_0, \widehat \Sigma,r) - F(w_{opt}, w_0, \Sigma, r),
\end{split}
\end{eqnarray}
where we have used the fact that $F(\widehat w,\widehat w_0, \widehat \Sigma,r) - F(w_{opt},\widehat w_0,\widehat \Sigma, r) \le 0$, since $\widehat w$ minimizes $w \mapsto F(w,\widehat w_0,\Sigma,r)$ by construction. We deduce that
\begin{eqnarray}
F(\widehat w,w_0,\Sigma,r) \le F(w_{opt},w,w_0,\Sigma, r) + e_4(\widehat w) + e_4(w_{opt}),
\label{eq:longtriangle}
\end{eqnarray}
where $e_4(w) := |F(w, w_0, \Sigma,r) - F(w,\widehat w_0,\widehat \Sigma, r)|$.
The rest of the proof is divided into two steps.

\textbf{Step 1: Controlling $e_4(w_{opt})$.}
Now, for any $w \in \mathbb R^d$, repeated application of the triangle-inequality gives
\begin{eqnarray}
\begin{split}
&e_4(w) := |F(w,w_0,\Sigma,r)-F(w,\widehat w_0,\widehat \Sigma,r)| = |\|w-w_0\|_{\Sigma}-\|w-\widehat w_0\|_{\widehat \Sigma}|\\
&= |\|w-w_0\|_{\Sigma}-\|w-\widehat w_0\|_{\Sigma} + \|w-\widehat w_0\|_{\Sigma} - \|w-\widehat w_0\|_{\widehat \Sigma}|\\
&\le \|\Sigma\|_{op}\cdot e_1 + \|w-\widehat w_0\|_2\cdot e_2 \lesssim e_1 + \|w-\widehat w_0\|_2\cdot e_2.
\end{split}
\label{eq:e4}
\end{eqnarray}
Let $\gamma_r(w_0) := \inf_{w \in \mathbb R^d} F(w,w_0,\Sigma,r)$. We deduce that
\begin{eqnarray}
\begin{split}
e_4(w_{opt}) &\le e_1 + \|w_{opt}-\widehat w_0\|_2\cdot e_2 \le e_1 + (\|w_{opt}-w_0\|_2+\|\widehat w_0-w_0\|_2)\cdot e_2\\
&\le e_1 + (\gamma_r(w_0)+e_1)\cdot e_2 \lesssim e_1 + \gamma_r(w_0)\cdot e_2 \lesssim e_1 + \|w_0\|_{\Sigma}\cdot e_2,
\end{split}
\end{eqnarray}
where the last step is thanks to the fact that $\gamma_r(w_0) \le \|w_0\|_\Sigma$.

\textbf{Step 2: Controlling $e_4(\widehat w)$.}
If we take $w=\widehat w$ to be a minimizer of the function $F(w,\widehat w_0,\widehat \Sigma,r)$ over $\mathbb R^d$, then
\begin{eqnarray*}
\begin{split}
\lambda_{\min}(\widehat \Sigma)\| w-\widehat w_0\|_2^2 &\le \|w-\widehat w_0\|_{\widehat \Sigma}^2 = \|w-\widehat w_0\|_{\Sigma}^2 + (w-\widehat w_0)^\top (\widehat \Sigma-\Sigma)(w-\widehat w_0) \\
&\le \|w-\widehat w_0\|_{\Sigma}^2 + \|w-\widehat w_0\|_{\Sigma}^2 \cdot e_2\\
&\le \gamma_r(\widehat w_0)^2 + \|w-\widehat w_0\|_{\Sigma}^2 \cdot e_2\\
&\le \|\widehat w_0\|_{\Sigma}^2 + \|w-\widehat w_0\|_{\Sigma}^2\cdot  e_2.
\end{split}
\end{eqnarray*}
We deduce that
\begin{eqnarray}
\begin{split}
(1-e_2)\|w-\widehat w_0\|_2^2 &\le \|\widehat w_0\|_{\Sigma}^2 \le (\|w_0\|_{\Sigma}+e_1)^2,
\end{split}
\end{eqnarray}
and so $\|w-\widehat w_0\|_2 \lesssim \|w_0\|_{\Sigma}+e_1$. Plugging this into \eqref{eq:e4} gives
\begin{eqnarray}
e_4(\widehat w) \lesssim e_1 + \|w_0\|_{\Sigma} \cdot e_2.
\end{eqnarray}
Putting things together establishes that
\begin{eqnarray}
F(\widehat w, w_0,\Sigma,r) \le F(w_{opt}, w_0,\Sigma,r) + O(e_1) + O(\|w_0\|_{\Sigma} \cdot e_2),
\label{eq:almostthere}
\end{eqnarray}
from which the result follows.
\end{proof}

\end{document}